\def\eqref#1{equation~\ref{#1}}
\def\1{\bm{1}}
\DeclareMathAlphabet{\mathsfit}{\encodingdefault}{\sfdefault}{m}{sl}
\SetMathAlphabet{\mathsfit}{bold}{\encodingdefault}{\sfdefault}{bx}{n}
\DeclareMathOperator*{\argmax}{arg\,max}
\def\BibTeX{{\rm B\kern-.05em{\sc i\kern-.025em b}\kern-.08em
    T\kern-.1667em\lower.7ex\hbox{E}\kern-.125emX}}
\newcommand*{\bz}{\mathbf{z}}
\newcommand*{\bh}{\mathbf{h}}
\newcommand*{\bx}{\mathbf{x}}
\newcommand*{\bt}{\boldsymbol{\theta}}
\newcommand{\pr}{\mathsf{Pr}}
\newcommand{\kl}[2]{\textnormal{KL}({#1}  \| {#2})}
\newtheorem{thm}{Theorem}
\newcommand{\myvect}[1]{\accentset{\rightharpoonup}{#1}}
\title{A Differential Geometric View and Explainability of GNN on Evolving Graphs}
\author{Yazheng Liu$^\dagger$, Xi Zhang$^\dagger$, Sihong Xie$^\ast$\\
%\thanks{
%   Use footnote for providing further information
%about author (webpage, alternative address)---\emph{not} for acknowledging
%funding agencies.  Funding acknowledgements go at the end of the paper.} \\
$^\dagger$Key Laboratory of Trustworthy Distributed Computing and Service (MoE), BUPT, Beijing, China\\
$^\ast$Department of Computer Science and Engineering, Lehigh University, Bethlehem, PA, USA\\
\texttt{\{liuyz,zhangx\}@bupt.edu.cn}, 
\texttt{xiesihong1@gmail.com}
}
\begin{document}

\maketitle

\begin{abstract}
Graphs are ubiquitous in social networks and biochemistry,
where Graph Neural Networks (GNN) are the state-of-the-art models for prediction.
Graphs can be evolving and
it is vital to formally model and understand how a trained GNN responds to graph evolution.
% first contribution -- propose a novel parameterization of the GNN to enable an information geometric viewpoint that models the smooth responses to the changes.
We propose a smooth parameterization of the GNN predicted distributions using axiomatic attribution,
where the distributions are on a low-dimensional manifold within a high-dimensional embedding space.
% second contribution -- optimal explanation based on the parameterization to find shortest geodesic between two endpoints
% Explaining response changes between two graph snapshots is different from explaining a \textit{fixed prediction} that have been studied previously.
We exploit the differential geometric viewpoint to model distributional evolution as smooth curves on the manifold.
% show that the widely-used KL-divergence as a suitable explanation faithfulness metric is measuring the geodesic distance between two neighboring distributions on the manifold.
We reparameterize families of curves on the manifold and design a convex optimization problem to find a unique curve that concisely approximates the distributional evolution for human interpretation.
% The attribution to many paths involving high-degree nodes is still not interpretable and
% we formulate a novel convex optimization problem to find a unique global optimal small set of paths that explains the change.
% We prove that certain existing methods 
% are special cases of the proposed model with a fixed empty starting graph, and thus do not allow a manifold of evolving distributions.
Extensive experiments on node classification, link prediction, and graph classification tasks with evolving graphs demonstrate the better sparsity, faithfulness, and intuitiveness of the proposed method over the state-of-the-art methods.
\end{abstract}
% \noindent
\section{Introduction}
% opening
Graph neural networks (GNN) are now the state-of-the-art method for graph representation in many applications,
such as social network modeling~\cite{kipf2017_iclr} and molecule property prediction~\cite{tox21}, pose estimation in computer vision~\cite{Yang2021_cvpr}, smart cities~\cite{Ye2020HowTB}, fraud detection~\cite{Wang2019WWW}, and recommendation systems~\cite{Ying2018}.
A GNN outputs a probability distribution $\pr(Y|G;\bt)$ of $Y$, the class random variable of a node (node classification), a link (link prediction), or a graph (graph classification), using trained parameters $\bt$.
Graphs can be evolving,
with edges/nodes added and removed.
For example,
social networks are undergoing constant updates~\cite{Xu2020TemporalKG};
graphs representing chemical compounds are constantly tweaked and tested
during molecule design.
% pose estimation from videos needs to handle series of evolving graphs representing landmarks of human bodies.
% new tweets are added to a social network over time so that the suspiciousness of an account continues to change~\cite{Zheng2019}.
%graphs seen in Internet applications are famous for their volatility.
In a sequence of graph snapshots, without loss of generality,
let $G_0\to G_1$ be \textit{any} two snapshots where the source graph $G_0$ evolves to the destination graph $G_1$.
$\pr(Y|G_0;\bt)$ will evolve to $\pr(Y|G_1;\bt)$ accordingly,
and we aim to model and explain the evolution of $\pr(Y|G;\bt)$ with respect to $G_0\to G_1$ to help humans understand the evolution~\cite{gnnexplainer,gnnlrp,Pope2019_cvpr,ren21b,liu2021dig}.
For example,
% a user may want to know why a recommendation is made by an algorithm to ensure no breach of sensitive information
% (e.g., age and gender);
% for sanity check, a prediction of pose must be based on salient configurations of human landmarks rather than backgrounds;
a GNN's prediction of whether a chemical compound is promising for a target disease during compound design can change as the compound is fine-tuned,
and it is useful for the designers to understand how the GNN's prediction evolves with respect to compound perturbations.
% a GNN-based misinformation detector should explain to end-users why an account becomes suspicious after more activities are recorded in a social network~\cite{Vivian2019CHI,Vivian2020CHI,Kai2019KDD}.
% We aim at modeling and explaining GNN predictions in response to evolving graphs.

To model graph evolution,
existing work~\cite{Leskovec07tkdd,Leskovec08kdd} analyzed the macroscopic change in graph properties, such as graph diameter, density, and power law, but did not analyze how a parametric model responses to graph evolution.
Recent work~\cite{Kumar19kdd,Rossi2020,Kazemi2020RepresentationLF,Xu2020Inductive,Xu2020TemporalKG} investigated learning a model for each graph snapshot and thus the model is evolving, while we focus on modeling a fixed GNN model over evolving graphs.
A more fundamental drawback of the above work is the discrete viewpoint of graph evolution, as individual edges and nodes are added or deleted.
Such discrete modeling fails to describe the corresponding change in $\pr(Y|G;\bt)$,
which is generated by a computation graph that can be perturbed with infinitesimal amount and can be understood as a sufficiently smooth function.
The smoothness can help identify subtle infinitesimal changes contributing significantly to change in $\pr(Y|G;\bt)$,
and thus more faithfully explain the change.
% and can be facilitated by a differentiable structure over the graphs so that one can approach infinitely close one graph (the destination graph) starting from another graph (the source graph). 
% for humans to gain insight into the prediction changes
% (Figure~\ref{fig:example}, left).

% However, the graph dependencies intertwine a GNN prediction with many nodes several hops away through multiple paths with overlapping edges, making the isolation of salient contributors challenging.
Regarding explaining GNN predictions,
there is promising progress made with static graphs,
including local or global explanation methods~\cite{Yuan2020ExplainabilityIG}.
% Given a trained GNN model,
Local methods explain individual GNN predictions
by selecting salient subgraphs~\cite{gnnexplainer},
nodes, or edges~\cite{gnnlrp}.
Global methods~\cite{yuan2020xgnn,Vu2020PGMExplainerPG} optimize simpler surrogate models to approximate the target GNN and generate explaining models or graph instances.
Existing counterfactual or perturbation-based methods~\cite{Lucic2021CFGNNExplainerCE}
attribute a static prediction to
individual edges or nodes by optimizing a perturbation to the input graph to maximally alter the target prediction, thus giving a sense of explaining graph evolution.
However, the perturbed graph found by these algorithms can differ from $G_0$, and thus
does not explain the change from $\pr(Y|G_0;\bt)$ to $\pr(Y|G_1;\bt)$.
Both prior methods DeepLIFT~\cite{shrikumar17} and GNN-LRP~\cite{gnnlrp} can find propagation paths that contribute to prediction changes.
% However, there are several drawbacks.
However, they have a fixed $G_0$ for \textit{any} $G_1$ and thus fail to model smooth evolution between arbitrary $G_0$ and $G_1$.
% for example, a chemist would like to see how one compound can be altered to make a better compound and can start from any compound as the starting graph $G_0$.
They also handle multiple classes independently~\cite{gnnlrp} or uses the log-odd of two predicted classes $Y=j$ and $Y=j^\prime$ to measure the changes in $\pr(Y|G;\bt)$~\cite{shrikumar17}, rather than the overall divergence between two distributions.
% For example,
% the two probabilities $\pr(Y=j|G_0;\bt)$ and $\pr(Y=j^\prime|G_1;\bt)$
% are not comparable due to the normalization in softmax or sigmoid,
% with three classes, let $\pr(Y=j|G_0;\bt)=[0.5, 0.2, 0.3]$ and $\pr(Y=j^\prime|G_1;\bt)=[0.3, 0.2, 0.5]$,
% and the maximal probabilities are the same for two different predicted classes and the log-odd is $\log \frac{0.5}{0.5}=0$, indicating no change.

% $\pr(Y=j|G_0;\bt)=[0.5, 0.2, 0.3]$ and $\pr(Y=j^\prime|G_1;\bt)=[0.35, 0.25, 0.4]$) will have negative log-odd, giving a false sense that class $j$ is more likely than $j^\prime$, although the predicted class changes from $j$ to $j^\prime$ as $G_0$ evolves to $G_1$.

\begin{figure}[t]
    \centering
    \includegraphics[width=0.8\textwidth]{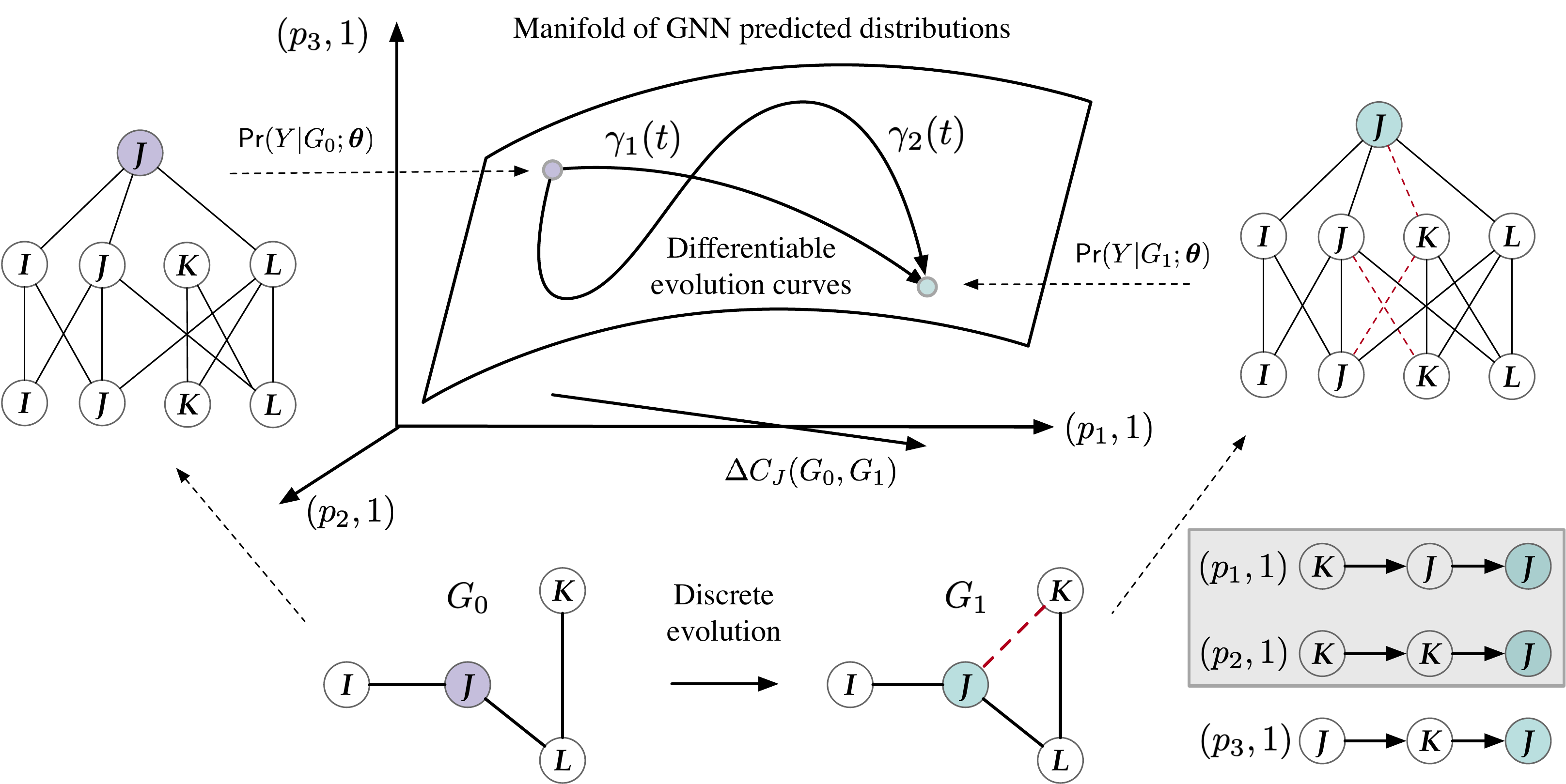}
    \caption{\small $G_0$ at time time $s=0$ is updated to $G_1$ at time $s=1$ after the edge $(J,K)$ is added, and the predicted class distribution ($\pr(Y|G_0)$) of node $J$ changes accordingly.
    % We look at the two corresponding computation graphs, which are trees rooted at $J$, for generating $\pr(Y|G_0)$ and $\pr(Y|G_1)$.
    % The difference between the computation graphs, represented by any paths containing dashed edges, contribute to change from $\pr(Y|G_0)$ to $\pr(Y|G_1)$.
    The contributions of each path $p$ on a computation graph to $\pr(Y=j|G)$ for class $j$ give the coordinates of $\pr(Y|G)$ in a high-dimensional Euclidean space, with axes indexed by $(p,j)$.
    $\pr(Y|G)$ varies smoothly on a low dimensional manifold,
    where multiple curves $\gamma(s)$ can explain the evolution from $\pr(Y|G_0)$ to $\pr(Y|G_1)$ at very fine-grained.
    We select a $\gamma(s)$ that use a sparse set of axes for explaining the prediction evolution.
    Edge deletion, mixture of addition and deletion, link prediction, and graph classification are handled similarly.}
    \label{fig:example}
\end{figure}

% Action 1
To facilitate smooth evolution from $\pr(Y|G_0;\bt)$ to $\pr(Y|G_1;\bt)$ (we ignore the fixed $\bt$ in the sequel),
in Section~\ref{sec:embedding},
we set up a coordinate system for $\pr(Y|G)$ using the contributions of paths on the computation graphs to $\pr(Y|G)$ relative to a global reference graph.
% No prior work has given a coordinate system for such a manifold, and we propose a unique decomposition of $\pr(Y|G;\bt)$ over paths on the computational graph (not $G$) of computing $\pr(Y|G;\bt)$ to create such as embedding Euclidean space, while preserving all information about the computational graph.
We rewrite the distribution $\pr(Y|G)$ for node classification, link prediction, and graph classification using these coordinates, so that $\pr(Y|G)$ for any graph $G$ is embedded in this Euclidean space spanned by the paths.
For classification, the distributions $\pr(Y|G)$ have only $c-1$ sufficient statistics and form an intrinsic low-dimensional manifold embedded in the constructed extrinsic embedding Euclidean space.
% Action 2
In Section~\ref{sec:metric}, we study the 
curvature of the manifold \textit{local} to a particular $\pr(Y|G)$.
We derive the Fisher information matrix $I$ of $\pr(Y|G)$ with respect to the path coordinates.
As the KL-divergence between $\pr(Y|G_0)$ and $\pr(Y|G_1)$ that are sufficiently close can be approximated by a quadratic function with the Fisher information matrix, $\pr(Y|G)$ does not necessarily evolve linearly in the extrinsic coordinates but adapts to the curved intrinsic geometry of the manifold around $\pr(Y|G)$.
Previous explanation methods~\cite{shrikumar17,gnnlrp} taking a linear view point will not sufficiently model such curved geometry.
Our results justify KL-divergence as a faithfulness metric of explanations adopted in the literature.
% Action 3
The manifold allows a set of curves $\gamma(s)$ with the continuous time variable $s\in[0,1]$ to model differentiable evolution between two distributions.
With a novel reparameterization, we devise a convex optimization problem to optimally select a curve depending on a small number of extrinsic coordinates to approximate the evolution of $\pr(Y|G_0)$ into $\pr(Y|G_1)$ following the local manifold curvature.
Empirically, in Section~\ref{sec:exp}, we show that the proposed model and algorithm help select sparse and salient graph elements to concisely and faithfully explain the GNN responses to evolving graphs on 8 graph datasets with node classification, link prediction, and graph classification, with edge additions and/or deletions.
\section{Preliminaries}
\noindent\textbf{Graph neural networks}.
For \textit{node classification},
assume that we have a trained GNN of $T$ layers that predicts the class distribution of each node $J \in {\mathcal V}$ on a graph $G=(\mathcal{V}, \mathcal{E})$.
Let $\mathcal{N}(J)$ be the neighbors of node $J$.
On layer $t$, $t=1,\dots, T$ and for node $J$,
GNN computes hidden vector $\mathbf{h}_J^{(t)}$ using messages sent from its neighbors:
% \begin{eqnarray}
% \mathbf{m}^{(t)}_{KJ} & = & \textnormal{MSG}\left(\mathbf{h}^{(t-1)}_K,\mathbf{h}^{(t-1)}_J\right),\label{eq:message}\\
% % \end{equation}
% % \begin{equation}
% \mathbf{a}^{(t)}_J & = & \textnormal{AGG}\left(\left\{\mathbf{m}^{(t)}_{KJ}|K \in \mathcal{N}(J)\right\}\right),\label{eq:aggregate}\\
% % \end{equation}
% % \begin{equation}
% \mathbf{u}^{(t)}_J & = & \textnormal{UPDATE}\left(\mathbf{a}^{(t)}_J, \boldsymbol{\theta}^{(t)}\right)\\
% \label{eq:hidden}
% \mathbf{h}^{(t)}_J & = & \textnormal{NOLINEAR}({\mathbf{u}}^{(t)}_J).
% \end{eqnarray}
\begin{eqnarray}
\bz^{(t)}_J&=&f_\textnormal{UPDATE}^{(t)}(f_\textnormal{AGG}^{(t)}({\bh^{(t-1)}_J,\bh^{(t-1)}_K,K\in {\mathcal N}(J)})),
\label{eq:gnn_logit}\\
{\bh}^{(t)}_J&=&\textnormal{NonLinear}({{\bz}}^{(t)}_J),
\label{eq:gnn_activation}
\end{eqnarray}
% \begin{eqnarray}
% \begin{aligned}
% &z^{(t)}_J=f_\textnormal{UPDATE}^{(t)}(h^{(t-1)}_J,f_\textnormal{AGG}^{(t)}({h^{(t-1)}_J,h^{(t-1)}_K,K \in N(J)})) \\
% \label{eq:message}
% \end{aligned}
% \end{eqnarray}
% The MSG function computes the message vector sent from $K$ to $J$ (e.g., $\mathbf{m}_{KJ}^{(t)}=\mathbf{h}_{K}^{(t-1)}$).
where $f_\textnormal{AGG}^{(t)}$ aggregates the messages from all neighbors and can be the element-wise sum, average, or maximum of the incoming messages. 
$f_\textnormal{UPDATE}^{(t)}$ maps
$f_\textnormal{AGG}^{(t)}$ to $\bz_J^{(t)}$,
using $\bz_J^{(t)}= \left< f_\textnormal{AGG}^{(t)},\boldsymbol{\theta}^{(t)}\right>$ or a multi-layered perceptron with parameters  $\boldsymbol{\theta}^{(t)}$.
For layer $t\in\{1,\dots,T-1\}$, ReLU is used as the NonLinear mapping,
and we refer to the linear terms in the argument of NonLinear as ``\textit{logits}''. 
At the input layer, $\mathbf{h}^{(0)}_J$ is the node feature vector $\mathbf{x}_J$.
At layer $T$, the logits are $\mathbf{z}_J^{(T)} \triangleq \mathbf{z}_J(G)$, 
whose $j$-th element $z_j(G)$ denotes the logit of the class $j=1,\dots, c$.
$\mathbf{z}_J(G)$ is mapped to the class distribution $\pr(Y_J|G;\bt)$ through the softmax ($c>2$) or sigmoid ($c=2$) function,
% and $\pr_j(G)$, $j=1,\dots, c$, denotes the $j$-th class probability of the node $J$.
and $\argmax_j z_j=\argmax_j \pr(Y=j|G;\bt)$ is the predicted class for $J$.
% In the sequel, we ignore the parameters $\bt$ in the probability distributions when there is no confusion.
For \textit{link prediction},
% assuming that the GNN has $T$ layers,
% followed by a linear layer that predicts the class distribution of each edge $(I,J)$.
we concatenate $\bz_I^{(T)}$ and $\bz_J^{(T)}$ as the input to a linear layer to obtain the logits: 
\begin{equation}
\bz_{IJ}=  \left< \left[\bz_I^{(T)}; \bz_J^{(T)}\right],\boldsymbol{\theta}\right>.\label{eq:gnn_link_logit}
\end{equation}
Since link prediction is a binary classification problem, $Z_{IJ}$ can be mapped to the probability that $(I,J)$ exists using the sigmoid function.
For \textit{graph classification}, the average pooling of $\bz_J(G)$ of all nodes from $G$ can be used to obtain a single vector representation $\bz(G)$ of $G$ for classification.
\begin{wraptable}{r}{8cm}
% \begin{table}[t]
    \caption{Symbols and their meanings}
    \label{tab:symbols}
    \centering
    \scriptsize
    \begin{tabular}{c|c}
    \hline
      Symbols   & Definitions and Descriptions \\
    \hline
       $J,\dots,U,V,\dots,K$  & Nodes in the graph \\
    $j,\dots,u,v,\dots,k$ &  Neurons of the corresponding nodes \\
    \hline
    $c$ & The number of classes\\
    $G_0\to G_1$ & Graph $G_0$ evolves to $G_1$\\
    $\bz_J(G)$ &   Logit vector $[\bz_1(G), \dots, \bz_c(G)]$ of node $J$\\
    % &  The logit of the $j$ classes of node $J$\\
    $\Delta \bz_J(G_0, G_1)$ & $\Delta \bz_J(G_0, G_1)=\bz_J(G_1)-\bz_J(G_0)$ \\
    % $\Delta \bz_j$ &  the logits of $j$ classes of the $\Delta \bz_J(G_0, G_1)$ \\

    $\pr(Y|G)$ & Distribution $[\pr_1(G),\dots,\pr_c(G)]$ of class $Y$\\ 
    % & the probability that $J$ is of class $j$\\
    $W(G)$ & Paths on the computation graph of GNN\\
    $W_J(G)$ & The subset of $W(G)$ that computes $\bz_J(G)$\\
    $\Delta W_J(G_0,G_1)$ & Altered paths in $W_J(G_0)$ as $G_0\to G_1$\\
    $C_{p,j}$ &  Contribution of the $p$-th altered path to $\Delta z_j$\\\hline
    % $h_j^{(t)}$, $z_j^{(t)}$& Activation and logit of neuron $j$ in layer $t$ \\[2pt]
    % % \multirow{2}*{$z_j^{(t)}$} & the neuron of the node $J$ in the $t$-th layer of GNN \\ 
    % % ~ & before applying non-linear activations \\
    % $\Delta h^{(t)}_j$ &  Difference-from-reference of $h_j^{(t)}$\\[2pt] % neurons at $t$-th layer \\
    % $m_{\Delta h_u^{(t-1)} \Delta h_v^{(t)}}$
    % & Multiplier of neuron $u$ w.r.t. the neuron $v$\\[5pt]
    \end{tabular}
% \end{table}
\end{wraptable}
Since the GNN parameters $\bt$ are fixed, we ignore $\bt$ in $\pr(Y|G;\bt)$ and use $\pr(Y|G)$ to denote the predicted class distribution of $Y$, which is a general random variable of the class of a node, an edge, or a whole graph, depending on the tasks. Similarly, we use $\bz^{(t)}$ and $\bz$ to denote logits on layer $t$ and the last layer of GNN, respectively.
For a uniform treatment, we consider the GNN as learning node representations $\bz_J$,
while the concatenation, pooling, sigmoid, and softmax at the last layer that generate $\pr(Y|G)$ from the node representations are task-specific and separated from the GNN.

\noindent\textbf{Evolving graphs}.
In a sequence of graph snapshots,
let $G_0=({\mathcal V}_0,{\mathcal E}_0)$ denote an arbitrary source graph with edges ${\mathcal E}_0$ and nodes ${\mathcal V}_0$, and $G_1=({\mathcal V}_1, {\mathcal E}_1)$ an arbitrary destination graph,
so that the edge set evolves from ${\mathcal E}_0$ to ${\mathcal E}_1$ and the node set evolves from ${\mathcal V}_0$ to ${\mathcal V}_1$.
We denote the evolution by $G_0\to G_1$.
Both sets can undergo addition, deletion, or both, and all such operations happen deterministically so that the evolution is discrete. 
Let $\Delta \mathcal{E}$ be the set of altered edges:
$\Delta {\mathcal E}=\{e: e \in {\mathcal E}_1\wedge e \notin {\mathcal E}_0\textnormal{ or }  e \in {\mathcal E}_0\wedge e \notin {\mathcal E}_1\}$.
As $G_0\to G_1$, there is an evolution from $\pr(Y|G_0)$ to $\pr(Y|G_1)$.
%% the edge/node sets  to be explained. No need of this.
% When the graph $G_0$ evolves to $G_1$, let ${\mathcal V}^{*}\subset\mathcal{V}_1$ and ${\mathcal E}^{*}\subset\mathcal{E}_1$ respectively denote the set of target nodes and edges with significant changes in their class distributions to be explained.
% ${\mathcal V}^{*}=\{J |\pr_J(G_0) \neq \pr_J(G_1), J \in {\mathcal V}\}$.
% Similarly, let ${\mathcal E}^{*}=\{(I,J) |\pr_{IJ}(G_0) \neq \pr_{IJ}(G_1), I, J \in {\mathcal V}_0\cap {\mathcal V}_1\}$ denote the set of target edges to be explained.

%% High level Problem definition: should be moved to preliminary
% Formally,
% given a graph $G$ and assuming multi-class prediction of a node $J$,
% let $\bz_J(G)=[z_1(G),\dots, z_c(G)]$ be the logits of $c$ classes of the node.
% The classes are indexed by $j=1,\dots, c$ (in general, the lower-case $j$ indexes the neurons of a node).
% The predicted class distribution is $\pr_J(G)=\textnormal{softmax}(\bz_J(G))=[\pr_1(G),\dots, \pr_c(G)]$
% and $\pr_j(G)$ is the probability of class $j$.
% On a static graph $G$, $\bz_J(G)$ or $\pr_J(G)$ is the explicandum and will be explained~\cite{gnnexplainer},
% while when $G_0$ evolves to $G_1$,
% the difference between $\bz(G_0)$ and $\bz(G_1)$ (or $\pr(G_0)$ and $\pr(G_1)$) is the explicandum.
% For predicting the link $(I,J)$,
% $\bz_{IJ}$ and $\pr_{IJ}$ are the corresponding logit and probability vectors.

\noindent\textbf{Definition} (\textit{Differentiable evolution):
Given a fixed GNN model, for $G_0\to G_1$, find a family of computational models $\{\pr(Y|G(s)):s\in[0,1],\pr(Y|G(0))=\pr(Y|G_0),\pr(Y|G(1))=\pr(Y|G_1)\}$ and $\pr(Y|G(s))$ is differentiable with respect to the time variable $s$.~\footnote{A computational model that outputs $\pr(Y|G(s))$ does not necessarily correspond to a concrete input graph $G(s)$. We use the notation $\pr(Y|G(s))$ and $G(s)$ for notation convenience only.}
}

\noindent\textbf{Differential geometry}.
% We will construct a manifold of parameterized distributions in an embedding Euclidean space to justify the KL-divergence as explanation faithfulness measure and the path-based explanation of prediction evolution.
An $n$ dimensional manifold ${\cal M}$ is a set of points, each of which can be associated with a local $n$ dimensional Euclidean tangent space.
The manifold can be embedded in a global Euclidean space $\mathbb{R}^m$, so that each point can be assigned with $m$ global coordinates.
% A manifold is differentiable if there is a atlas of local charts covering the manifold and each chart local to a point on $M$ defines a one-to-one differentiable mapping between the points in the local neighbor of the point and $\mathbb{R}^n$.
A smooth curve on ${\cal M}$ is a smooth function $\gamma:[0,1]\to {\cal M}$. %so that $\gamma(0)=\bx\in M$ and $\epsilon$ is a small positive number.
A two dimensional manifold embedded in $\mathbb{R}^3$ with two curves is shown in Figure~\ref{fig:example}.
\section{Differential geometric view of GNN on evolving graphs}
% The evolving graphs and the corresponding evolving GNN predicted distributions require a coordinate system to describe the nonlinear movements.
% We first set up such a system, within which we embed
% a differentiable manifold of parameterized family of probability distributions $\pr(Y|G)$,
% where a curved (non-Euclidean) distance metric can describe the smooth evolution of $\pr(Y|G)$ as $G_0\to G_1$.
% Then we turn the task of concisely explaining the evolution $\pr(Y|G_0)$ to $\pr(Y|G_1)$ to the problem of finding the steepest ascent direction on the manifold with the curved metric.
% Lastly,
% we present a convex programming problem and algorithm \textsf{AxiomPath-Convex} to find a unique and sparse set of coordinates to approximate the steepest ascent direction (Section ~\ref{section:selection}).

\subsection{Embed a manifold of GNN predicted distributions}
\label{sec:embedding}
% we will discuss the coordinates of the embedding space.
While a manifold in general is coordinate-free, we aim to embed a manifold in an extrinsic Euclidean space for a novel parameterization of $\pr(Y|G)$.
% Consider the predicted class distribution $\pr(Y|G)$ represented by a set of coordinates, which coordinate system should we adopt?
The GNN parameter $\bt$ is fixed and cannot be used.
$\pr(Y|G)$ is given by the softmax or sigmoid of the sufficient statistics $\bz(G)$, which are used as coordinates in information geometry~\cite{Amari2016}.
However, the logits sit at the ending layer of the GNN and will not fully capture how graph evolution influences $\pr(Y|G)$ through the changes in the computation of the logits.
GNNExplainer~\cite{gnnexplainer} adopts a soft element-wise mask over node features or edges and thus parameterize $\pr(Y|G)$ using the mask.
However, the mask works on the input graph (edges or node features),
without revealing changes in the internal GNN computation process.
% and it does allow linear operations over the logits (difference in GNNExplainer parameters on $G_0$ and $G_1$ does not lead to the difference of the corresponding logits). 

We propose a novel extrinsic coordinate based on the contributions of paths to $\pr(Y|G)$ on the computation graph of the GNN.
% so that any trajectories linking $\pr(Y|G_0)$ to $\pr(Y|G_1)$ can be defined as a differentiable curve on the manifold of $\pr(Y|G)$, while the difference in $\pr(Y|G_0)$ and $\pr(Y|G_1)$ can be losslessly attributed to the changes in the individual coordinates.
% The intrinsic coordinate of $\pr(Y|G)$ should be defined using the tangent space at a point near $\pr(Y|G)$ on the manifold, and the dimension of the tangent space can be of a low-dimensional than the extrinsic coordinate.
$\bz_J(G)$ is generated by the computation graph of the given GNN,
which is a spanning tree rooted at $J$ of depth $T$.
Figure~\ref{fig:example} shows two computation graphs for $G_0$ and $G_1$.
On a computation graph, each node consists of neurons for the corresponding node in $G$,
and we use the same labels ($I, J, K, L$, etc.) to identify nodes in the input and computational graphs. 
The leaves of the tree contain neurons from the input layer ($t=0$)
and the root node contains neurons of the output layer ($t=T$).
The trees completely represent the computations in Eqs.~(\ref{eq:gnn_logit})-(\ref{eq:gnn_activation}),
where each message is passed through a path from a leaf to the root.
Let a path be $(\dots,U,V,\dots,J)$, where
$U$ and $V$ represent any two adjacent nodes
and $J$ is the root where $\bz_J(G)$ is generated. 
For a GNN with $T$ layers, the paths are sequences of $T+1$ nodes.
% and we let $W(G)$
% be the set of all such paths.
Let $W_J(G)$ be the paths ending at $J$.
% \textcolor{blue}{We may need to just discuss $\bz_J$ for the most general case.}
% For link prediction, with the edge $(I,J)$,
% the computation graph is rooted at neurons in $\bz_{IJ}$.
% The trees rooted at two end nodes $I$ and $J$, and the concatenation of $\bz_I^{(T)}$ and $\bz_J^{(T)}$ and the linear mapping, completely represent the link prediction output.
% \textcolor{red}{(Need to add graph classification).}
% \textcolor{red}{For graph classification, the average pooling of all nodes in the graph  represent the graph classification output}
% The computation graph is a ``structural equation model''~\cite{Pearl2009}.

Consider a reference graph $G^\ast$ containing all nodes in the graphs during the evolution.
The symmetric set difference $\Delta W_J(G^\ast,G)=W_J(G^\ast)\Delta W_J(G)$
contains all $m$ paths rooted at $J$ with at least one altered edge when $G^\ast\to G$.
For example, in Figure~\ref{fig:example},
$\Delta W_J(G^\ast,G)=\{(J,K,J),(K,J,J), (K,K,J), (L,K,J)\}$.
$\Delta W_J(G^\ast,G)$
% are the changes to 
causes the change in $\bz_J$ and $\pr(Y|G)$ through
the computation graph.
Let the difference in $\bz_J$ computed 
on $G^\ast$ and $G$ be $\Delta \bz_J(G^\ast, G)=\bz_J(G)-\bz_J(G^\ast)=[\Delta z_1,\dots, \Delta z_c]\in\mathbb{R}^c$.
We adopt DeepLIFT to GNN (see~\cite{shrikumar17} and Appendix~\ref{section:Attributing}) to compute the contribution $C_{p,j}(G)$ of each path $p\in \Delta W_J(G^\ast,G)$
% (all possible changing paths as $G^\ast\to G$ for all $G$)
to $\Delta z_{j}(G)$ for any class $j=1,\dots, c$,
so that $[z_1(G),\dots,z_c(G)]$ is reparameterized as 
\begin{equation}
[z_1(G^\ast),\dots,z_c(G^\ast)] + \left[\sum_{p=1}^m C_{p,1}(G),\dots,\sum_{p=1}^m C_{p,c}(G)\right]
=\bz_J(G^\ast) + \mathbf{1}^\top C_J(G)\label{eq:logit_change}
% &=&\bz_J(G^\ast) + \mathbf{1}^\top C_J(G),
\end{equation}
Here, $C_J(G)$ is the contribution matrix with $C_{p,j}(G)$ as elements and $C_{:j}(G)$ the $j$-th column.
$\mathbf{1}$ is an all-1 $m\times 1$ vector.
By fixing $G^\ast$ and $\bz_J(G^\ast)$,
we use $C_{p,j}(G)$ as the extrinsic coordinates of $\pr(Y|G)$.
In this coordinates system,
the difference vector between two logits for node $J$ is:
\begin{equation}
\label{eq:delta_z_C}
\Delta \bz_J(G_0,G_1) = \bz_J(G_1)-\bz_J(G_0) = \mathbf{1}^\top (C_J(G_1)-C_J(G_0))=\mathbf{1}^\top \Delta C_J(G_0,G_1).
\end{equation}
If we set $G^\ast=G_0$, we have
$\Delta \bz_J(G_0, G_1)=\mathbf{1}^\top C_J(G_1)$.
Even with a fixed $G^\ast$, different graphs $G$ and nodes $J$ can lead to different sets of $\Delta W_J(G^\ast, G)$.
We obtain a unified coordinate system by taking the union $\cup_{G,J\in G}\Delta W_J(G^\ast,G)$.
We set the rows of $C_J(G)$ to zeros for those paths that are not in $\Delta W_J(G^\ast,G)$.
In implementing our algorithm, we only rely on the observed graphs to exhaust the relevant paths without computing $\cup_{G,J\in G}\Delta W_J(G^\ast,G)$.
We now embed $\pr(Y|G)$ in the coordinate system.
\begin{itemize}[leftmargin=.2in]
    \item Node classification:
    % let $C_{:j}$ be the $j$-th column of the coordinate matrix $C_J(G)$.
    % according to Eq. (\ref{eq:logit_change}),
    % $\bz(G^\ast)=\mathbf{0}$. 
    the class distribution of node $J$ is
    \begin{equation}
    \label{eq:prob_node_classification_C}
        \pr(Y|G) = \text{softmax}(\bz_J(G))
        =\text{softmax}(\bz_J(G^\ast) + \mathbf{1}^\top C_J(G)).
        % = \frac{\exp(z_j(G))}{\sum_{j'}\exp(z_{j'}(G))}
        % =\frac{\exp(\mathbf{1}^\top \Delta C_{:j}(G^\ast,G))}{\sum_{j'}\exp(\mathbf{1}^\top C_{:j'}(G^\ast,G))}.
    \end{equation}
    \item Link prediction: for a link $(I,J)$ between nodes $I$ and $J$, the logits $\bz_I(G)$ and $\bz_J(G)$ are concatenated as input to a linear layer $\bt_{\text{LP}}$ (``LP'' means ``link prediction''). The class distribution of the link is 
    \begin{equation}
    \label{eq:prob_link_prediction_C}
        \pr(Y|G) = \text{sigmoid}( [\bz_I(G^\ast)+\mathbf{1}^\top C_I(G);
        \bz_J(G^\ast)+\mathbf{1}^\top C_J(G)]\bt_{\text{LP}}).
    \end{equation}
    \item Graph classification: with a linear layer $\bt_{\text{GC}}$ (``GC'' for ``graph classification'') and average pooling, the distribution of the graph class is 
    \begin{equation}
    \label{eq:prob_graph_classification_C}
    \pr(Y|G) = \text{softmax}(\text{mean}(\bz_J(G^\ast) + \mathbf{1}^\top C_J(G):J\in\mathcal{V})\bt_{\text{GC}}).
    \end{equation}
\end{itemize}
The arguments of the above softmax and sigmoid are linear in $C_J(G)$ for all $J\in {\cal V}$ of $G$,
and we recover exponential families reparameterized by  $C_J(G)$.
%% the discussion of exponential families are too abstract, while the above distributions are more concrete.
% (not any GNN parameter  $\bt$ or the last linear layer $\bt_{\text{LP}}$ or $\bt_{\text{GN}}$:
% \begin{equation}
% \label{eq:exp_family_C}
%     \pr(Y|G;C(G)) = \exp\left\{\sum_{J\in\mathcal{V}} \sum_{j}\sum_p C_{p, j}(G) x(Y;p,j) - F(C(G)) \right\},
% \end{equation}
% where $C(G)$ is the set of all contribution matrices for all nodes in $G$, $x(Y;p, j)$ are parameters to calculate the sufficient statistics for class $Y$ using $C_{p,j}(G)$, 
% and $F(C(G))$ is a strictly convex cumulant function of the exponential family.
% As $C(G)$ varies as $G_0\to G_1$, $\pr(Y|G;C(G))$ varies smoothly according to Eq. (\ref{eq:exp_family_C}).
% Therefore, we have reparameterized $\pr(Y|G)$ using coordinates defined in the contribution matrices in $C(G)$.
For a specific prediction task, we let the contribution matrices $C_J(G)$ in the corresponding equation of Eqs. (\ref{eq:prob_node_classification_C})-(\ref{eq:prob_graph_classification_C}) vary smoothly,
and the resulting set $\{\pr(Y|G)\}$ constitutes a manifold ${\cal M}(G, J)$.
The dimension of the manifold is the same as the number of sufficient statistics of $\pr(Y|G)$,
though the embedding Euclidean space has $mc$ ($2mc$ and $|\mathcal{V}|mc$, resp.) coordinates for node classification (link prediction and graph classification, resp.), where $m$ is the number of paths in $\Delta W_J(G_0,G_1)$ and $c$ the number of classes.

\subsection{A curved metric on the manifold}
\label{sec:metric}
% We aim to model and understand the movement of $\pr(Y|G)$ as $G$ undergoes evolution $G_0\to G_1$, and the speed and direction of the movements are geometric properties relying on a specific metric for measuring distance or divergence.
We will define a curved metric on the manifold ${\cal M}(G,J)$ of node classification probability distribution (link prediction and graph classification can be done similarly).
A well-defined metric is vital to tasks such as metric learning on manifolds, which we will use to explain evolving GNN predictions in Section~\ref{sec:selection}.
We could have approximated the distance between two distributions $\pr(Y|G_0)$ and $\pr(Y|G_1)$ by $\|\Delta C_J(G_0,G_1)\|$ with some matrix norm (e.g., the Frobenius norm), as shown in Figure~\ref{fig:example}.
As another example,
DeepLIFT~\cite{shrikumar17} uses the linear term $\mathbf{1}^\top ( C_{:j}(G_0)-C_{:j'}(G_1))$ for two predicted classes $j$ and $j'$ on $G_0$ and $G_1$, respectively.
% and the DeepLIFT explanations do not capture the non-linearity movement on ${\cal M}(G,J)$.
These options imply the Euclidean distance metric defined in the flat space spanned by elements in $C_J(G)$.
However, the evolution of $\pr(Y|G)$ on ${\cal M}(G,J)$
depends on $C_J(G_0)$ and $C_J(G_1)$ nonlinearly through the sigmoid or softmax function as in Eqs. (\ref{eq:prob_node_classification_C})-(\ref{eq:prob_graph_classification_C}),
and the difference between $\pr(Y|G_0)$ and $\pr(Y|G_1)$ should reflect the curvatures over the manifold ${\cal M}(G,J)$ of class distributions.

We adopt information geometry~\cite{Amari2016} to defined a curved metric on the manifold ${\cal M}(G,J)$.
Take node classification as an example,
the KL-divergence $D_{\text{KL}}(\pr(Y|G_1)||\pr(Y|G_0))$ between any two class distributions on the manifold ${\cal M}(G,J)$ is defined as $\sum_{Y}\log\pr(Y|G_1)\frac{\pr(Y|G_1)}{\pr(Y|G_0)}$.
As the parameter $C_J(G_0)$ approaches $C_J(G_1)$, $\pr(Y|G_0)$ becomes close to $\pr(Y|G_1)$ (as measured by the following Riemannian metric on ${\mathcal M}(G,J)$, rather than the Euclidean metric of the extrinsic space),
and the KL-divergence can be approximated locally at $\pr(Y|G_1)$ as 
\begin{equation}
    \textbf{vec}(\Delta C_J(G_1,G_0))^\top I(\textbf{vec}(C_J(G_1))) \textbf{vec}(\Delta C_J(G_1,G_0)),
    \label{eq:metric}
\end{equation}
where $\textbf{vec}(C_J(G_1))$ is the column vector with all elements from $C_J(G_1)$,
and similar for $\textbf{vec}(\Delta C_J(G_1,G_0))$ with the matrix $\Delta C_J(G_1,G_0)$.
% which is a quadratic form of the difference vector $\textbf{vec}(\Delta C_J(G_1,G_0)) = \textbf{vec}(C_J(G_1)-G_J(G_0))$.
$I(\textbf{vec}(C_J(G_1)))$ is the Fisher information matrix of the distribution $\pr(Y|G_1)$ with respect to parameters in $\textbf{vec}(C_J(G_1))$,
evaluated as $(\nabla_{\textbf{vec}(C_J(G_1))}\bz_J(G_1))^\top \mathbb{E}_{Y\sim \pr(Y|G_1)}[s_{\bz_J(G_1)} s_{\bz_J(G_1)}^\top](\nabla_{\textbf{vec}(C_J(G_1))}\bz_J(G_1))$,
with $s_{\bz_J(G_1)}=\nabla_{\bz_J(G_1)} \log \pr(Y|G_1)$ being the gradient vector of the log-likelihood with respect to $\bz_J(G_1)$ and $\nabla_{\textbf{vec}(C_J(G_1))}\bz_J(G_1)\in\mathbb{R}^{c\times (mc)}$ the Jacobian of $\bz_J(G_1)$ w.r.t $\textbf{vec}(C_J(G_1))$. See (\cite{James2020jmlr} and Appendix~\ref{sec:fim} for the derivations).
$I$ is symmetric positive definite (SPD) and Eq.~(\ref{eq:metric}) defines a non-Euclidean metric on the manifold to make ${\cal M}$ a Riemannian manifold.

\subsection{Connecting two distributions via a simple curve}
\label{sec:selection}
We formulate the problem of explaining the GNN prediction evolution as optimizing a curve on the manifold ${\cal M}$. 
Take node classification as an example~\footnote{In the Appendix section ~\ref{sup:optimize}, we discuss the cases of link prediction and graph classification.}.
Let $s\in [0,1]$ be the time variable.
%and $G(s)$ be some intermediate graph at time $s$
As $s\to 1$, % $\pr(Y|G_0) \to \pr(Y|G(s))$,
$\pr(Y|G(s))$ moves smoothly over ${\cal M}$ along a curve $\gamma(s)\in \Gamma(G_0,G_1)=\{\gamma(s)=\{\pr(Y|G(s)):s\in[0,1],\pr(Y|G(0))=\pr(Y|G_0),\pr(Y|G(1))=\pr(Y|G_1)\}\subset {\cal M}\}$.
Two possible curves $\gamma_1(s)$ and $\gamma_2(s)$ are shown in Figure~\ref{fig:example}.
% We don't know which curve that $\pr(Y|G(s))$ follows.
% On which curve $\gamma(t)\in \Gamma(G_0,G_1)$ that $\pr(Y|G_0)$ moves to $\pr(Y|G_1)$?
With the parameterization in Eqs. (\ref{eq:delta_z_C})- (\ref{eq:prob_node_classification_C}),
we can specifically define the following curves by smoothly varying the path contributions to $\pr(Y|G(s))$ through $\Delta \bz_J(G(s))$
($s$ can be reversed to move in the opposite direction along $\gamma(s)$).
\begin{itemize}[leftmargin=*]
\item linear in the directional matrix $\Delta C_{J}(G_0,G_1)$, with $\Delta C_J(G_0,G(s))=\Delta C_{J}(G_0,G_1)s$;
\item linear in the elements of $\Delta C_{J}(G_0,G_1)$: $\Delta C_{J}(G_0,G(s))=\Delta C_{J}(G_0,G_1) \odot X(s)$, where $\odot$ is element-wise product and the matrix element $X(s)_{p,j}$ is a function mapping $s\in[0,1]\to [0,1]$;
\item linear in the rows of $\Delta C_{J}(G_0,G_1)$: let $\myvect{\bx}=[x_1(s),\dots, x_m(s)]^\top$ and $x_p(s)\in[0,1]$ weight the $p$-th path as a whole, and
% $\mathbf{1}_{1\times c}\otimes \myvect{\bx}(s)\in [0,1]^{mc}$ weighting rows of $\Delta C_{J}(G_0,G_1)$:
\begin{equation}
\label{eq:path_selection_C}
    \Delta C_{J}(G_0,G(s))=\Delta C_{J}(G_0,G_1) \odot [\mathbf{1}_{1\times c}\otimes \myvect{\bx}(s)],
\end{equation}
where $\otimes$ is the Kronecker product creating the path weighting matrix $\mathbf{1}_{1\times c}\otimes \myvect{\bx}(s)\in[0,1]^{mc}$.
\end{itemize}

% Accordingly,
% we can use $D_{\text{KL}}(\pr(Y|G(t))||\pr(Y|G_0))$ to approximate and explain $D_{\text{KL}}(\pr(Y|G_1||\pr(Y|G_0))$, by replacing $\Delta C_J(G_0,G_1)$ in Eq. (\ref{eq:kl_C}) with any $\Delta C_J(G_0,G(t))$ defined above.
According to the derivation in Appendix~\ref{appendix:kl_C}, we can rewrite
$D_{\text{KL}}(\pr(Y|G_1)||\pr(Y|G_0))$ as
\begin{equation}
\mathbb{E}_{j\sim \pr(Y|G_1)}[\mathbf{1}^\top (C_{:j}(G_1)-C_{:j}(G_0))]
-\log Z(G_1)
+ \log \sum_{j=1}^c \exp\{z_j(G^\ast)+\mathbf{1}^\top C_{:j}(G_0)\}
% \underbrace{[C_{j:}(G_1)-\Delta C_{j:}(G_0, G_1)]}_{=C_{j:}(G_0)}
\label{eq:kl_C}
%
%=\sum_{j=1}^c \pr(Y=j|G_1) \mathbf{1}^\top \Delta C_{:j}(G_0,G_1) - \log Z(G_1)
% + \log \sum_{j=1}^c \exp\left\{z_j(G^\ast)+\mathbf{1}^\top
%C_{:j}(G_0)\right\}
\end{equation}
where the expectation has class $j$ sampled from $\pr(Y|G_1)$
and $\log Z(G_1)$ is the cumulant function of $\pr(Y|G_1)$.
In Eq. (\ref{eq:kl_C}), 
by letting $G_0$ vary along any $\gamma(s)$ as parameterized above and
replacing $C_{:j}(G_0)$ with $C_{:j}(G(s))=C_{:j}(G_0)+\Delta C_{:j}(G_0,G(s))$,
we obtain $D_{\text{KL}}(\pr(Y|G_1)||\pr(Y|G(s)))$.
Taking $s\to 1$,
the curve $\pr(Y|G(s))$ enters a neighborhood of $\pr(Y|G_1)$ on the manifold ${\cal M}$ to approximate $\pr(Y|G_1)$ and $D_{\text{KL}}(\pr(Y|G_1)||\pr(Y|G(s)))\to 0$ so that the curve $\gamma(s)$ parameterized by $\myvect{\bx}(s)$ smoothly mimics the movement from $\pr(Y|G_0)$ to $\pr(Y|G_1)$, at least locally in the neighborhood of $\pr(Y|G_1)$.
Since $\gamma(s)\in \Gamma(G_0, G_1)\subset {\cal M}(G,J)$, selecting a curve $\gamma(s)$ is different from selecting some edges from $G_1$ to approximate the distribution $\pr(Y|G_1)$ as in~\cite{gnnexplainer}.
Rather, the curves should move according to the geometry of the manifold ${\cal M}(G,J)$.

We can use Eq. (\ref{eq:kl_C}) to explain how the computation of $\pr(Y|G_0)$ evolves to that of $\pr(Y|G_1)$ following $\gamma(s)$.
% Optimizing $\gamma(t)$ to minimize $D_{\text{KL}}(\pr(Y|G(t))||\pr(Y|G_1))$ can be considered as .
There are $mc$ coordinates in $C_J(G(s))$,
 and can be large when $J$ is a high-degree node,
 while an explanation should be concise. 
We will identify a curve $\gamma(s)$ using a small number of coordinates for conciseness.
% $\Delta C_J(G_0,G(t))=\Delta C_{J}(G_0,G_1)t$ is too coarse as it thinks all paths from $\Delta W_J(G_0,G_1)$ contribute to $\Delta C_{J}(G_0,G(t))$ linearly in $t$ uniformly.
% $\Delta C_{J}(G_0,G(t))=\Delta C_{J}(G_0,G_1) \odot X(t)$ asks each path $p$ for each class $j$ to have a different rate of movement by assigning a specific $X_{p,j}(t)$ linear rate.
The parameterization Eq. (\ref{eq:path_selection_C}) assigns a weight $x_p(s)$ to each path $p$ at time $s$, allowing
thresholding the elements in $\myvect{\bx}(s)$ to select a subset $E_n$ of $n\ll m$ paths.
The contributions from these few selected path is now $\Delta C_{J}(G_0,G(s))$, which should well-approximate $\Delta C_{J}(G_0,G(1))$ as $s\to 1$.
For example, in Figure~\ref{fig:example},
we can take $E_2=\{(K,J,J), (L,K,J)\}\subset \Delta W_J(G_0, G_1)$ with $n=2$.
The selected paths span a low-dimensional space to embed the neighborhood of $\pr(Y|G_1)$ on the manifold ${\cal M}(J,G)$.
Adding paths in $E_n$ to the computation graph of $G_0$ leads to a new computation graph on the manifold.

% The idea is shown in Figure~\ref{fig:kl_path_approx}.
% \begin{figure}[t]
%     \centering
%     \includegraphics[width=0.35\textwidth]{figs/}
%     \caption{\small Eq. (\ref{eq:kl_path_approx}) equals to 
%     $\kl{\pr_J(G_1)}{\pr_J(G_n)}$, which is the residual of approximating the contributions from $m$ altered paths (the purple distance) using $n\ll m$ paths from $E_n$ (the blue distance).
%     The convex optimization Eq. (\ref{eq:kl_min_node}) or (\ref{eq:kl_min_link}) minimizes the residual.
%     }
%     \label{fig:kl_path_approx}
% \end{figure}
% The KL-divergence similar to that in GNNExplainer.
% Note that the contributions $z_j(G_0)$, $j=1,\dots, c$ make sure that the paths in $E_n$ are added to the computation graph derived from $G_0$, rather than some other starting graph snapshot, to approximate $G_1$.
% In other words, $z_j(G_0^\prime)$ can replace $z_j(G_0)$
% to find the contributions of the paths when added to a different starting graph $G_0^\prime$ (such as the empty graph) to approximate a different prediction change when $G_0^\prime\to G_1$.
%, the KL-divergence in Eq. (\ref{eq:kl_path_approx}) will not explain the change from $G_0$ to $G_1$.
% The KL-divergence used in prior explanation methods~\cite{gnnexplainer} does not have the constraint over the starting graph and thus cannot explain evolving graphs.

%% optimization for path selection.
We optimize $E_n$ to minimize
the KL-divergence in Eq. (\ref{eq:kl_C}) with Eq. (\ref{eq:path_selection_C}).
Let $x(s;p)\in[0,1]$, $p=1,\dots, m$
be the weight of selecting path $p$ into $E_n$.
We solve the following problem:
% \begin{eqnarray}
%     \bx^\ast&=&\argmin_{\substack{\bx\in[0,1]^m\\\|\bx\|_1=n}}
%     \hspace{.1in} \sum_{j=1}^c \left(-\pr_j(G_1) \sum_{p=1}^{m} x_p C_{p,j}\right)\nonumber\\
%     \hspace{.1in}&+&\log \sum_{j^\prime=1}^{c}\exp\left(z_{j^\prime}(G_0)+\sum_{p=1}^m x_p C_{p,j^\prime}\right).
%     \label{eq:kl_min_node}
%     % \textnormal{s.t.} && \sum_{p=1}^m x_p = n.\label{eq:path_constraint_node}
% \end{eqnarray}
\begin{equation}
    \min_{\substack{\myvect{\bx}(s)\in[0,1]^m\\\|\myvect{\bx}(s)\|_1=n}}
    \hspace{.1in} 
        \mathbb{E}_{j\sim\pr(Y=j|G_1)}[\mathbf{1}^\top (C_{:j}(G_1)- C_{:j}(\myvect{\bx}(s))]
         + \log \sum_{j=1}^c \exp\{z_j(G^\ast)+\mathbf{1}^\top C_{:j}(\myvect{\bx}(s))\}
    \label{eq:kl_min_C}
    % \text{s.t.} \hspace{.1in} &
        % \Delta C_{j:}(G_0, G(t)) = \Delta C_{J}(G_0,G_1) \odot [\mathbf{1}_{1\times c}\otimes \myvect{\bx}(t)].
%     \min_{\substack{\overrightarrow{\bx}\in[0,1]^m\\\|\overrightarrow{\bx}\|_1=n}}
%     \hspace{.1in} & \sum_{j=1}^c \left(-\pr(Y=j|G_1) \mathbf{1}^\top \Delta C_J(G_0,G(t))\right)
%     &- \log \sum_{j=1}^c \exp\left\{z_j(G^\ast)+\mathbf{1}^\top
% [C_{j:}(G_0)+\Delta C_{j:}(G_0, G(t))]\right\}
%     \label{eq:kl_min_C_2}
\end{equation}
where $C_{:j}(\myvect{\bx}(s))=C_{:j}(G_0) + \Delta C_{:j}(G_0, G(s))$ is a vector of path contributions to the logit of class $j$. $\Delta C_{:j}(G_0, G(s))$ is parameterized by Eq. (\ref{eq:path_selection_C}) and is a function of $\myvect{\bx}(s)$.
The constants $\log Z(G_1)$ is ignored from Eq. (\ref{eq:kl_C}) as $G_1$ is fixed.
% The above optimization is the same as minimizing $D_{\text{KL}}(\pr(Y|G(s))||\pr(Y|G_1))$,
% with $\pr(Y|G(s))$ computed on $G(s)$ defined by $\gamma(s)$.
% The first term is linear in $\bx$ and the second term is the composition of a log-sum-exp function (convex and non-decreasing) and a linear function of $\bx$. The box and $\ell_1$-norm constraints are linear in $\bx$. Therefore,
The linear constraint ensures the total probabilities of the selected edges is $n$.
The optimization problem is convex and has a unique optimal solution.
We select the paths with the highest $\myvect{\bx}(s)$ values to constitute a curve $\gamma(s)$ that explains the change from $\pr(Y|G_0)$ to $\pr(Y|G_1)$ as $\gamma(s)$ approaches $\pr(Y|G_1)$.
% The tangent $\myvect{\bx}(s)$ selects a particular direction to approach $\pr(Y|G_1)$, so that only the selected paths in $E_n$ are involved in $\gamma(s)$ to concisely explain how $\pr(Y|G_0)$ approaches $\pr(Y|G_1)$ locally.
Concerning the Riemannian metric in Eq. (\ref{eq:metric}), 
the above optimization does not change the Riemannian metric $I(\textbf{vec}(C_J(G_1)))$ at $\pr(Y|G_1)$
since the objective function is based on the KL-divergence of distributions generated by the non-linear softmax mapping,
while $C_{:j}(\myvect{\bx}(s))$ vary in the extrinsic coordinate system with $\myvect{\bx}(s)$.
% since $\pr(Y|G_1)$ is what we aim to approach on the manifold.
% \section{Theoretical analysis}

% \textcolor{red}{Yazheng: Prove that the GNN-LRP is a special case of our method.}\\

% \textcolor{red}{Yazheng: time complexity of our method, especially when a node has high degree.
% Streaming algorithm: worst-case and best-case.}\\

\section{Experiments}
\label{sec:exp}
%% These figures are for simulated graph evoluation and can be placed in the supplementary.
%%
% \begin{figure*}[ht]
%     \centering
%   % row 1: cora, citeseer, pubmed
%   \subfloat{\includegraphics[width = 0.25\textwidth]{figs/cora_fidelity.pdf}
%   }
%   \subfloat{\includegraphics[width = 0.25\textwidth]{figs/citeseer_fidelity.pdf}
%   }
%   \subfloat{\includegraphics[width = 0.25\textwidth]{figs/}
%   }\\
%   % row 2: Amazon-C, Amazon-P, Coauthor-C, Coauthor-P
%   \subfloat{\includegraphics[width = 0.25\textwidth]{figs/amazon_electronics_computers_fidelity.pdf}
%   }
%   \subfloat{\includegraphics[width = 0.25\textwidth]{figs/amazon_electronics_photo_fidelity.pdf}
%   }
%   \subfloat{\includegraphics[width = 0.25\textwidth]{figs/ms_academic_cs_fidelity.pdf}
%   }
%   \subfloat{\includegraphics[width = 0.25\textwidth]{figs/ms_academic_phy_fidelity.pdf}
%   }
%   \vspace{-.1in}
%   \caption{\small Mean \textbf{Fidelity}$_\textnormal{KL}^{-}$ over all datasets.
%   Standard deviations are in the Appendix. AxiomPath-Convex has the best performance.
%   }
% \label{fig:overall}
% \end{figure*}

\noindent\textbf{Datasets and tasks}.
% We use naturally evolving graphs with time stamps
% from the spam/rumor detection and social network analysis tasks for node classification,
% and graphs from bitcoin fraud detection and online community analysis tasks for link prediction. 
We study node classification task
on evolving graphs
on the YelpChi, YelpNYC, YelpZip~\cite{Yelpdataset}, Pheme ~\cite{zubiaga2017exploiting} and Weibo~\cite{ma2018rumor} datasets,
and study the link prediction task on the BC-OTC,
% \footnote{\url{http://snap.stanford.edu/data/soc-sign-bitcoin-otc.html}},
BC-Alpha,
% \footnote{\url{http://snap.stanford.edu/data/soc-sign-bitcoin-alpha.html}},
and UCI datasets.
% \footnote{\url{http://konect.cc/networks/opsahl-ucsocial}}
These datasets have time stamps and the graph evolutions can be identified. 
The molecular data (MUTAG ~\cite{mutag} is used for the graph classification.
% A molecule is represented as a graph of atoms where an edge represents two bounding atoms.
In searching molecules,
slight perturbations are applied to molecule graphs~\cite{You2018NIPS}.
We simulate the perturbations by randomly add or remove edges to create evolving graphs.
Appendix~\ref{sup:datasets} gives more details. 
\noindent\textbf{Experimental setup}.
For each dataset, we optimize a GNN parameter $\bt$ on the training set of static graphs, using labeled nodes, edges, or graphs, depending on the tasks.
% Then we run the trained model on test data where graphs are evolving. 
For each graph snapshot except the first one,
target nodes/edges/graphs 
with a significantly large $D_{\text{KL}}(\pr(Y|G_0)||\pr(Y|G_1))$ are collected and the change in $\pr(Y|G)$ is explained.
% \textcolor{red}{We identify subsets of nodes/edges/graphs that have a significant large KL($\pr(Y|G_0)||\pr(Y|G_1)$) for evaluation. The selection is given in the Appendix.}
We run Algorithm~\ref{alg:node} to calculate the contribution matrix $C_J(G)$ for each node $J\in{\cal V}^\ast$. We use the cvxpy library~\cite{diamond2016cvxpy} to solve the constrained convex optimization problem in Eq. (\ref{eq:kl_min_C}), Eq.(\ref{eq:kl_min_C_link}) and Eq.(\ref{eq:kl_min_C_graph}).
This method is called ``\textbf{AxiomPath-Convex}''.
We also adopt the following baselines.
\begin{itemize}[leftmargin=*]
\item \noindent\textbf{Gradient}.
Grad computes the gradients of the logit of the predicted class $j$ with the maximal $\pr(Y=j|G)$ on $G_0$ and $G_1$, respectively.
Each computation path is assigned the sum of gradients of the edges on the paths as its importance.
The contribution of a path to the change in $\pr(Y|G)$ is the difference between the two path importance scores computed on $G_0$ and $G_1$.
If a path only exists on one graph,
the importance of the path is taken as the contribution. 
All paths with top importance are selected into $E_n$.

\item \noindent\textbf{GNNExplainer} (GNNExp)~\cite{gnnexplainer} is designed to explain GNN predictions for node and graph classification on static graphs.
It weight edges on $G_1$
to maximally preserve $\pr(Y|G_1)$ regardless of $\pr(Y|G_0)$.
Paths are weighted and selected as for Grad,
with edge weights calculated using GNNExplainer.

\item \noindent\textbf{GNN-LRP} adopts the back-propagation attribution method LRP to GNN~\cite{gnnlrp}.  
% for node classification and link prediction on static graphs.
It attributes the class probability $\pr(Y=j|G_1)$ to input neurons regardless of $\pr(Y|G_0)$.
It assigns an importance score to paths and top paths are put in $E_n$.

\item \textbf{DeepLIFT}~\cite{shrikumar17} can attribute the log-odd between two probabilities $\pr(Y=j|G_0)$ and $\pr(Y=j'|G_1)$, where $j\neq j'$.
% It can handle zero activations ignored by GNN-LRP.
For a target node or edge or graph,
if the predicted class changes,
the difference between a path's contributions
to the new and original predicted classes is used to rank and select paths.
If the predicted class remains the same but the distribution changes,
a path's contributions to the same predicted class is used.
Only paths from $\Delta W_J(G_0, G_1)$ or in $\Delta W_I(G_0, G_1) \cup \Delta W_J(G_0, G_1)$ or in $\cup_{J \in {\mathcal V}}\Delta W_J(G_0,G_1)$ are ranked and selected.
% From Eqs. (\ref{eq:kl_min_C_1})-(\ref{eq:kl_min_C_2}),
% we see that AxiomPath-Convex comprehensively evaluate the distributional difference in $\pr(Y|G_0)$ and $\pr(Y|G_1)$.

\item \textbf{AxiomPath-Topk} is a variant of AxiomPath-Convex. It selects the top paths $p$ from $\Delta W_J(G_0, G_1)$ or $\Delta W_I(G_0, G_1) \cup \Delta W_J(G_0, G_1)$ or $\cup_{J \in {\mathcal V}}\Delta W_J(G_0,G_1)$ with the highest 
contributions $\Delta C_J(G_0,G_1) \mathbf{1}$, where $\mathbf{1}$ is an all-1 $c\times 1$ vector. 
This baseline works in the Euclidean space spanned by the paths as coordinates and rely on linear differences in $C(G)$
rather than the nonlinear movement from $\pr(Y|G_0)$ to $\pr(Y|G_1)$.
% This baseline will demonstrate the importance of the convex programming.
\item \textbf{AxiomPath-Linear} optimizes the AxiomPath-Convex objectives without the last log terms, leading to a linear programming.
% The resulting optimal $\bx^\ast$ is processed in the same way as AxiomPath-Convex.
\end{itemize}

\noindent\textbf{Quantitative evaluation metrics}.
% We design two metrics based on KL-divergence to evaluate how the optimal curve $\gamma(s)$ on the manifold ${\cal M}$ locally approximate the movement of $\pr(Y|G(s))$ towards $\pr(Y|G_1)$.
% Let $\pr(Y|G(s))$ be the class distribution computed on the computation graph for $G_0$ with paths from $E_n$ removed or added, depending on an edge is added or removed from $G_0$ to obtain $G_1$.
Let $\pr(Y|\neg G(s))$ be computed on the computation graph of $G_1$ with those from $E_n$ disabled.
% , depending on an edge is added to or remove from $G_1$ to obtain $G_0$.
That should bring $G_1$ close to $G_0$ along $\gamma(s)$ so that $\mathbf{KL}^{+}=D_{\text{KL}}{\pr_J(G_0)}{\pr_J(\neg G(s))}$ should be small if $E_n$ does contain the paths vital to the evolution. 
Similarly, we expect $\pr_J(G(s))$ to move close to $\pr_J(G_1)$ after the paths $E_n$ are enabled on the computation graph of $G_0$,
and $\mathbf{KL}^{-}=\kl{\pr_J(G_1)}{\pr_J( G(s))}$ should be smaller.
% $\pr_J(\neg G_n)$ is the class distribution computed on the computation graph for $G_1$,
% but remove/add any selected paths that are \textit{added}/\textit{removed} due to some added/removed edges as $G_0\to G_1$.
%  \textcolor{red}{$j$ and $j^{'}$ denotes the predicted class in $G_0$ and $G_1$.}
Intuitively,
if $E_n$ indeed contains the more salient altered paths that turn $G_0$ into $G_1$,
the less information the remaining paths can propagate, the more similar should $G_n$ be to $G_1$ and $\neg G_n$ be to $G_0$, and thus the \textit{smaller} the KL-divergence.
Prior work~\cite{Suermondt1992,yuan2020xgnn,gnnexplainer} use KL-divergence to measure the approximation quality of a static predicted distribution $\pr(Y|G)$,
while the above metrics evaluate how distribution on the curve $\gamma(s)$ approach the target $\pr(Y|G_1)$.
% and the difference prob in the numerator.
% The denominator is for normalization since the total change can be of different scales among target nodes/edges.
% At one extreme,
% $E_n$ contains no salient paths so that the numerator is  close to $\kl{\pr_J(G_1)}{\pr_J(G_0)}$ and the fidelity is 1 (the \textit{worst}).
% At the other extreme, $\neg G_n$ degrades to $G_0$ after eliminating the effect of the most salient altered paths and thus the fidelity is 0 (the \textit{best}).
% The metric is different from the objective function in Eqs. (\ref{eq:kl_min_node})  and (\ref{eq:kl_min_link}),
% so that AxiomPath-Convex does not have the privilege over the baselines due to the evaluation metric.
% With removed edges only from $G_0$, we reverse the two graph snapshots $G_0$ and $G_1$ so that the evolution $G_1\to G_0$ can be treated as the case of adding edges only.
% \textcolor{red}{How this metric handle the mixture of added and deleted edges?}
A similar metric can be defined for the link prediction task and the graph classification task, where the KL-divergence is calculated using predicted distributions over the target edge or graph. 
The target nodes (links or graphs ) are grouped based on the number altered paths in $\Delta W_J(G_0, G_1)$ for the results to be comparable,
since alternating different number of paths can lead to significantly different performance.
For each group, we let $n=|E_n|$ range in a pre-defined 5-level of explanation simplicity and all methods are compared under the same level of simplicity.
Appendix~\ref{sup:setup} and Appendix ~\ref{sup:metrics} gives more details of the experimental setup.
\subsection{Performance evaluation and comparison}
We compare the performance of the methods on three tasks (node classification, link prediction and graph classification) under different graph evolutions (adding and/or deleting edges).
\begin{figure*}[t]

  % row 1:
  % adding edges only
  \centering
  \subfloat{
    \includegraphics[width = 0.28\textwidth]{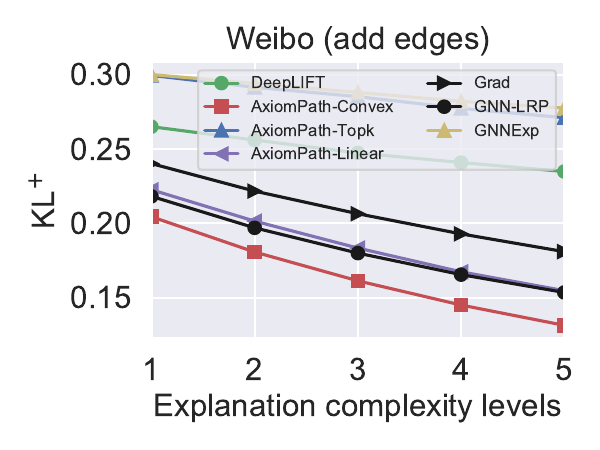}
  }
  \subfloat{
    % adding edges only
    \includegraphics[width = 0.28\textwidth]{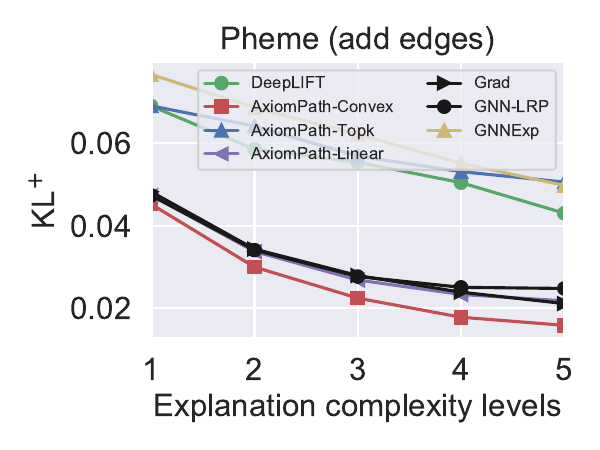}
  }
  \subfloat{
    % adding edges only
    \includegraphics[width = 0.28\textwidth]{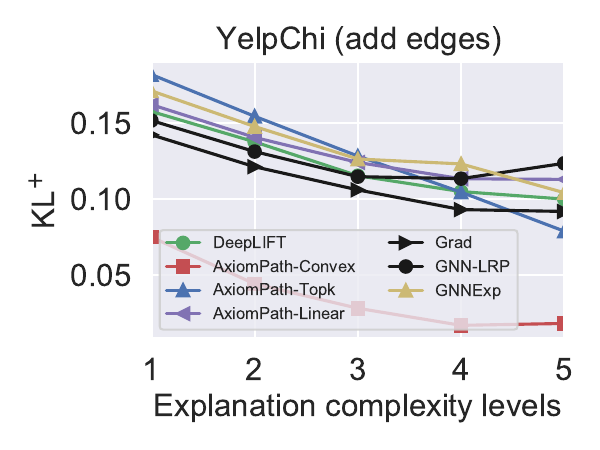}
  }\\
%   \subfloat{
%     % adding edges only
%     \includegraphics[width = 0.24\textwidth]{iclr2023/figs/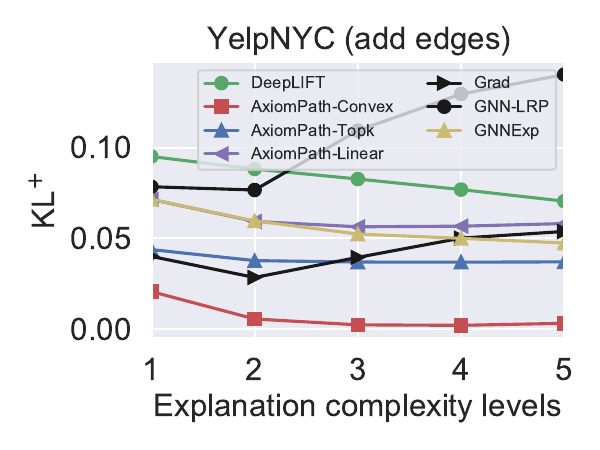}
%   }\\
  % row 2:
  % removing edges only
  \subfloat{
    % removing edges only
    \includegraphics[width = 0.28\textwidth]{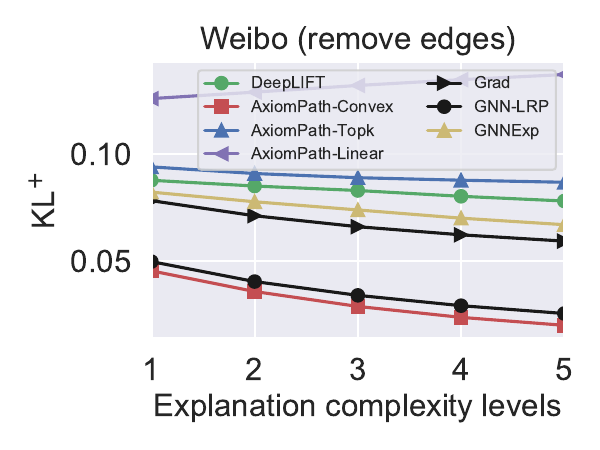}
  }
  \subfloat{
    % removing edges only
    \includegraphics[width = 0.28\textwidth]{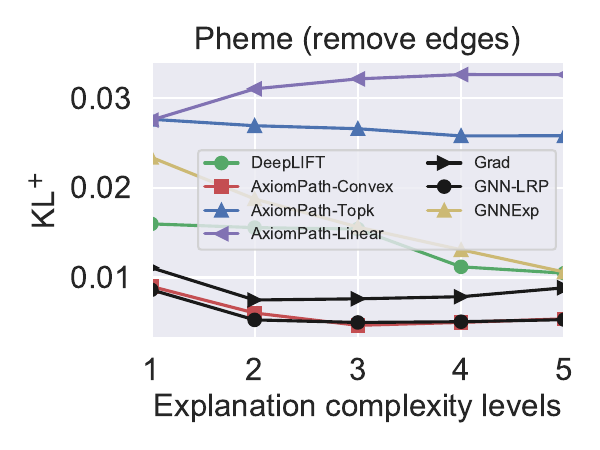}
  }
  \subfloat{
    % removing edges only
    \includegraphics[width = 0.28\textwidth]{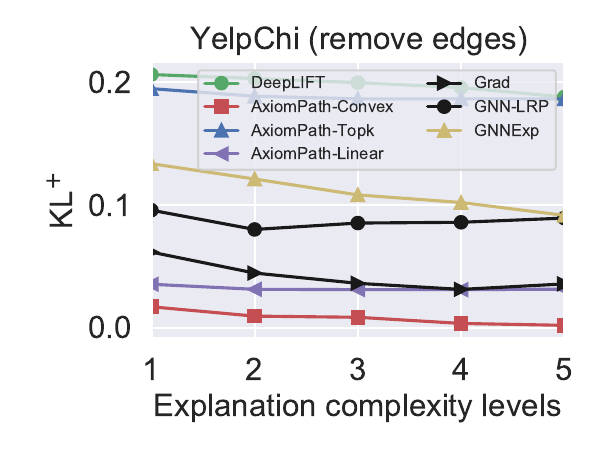}
  }\\
%   \subfloat{
%     % removing edges only
%     \includegraphics[width = 0.24\textwidth]{iclr2023/figs/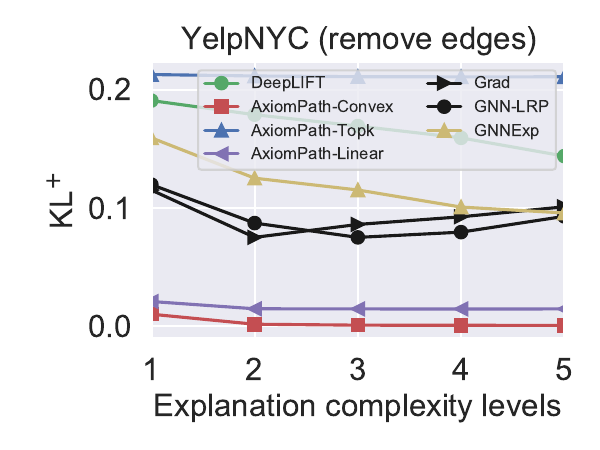}
%   }\\
  % row 3: both adding and removing edges
  \subfloat{
    % mixture of adding and removing edges
    \includegraphics[width = 0.28\textwidth]{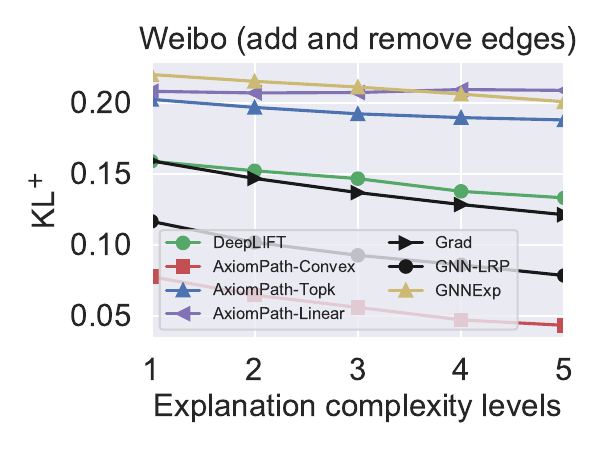}
  }
  \subfloat{
    % mixture of adding and removing edges
    \includegraphics[width = 0.28\textwidth]{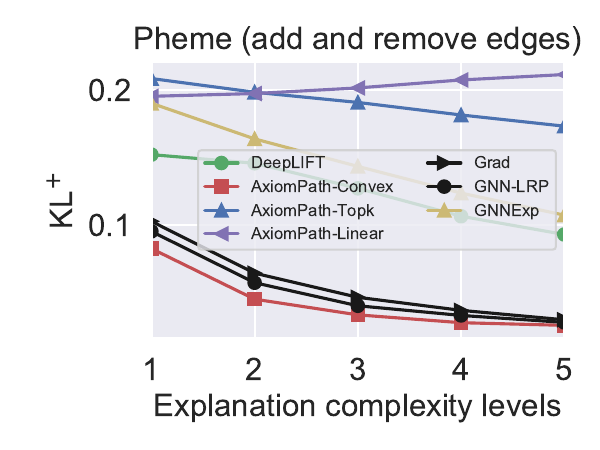}
  }
  \subfloat{
    % mixture of adding and removing edges
    \includegraphics[width = 0.28\textwidth]{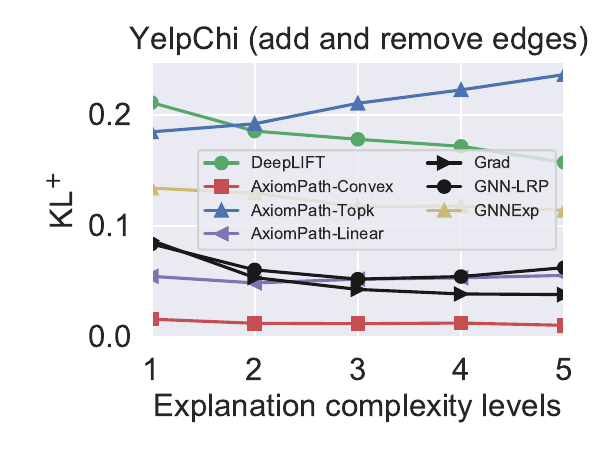}
  }
%   \subfloat{
%     % mixture of adding and removing edges
%     \includegraphics[width = 0.24\textwidth]{iclr2023/figs/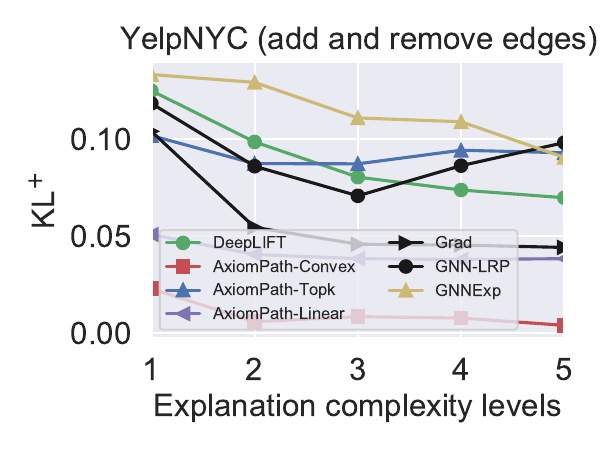}
%   }
\caption{\small Performance in KL$^{+}$ as $G_0\to G_1$ on the node classification tasks.
Each column is a dataset and
each row is one type of evolution.
% More results are in Figure~\ref{fig:other_node_kl+}, ~\ref{fig:other_node_kl-},
% and~\ref{fig:overall_node_kl-}.
}
\label{fig:overall_node}
\end{figure*}
% \subsubsection{Node classification}
For the node classification, in Figure~\ref{fig:overall_node}, we demonstrate the effectiveness of the salient path selection of AxiomPath-Convex.
For each dataset,
we report the average $\textnormal{KL}^{+}$ over target nodes/edges on three datasets (results with the $\textnormal{KL}^{-}$ metric, and results on the remaining datasets are given Figure~\ref{fig:other_node_kl+}, \ref{fig:overall_node_kl-}, and \ref{fig:other_node_kl-} in the Appendix).
% (the standard deviations are reported in the Appendix).
% The lower the fidelity, the better (since we intend to remove the important added paths or add the important removed paths identified by $E_n$).
From the figures, we can see that AxiomPath-Convex has the smallest $\text{KL}^+$ 
over all levels of explanation complexities and over all datasets.
On six settings (Weibo-adding edges only and mixture of adding and removing edges, and all cases on YelpChi),
the gap between AxiomPath-Convex and the runner-up  is significant.
On the remaining settings, AxiomPath-Convex slightly outperforms or is comparable to the runner-ups. AxiomPath-Topk and AxiomPath-Linear underperform AxiomPath-Convex, indicating that modeling the geometry of the manifold of probability distributions has obvious advantage over working in the linear parameters of the distributions.
% The third best baseline is AxiomPath-Linear, indicating the power of using optimization for selecting explaining paths, and more importantly, the need to approximate the KL-divergence in full using non-linear convex optimization.
% GNN-LRP lacks consistency: on all datasets, it has increasing fidelity when the complexity level goes up after some points.
% AxiomPath-Topk underperforms AxiomPath-Linear, indicating that paths must be selected jointly rather than independently.
% Furthermore, though descending at a slower pace, AxiomPath-Topk and AxiomPath-Linear consistently decreases the fidelity as more important paths are identified, indicating the ``consistency'' of the proposed attribution method shared by the AxiomPath-$\ast$ family.
% Grad always (and sometimes for GNN-LRP) fails to find the right paths to explain the change, as they are designed for static graphs.
% \subsubsection{Link prediction}
On two link prediction tasks and one graph classification task,
in Figure ~\ref{fig:overall_link}, we show that AxiomPath-Convex significantly uniformly outperform the runner-ups (results on the remaining link prediction task and regarding the $\text{KL}^{-}$ metrics are give in the Figure~\ref{fig:other_node_kl+}, \ref{fig:other_node_kl-} and ~\ref{fig:overall_link_kl-} in the Appendix).
% Grad always fails to find the right paths to explain the change, as they are designed for static graphs.
DeepLIFT and GNNExplainer always, and Grad sometimes,
fails to find the salient paths to explain the change,
as they are designed for static graphs.
In Appendix~\ref{sec:case_study}, we provide cases where AxiomPath-Convex identifies edges and subgraphs that help make sense of the evolving predictions.
In Appendix~\ref{sec:running_time}, we analyze how long each component of the AxiomPath-Convex algorithms take on several datasets. In Appendix ~\ref{sup:limit}, we analysis the limit of our method.
\begin{figure*}
\centering
% row 1: UCI 
  \subfloat{
    % adding edges only
    \includegraphics[width = 0.28\textwidth]{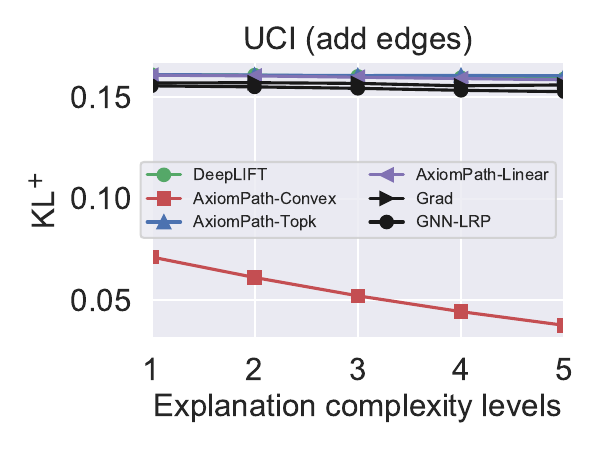}
  }
%   \subfloat{
%     % removing edges only
%     \includegraphics[width = 0.24\textwidth]{iclr2023/figs/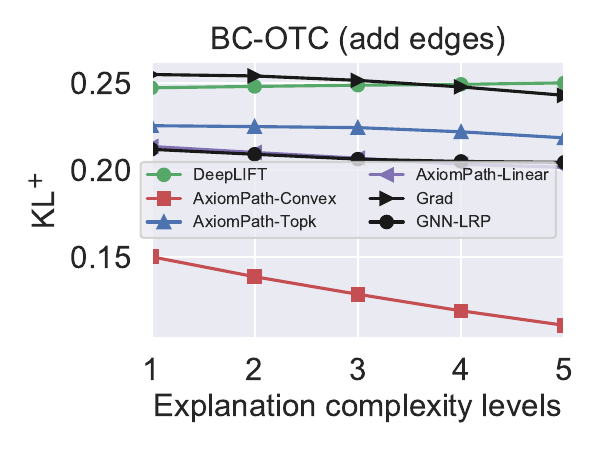}
%   }
  \subfloat{
    % mixture of adding and removing edges
    \includegraphics[width = 0.28\textwidth]{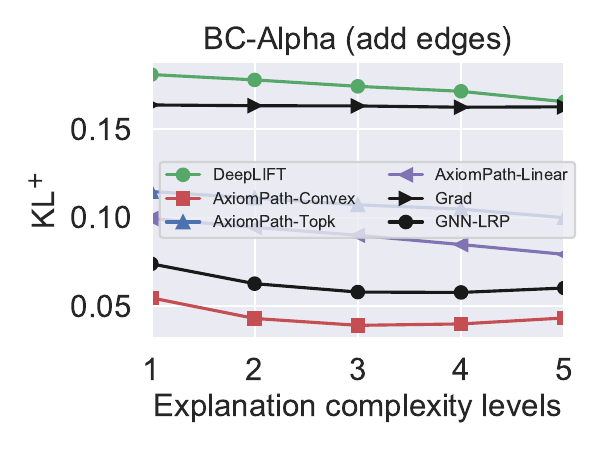}
  }
  \subfloat{
    % mixture of adding and removing edges
    \includegraphics[width = 0.28\textwidth]{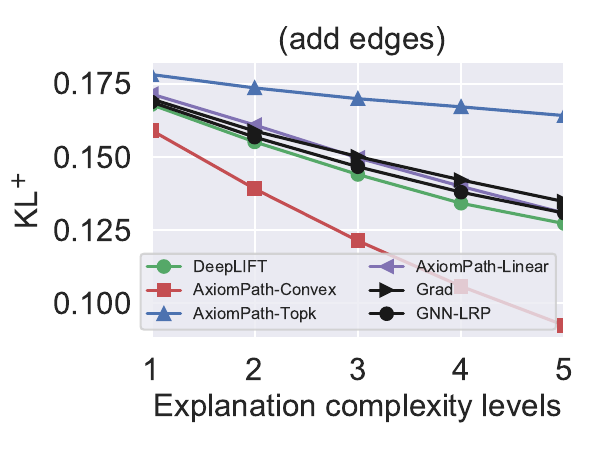}
  }
  \\
  % row 2: bitcoinalpha 
  \subfloat{
    % adding edges only
    \includegraphics[width = 0.28\textwidth]{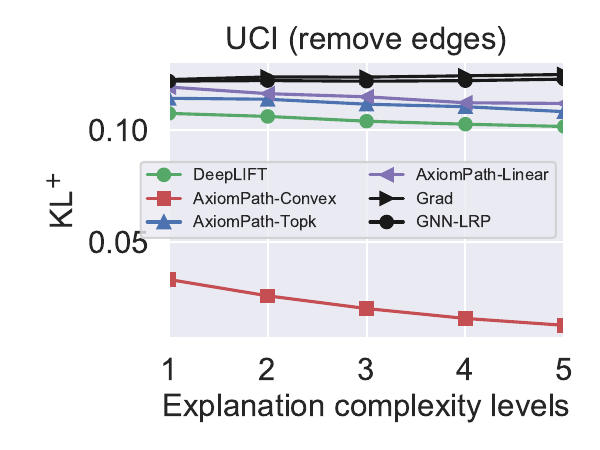}
  }
%   \subfloat{
%     % removing edges only
%     \includegraphics[width = 0.24\textwidth]{figs/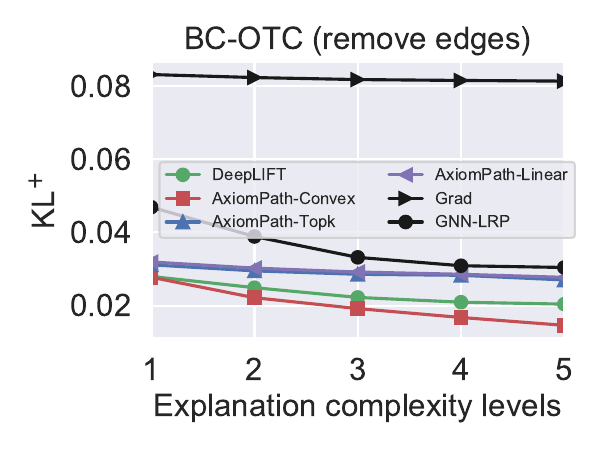}
%   }
  \subfloat{
    % mixture of adding and removing edges
    \includegraphics[width = 0.28\textwidth]{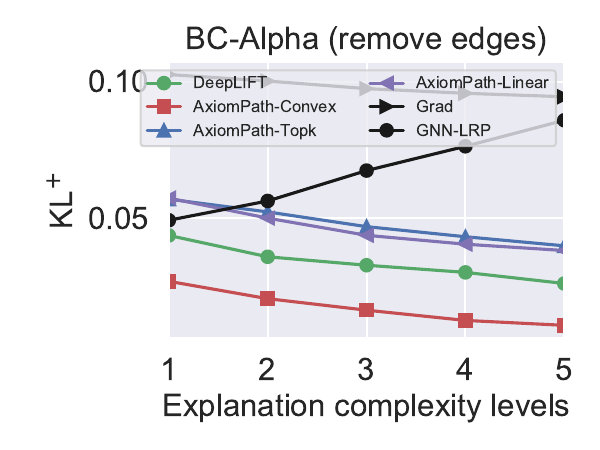}}
  \subfloat{
    % mixture of adding and removing edges
    \includegraphics[width = 0.28\textwidth]{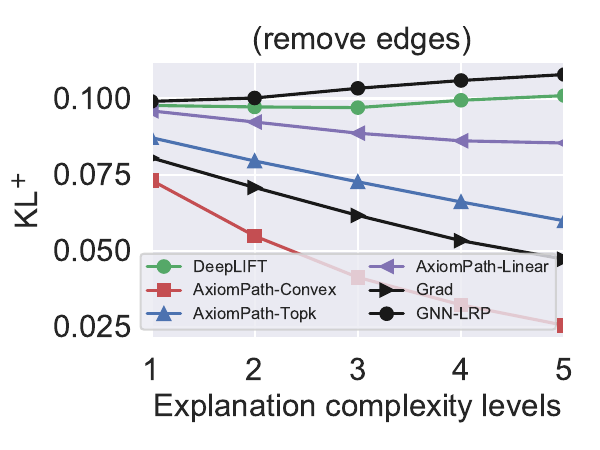}}
    \\
     \subfloat{
    % adding edges only
    \includegraphics[width = 0.28\textwidth]{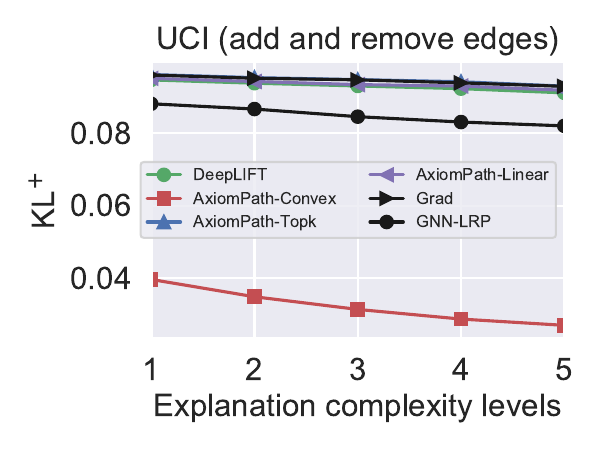}
  }
%   \subfloat{
%     % removing edges only
%     \includegraphics[width = 0.24\textwidth]{figs/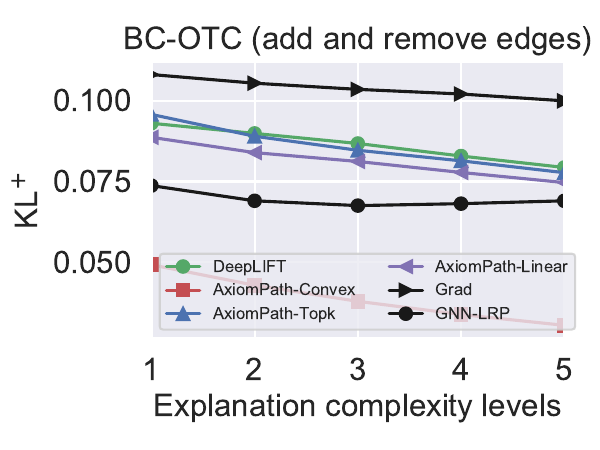}
%   }
  \subfloat{
    % mixture of adding and removing edges
    \includegraphics[width = 0.28\textwidth]{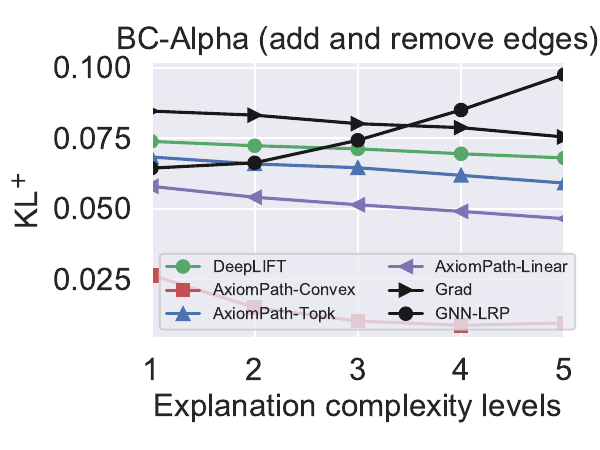}}
  \subfloat{
    % mixture of adding and removing edges
    \includegraphics[width = 0.28\textwidth]{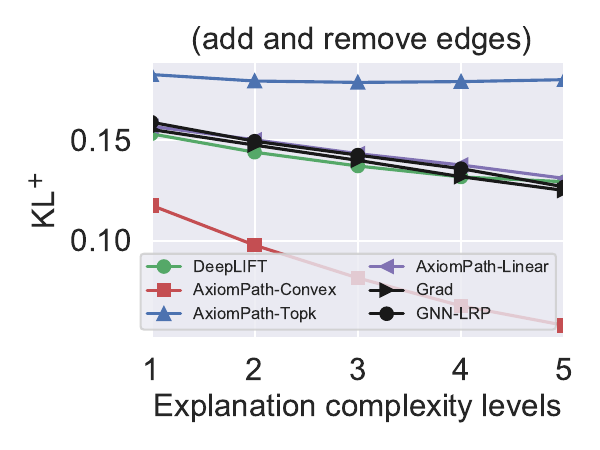}}

\caption{\small Average ${KL}^{+}$ on the link prediction and graph classification tasks.
Each row is a dataset and each column is one evolution setting.
% More results are in Figure~\ref{fig:overall_link_kl-} in Appendix.
}
\label{fig:overall_link}
\end{figure*}

\section{Related work}
Differential geometry of probability distributions are explored in the field called ``information geometry''~\cite{Amari2016}, which has been applied to
optimization~\cite{Chen2020,Osawa_2019_CVPR,Kunstner2019,Seroussi2022,Soen2021}, machine learning~\cite{Lebanon03uai,Karakida_2020,Nock2017,Bernstein2020}, and computer vision~\cite{Hang2018}.
However, taking the geometric viewpoint of GNN evolution and its explanation is novel and has not been observed within the information geometry literature and explainable/interpretable machine learning.

% what is explained
Prior work explain GNN predictions on static graphs.
There are methods explaining the predicted class distribution of graphs or nodes using mutual information~\cite{gnnexplainer}.
Other works explained the logit or probability of a \textit{single} class~\cite{gnnlrp}.
CAM, GradCAM, GradInput, SmoothGrad, IntegratedGrad, Excitation Backpropagation, and attention models are evaluated in~\cite{Wiltschko2020,Pope2019_cvpr} with the focus on explaining the static prediction of a single class.
CAM and Grad-CAM are not applicable since they cannot explain node classification models~\cite{Yuan2020ExplainabilityIG}.
DeepLIFT~\cite{shrikumar17} and counterfactual explanations~\cite{Lucic2021CFGNNExplainerCE} do not explain multi-class distributions change over arbitrary graph evolution, as they assume $G_0$ is fixed at the empty graph. 
% use what to explain
To compose an explanation,
simple surrogate models~\cite{Vu2020PGMExplainerPG} edges~\cite{gnnlrp,gnnexplainer,shrikumar17,Lucic2021CFGNNExplainerCE}, subgraphs~\cite{subgraphx,yuan2020xgnn} or graph samples~\cite{yuan2020xgnn} , and nodes~\cite{Pope2019_cvpr} have been used to construct explanations.
% Certain prior methods are not suitable as baselines.
These works cannot axiomatically isolate contributions of paths that causally lead to the prediction changes on the computation graphs.
% how to select salient elements.
% Most work simply rank elements using importance scores~\cite{gnnlrp,gnnexplainer} with the underlying assumption that their contributions are independent.
% Negative contributions were turned into absolute values or norms~\cite{Pope2019_cvpr}, or even be discarded~\cite{shrikumar17}, and it is not clear how negative importance scores are handled in~\cite{Wiltschko2020}.
% AxiomPath-Convex addresses both positive and negative contributions while approximating a non-linear fidelity metric. 
% evaluation
Most of the prior work evaluates the faithfulness of the explanations of a static prediction.
To explain distributional evolution, faithfulness should be evaluated based on approximation of the curve of evolution on the manifold so that the geometry will be respected.
None prior work has taken a differential geometric viewpoint of distributional evolution of GNN.
Optimally selecting salient elements to compose a simple and faithful explanation is less focused.
With the novel reparameterization of curves on the manifold, we formulate a convex programming to select a curve that can concisely explain the distributional evolution while respecting the manifold geometry.
\section{Conclusions}
We studied the problem of explaining change in GNN predictions over evolving graphs.
We addressed the issues of prior works that treat the evolution linearly.
The proposed model view evolution of GNN output with respect to graph evolution as a smooth curve on a manifold of all class distributions.
This viewpoint help formulate a convex optimization problem to select a small subset of paths to explain the distributional evolution on the manifold.
Experiments showed the superiority of the proposed method over the state-of-the-art.
In the future,
we will explore more geometric properties of the construct manifold to enable a deeper understanding of GNN on evolving graphs.

\section*{Acknowledgements}
Sihong was supported in part by the National Science Foundation under NSF Grants IIS-1909879, CNS-1931042, IIS-2008155, and IIS-2145922. Yazheng Liu and Xi Zhang are supported by the Natural Science Foundation of China (No.61976026) and the 111 Project (B18008). This work was partially sponsored by CAAI-Huawei MindSpore Open Fund. Compute services from Hebei Artificial Intelligence Computing Center. Any opinions, findings, conclusions, or recommendations expressed in this document are those of the author(s) and should not be interpreted as the views of any U.S. Government. 
% \newpage
\bibliography{iclr2023_conference}
\bibliographystyle{iclr2023_conference}
\appendix
\section{Appendix}
You may include other additional sections here.
\subsection{Misc. proofs}
\label{appendix:kl_C}
Here we give the detailed derivations of Eq. (\ref{eq:kl_C}).
\begin{align}
&D_{\text{KL}}(\pr(Y|G_1)||\pr(Y|G_0))
=\sum_{j=1}^c \pr(Y=j|G_1) \log[ \pr(Y=j|G_1)/\pr(Y=j|G_0)]\\
=&\sum_{j=1}^c \pr(Y=j|G_1) [z_j(G_1) - z_j(G_0)]-\log [Z(G_1)/ Z(G_0)]\nonumber\\
=&\sum_{j=1}^c \pr(Y=j|G_1) \mathbf{1}^\top \Delta C_{:j}(G_0,G_1)+\log Z(G_0)
 - \log \sum_{j=1}^c \exp\left\{z_j(G^\ast)+\mathbf{1}^\top
\underbrace{[C_{:j}(G_0)+\Delta C_{:j}(G_0, G_1)]}_{=C_{:j}(G_1)}\right\}\nonumber\\
=&\sum_{j=1}^c \pr(Y=j|G_1) \mathbf{1}^\top \Delta C_{:j}(G_0,G_1) - \log Z(G_1)
 + \log \sum_{j=1}^c \exp\left\{z_j(G^\ast)+\mathbf{1}^\top
C_{:j}(G_0)\right\}\nonumber
\end{align}

\subsection{Second-ordered approximation of the KL divergence with the Fisher information matrix}
\label{sec:fim}
To help understand Eq. (\ref{eq:metric}) that defines the Riemannian metric, we need to second-order approximation of the KL-divergence.
We reproduce the derivations from the note ``Information Geometry and Natural Gradients'' posted on \url{https://www.nathanratliff.com/pedagogy/mathematics-for-intelligent-systems} by Nathan Ratliff
(Disclaimer: we make no contribution to these derivations and the author of the note owns all credits).
In the following, the term $\theta$ should be understood as the vector $\textbf{vec}(C_J(G_1))$ and $\delta$ should be understood as the difference vector 
$\textbf{vec}(\Delta C_J(G_1,G_0))$ in Eq. (\ref{eq:metric}).
$x$ is understood as the random variable $Y$, the class variable, in our case.

\begin{equation}
\begin{aligned}
&\mathrm{KL}\left(p\left(x ; \theta\right) \| p\left(x ; \theta+\delta \right)\right) \\
&\approx \int p\left(x ; \theta\right) \log p\left(x ; \theta\right) d x \\
&-\int p\left(x ; \theta\right)\left(\log p\left(x ; \theta\right)+\left(\frac{\nabla_\theta p\left(x ; \theta\right)}{p\left(x ; \theta\right)}\right)^\top \delta+\frac{1}{2} \delta^\top\left(\nabla_{\theta}^2 \log p\left(x ; \theta\right)\right) \delta\right) d x \\
&=\underbrace{\int p\left(x ; \theta\right) \log \frac{p\left(x ; \theta\right)}{p\left(x ; \theta\right)} d x}_{=0}-\underbrace{\left(\int \nabla_\theta p\left(x ; \theta\right) d x\right)^\top \delta }_{=0} \\
&-\frac{1}{2} \delta^\top\left(\int p\left(x ; \theta\right) \nabla_{\theta}^2 \log p\left(x ; \theta_t\right)\right) \delta \nonumber
\end{aligned}
\end{equation}
% The first term, since it’s the KL-divergence between a distribution
% and itself, is zero, and the second term, because of the relation
% \nonumber

By assuming that the differentiation and integration in the second term can be exchanged, we have
\begin{equation}
\int \nabla p\left(x ; \theta\right) d x=\nabla \int p\left(x ; \theta\right) d x=\nabla 1=0
\nonumber
\end{equation}

\begin{equation}
\nabla^2 \log p\left(x ; \theta\right)=\frac{1}{p\left(x ; \theta\right)} \nabla^2 p\left(x ; \theta\right)-\nabla \log p\left(x ; \theta\right) \nabla \log p\left(x ; \theta\right)^\top
\nonumber
\end{equation}

\begin{equation}
\begin{aligned}
&\mathrm{KL}\left(p\left(x ; \theta\right) \| p\left(x ; \theta+\delta\right)\right) \\
&\approx-\frac{1}{2} \delta ^\top\left(\int p\left(x ; \theta\right) \nabla^2 \log p\left(x ; \theta\right) d x\right) \delta  \\
&=-\frac{1}{2} \delta^\top \underbrace{\left(\int \nabla^2 p\left(x ; \theta\right) d x\right)}_{=0} \delta \\
&+\frac{1}{2} \delta^\top \underbrace{\left(\int p\left(x ; \theta\right)\left[\nabla \log p\left(x ; \theta\right) \nabla \log p\left(x ; \theta\right)^\top \right] d x\right)}_{G\left(\theta_t\right)} \delta  \\ \nonumber
&
\end{aligned}
\end{equation}
The matrix $G\left(\theta\right)$ is known as the Fisher Information matrix.

\subsection{Optimize a curve on the link prediction task and graph classification task}
\label{sup:optimize}
Similar to node classification, according to the Eq.(~\ref{eq:kl_min_C}),
for the link prediction, we solve the following problem:
\begin{equation}
    \min_{\substack{\myvect{\bx}(s)\in[0,1]^m\\
    \myvect{\bx^{'}}(s^{'})\in[0,1]^{m^{'}}\\
    {\|\myvect{\bx}(s)+\myvect{{\bx}^{'}}(s^{'})\|}_1=n}}
    \hspace{.1in} 
        \mathbb{E}_{\pr(Y|G_1)}[C_{:l}(G_1)- C_{:l}(\myvect{\bx}(s),\myvect{\bx^{'}}(s^{'})) ]\bt_{\text{LP}}
         + \log \sum_{l=0}^1 \exp\{z_j(G^\ast)+C_{:l}(\myvect{\bx}(s),\myvect{\bx^{'}}(s^{'}))\}\bt_{\text{LP}}
    \label{eq:kl_min_C_link}
\end{equation}
where $C_{:l}(G_1)=[\mathbf{1}^\top C_{:i}(G_1);\mathbf{1}^\top C_{:j}(G_1)]$, $C_{:l}(\myvect{\bx}(s),\myvect{\bx^{'}}(s^{'}))=[\myvect{\bx}(s)^\top (C_{:i}(G_0) + \Delta C_{:i}(G_0, G(s)));\myvect{\bx^{'}}(s^{'})^\top (C_{:j}(G_0) + \Delta C_{:j}(G_0, G(s^{'})))]$.

For the graph classification, we solve the following problem:
\begin{equation}
    \min_{\substack{\myvect{\bx}(s)\in[0,1]^m\\
    {||\myvect{\bx}(s)||}_1=n}}
    \hspace{.1in} 
        \mathbb{E}_{\pr(Y|G_1)}[C_{:g}(G_1)- C_{:g}(\myvect{\bx}(s)]\bt_{\text{GC}}
         + \log \sum_{j=1}^c\{\frac{\exp z_j(G^\ast)}{|\mathcal V|} +C_{:g}(\myvect{\bx}(s))\}\bt_{\text{GC}}
    \label{eq:kl_min_C_graph}
\end{equation}
where $|\mathcal V|$ denotes the number of nodes in the graph, $C_{:g}(G_1)=\frac{\sum_{J\in {\mathcal V}}\mathbf{1}^\top C_{:j}(G_1)}{|\mathcal V|}$, $C_{:g}(\myvect{\bx_J}(s))=\frac{\sum_{J\in \mathcal V}\myvect{\bx}(s)^\top (C_{:j}(G_0) + \Delta C_{:j}(G_0, G(s))))}{|\mathcal V|}$. 

\subsection{Attributing the change to paths}
\label{section:Attributing}
We describe the computation of $C_{p,j}$ in the previous section.
\subsubsection{DeepLIFT for MLP}
DeepLIFT~\cite{shrikumar17} serves as a foundation.
% We will calculate $C_{p,j}$ for each path $p\in \Delta W_J(G_0, G_1)$ for each class $j$ by extending DeepLIFT which was originally designed for MLP.
Let the activation of a neuron at layer $t+1$ be $h^{(t+1)}\in\mathbb{R}$,
which is computed by $h^{(t+1)} = f([h^{(t)}_1,\dots,h^{(t)}_n])$,
Given the \textit{reference activation vector} $\mathbf{h}^{(t)}(0)=[h^{(t)}_1(0),\dots,h^{(t)}_n(0)]$ at layer $t$ at time 0, we can calculate the \textit{scalar} reference activation $h^{(t+1)}(0)=f(\mathbf{h}^{(t)}(0))$ at layer $t+1$.
The \textit{difference-from-reference} is $\Delta h^{(t+1)} = h^{(t+1)}-h^{(t+1)}(0)$ and $\Delta h^{(t)}_i=h_i^{(t)}-h^{(t)}_i(0)$, $i=1,\dots, n$.
With (or without) the 0 in parentheses indicate the reference (or the current) activations.
The contribution of $\Delta h^{(t)}_i$ to $\Delta h^{(t+1)}$ is $C_{\Delta h^{(t)}_i \Delta h^{(t+1)}}$ such that $\sum_{i=1}^n C_{\Delta h^{(t)}_i \Delta h^{(t+1)}}=\Delta h^{(t+1)}$ (preservation of $\Delta h^{(t+1)}$). 

%% the following two paragraphs may not be essential to the discussion of our method.
The DeepLIFT method defines multiplier and the chain rule so that given the multipliers for each neuron to each immediate successor neuron, DeepLIFT can compute the multipliers for any neuron to a given
target neuron efficiently via backpropagation. DeepLIFT defines the multiplier as:   
\begin{eqnarray}
\label{eq:multiplier for MLP}
m_{\Delta h^{(t)}_i \Delta  h^{(t+1)}}&=& C_{\Delta h^{(t)}_i \Delta h^{(t+1)}} \textbf{\big/} \Delta h^{(t)}_i\\
&=&\left\{
\begin{array}{lr}
     \theta^{(t)}_i & \textnormal{linear layer} \nonumber\\
    \Delta h^{(t+1)} \textbf{\big/}\Delta h^{(t)}_i & \textnormal{nonlinear activation}
\end{array}
\right.
\end{eqnarray}

If the neurons are connected by a linear layer, $C_{\Delta h^{(t)}_i \Delta h^{(t+1)}}=\Delta h^{(t)}_i \times \theta^{(t)}_i$ where $\theta^{(t)}_i$ is the element of the parameter matrix $\theta^{(t)}$
that multiplies the activation $h^{(t)}_i$ to contribute to $h^{(t+1)}$.
For element-wise nonlinear activation functions,
% (let neuron $h^{(t+1)}$ be a nonlinear transformation of its input $h^{(t)}_i$), 
we adopt the \textit{Rescale} rule to obtain the multiplier such that
$C_{\Delta h^{(t)}_i \Delta h^{(t+1)}}=\Delta h^{(t+1)}$.
%$m_{\Delta h^{(t)}_i \Delta  h^{(t+1)}}=\frac{\Delta h^{(t+1)}}{\Delta h^{(t)}_i}$.

DeepLIFT defines the chain rule for the multipliers as:
\begin{eqnarray}
m_{\Delta h^{(0)}_i \Delta  h^{(T)}}=\sum_l \dots \sum_j m_{\Delta h^{(0)}_i \Delta  h^{(1)}_l}\dots m_{\Delta h^{(T-1)}_j \Delta  h^{(T)}}
\label{eq:MLP chain}
\end{eqnarray}
 
%For a given input neuron $x$ with difference-from-reference
% $\Delta x$, and target neuron $y$ with difference-from-reference $\Delta y$, DeepLIFT defines the
% multiplier $m_{\Delta x \Delta y}$ as:
% \begin{eqnarray}
% m_{\Delta x \Delta y}=\frac{C_{\Delta x \Delta y}}{\Delta x}
% \end{eqnarray}

% Assuming that  we have an input layer with neurons $x_1, \dots, x_n$, a
% hidden layer with neurons $y_1, \dots, y_n$, and some target output neuron $t$. Given values for $m_{\Delta x_i \Delta y_j}$ and $m_{\Delta y_j \Delta t}$,
% the following definition of $m_{\Delta x_i \Delta t}$ is consistent with the
% summation-to-delta property. DeepLIFT refer to Eq. (\ref{eq:chain}) as the chain rule for multipliers.
% \begin{eqnarray}
% m_{\Delta x_{i} \Delta t}=\sum_{j} m_{\Delta x_{i} \Delta y_{j}} m_{\Delta y_{j} \Delta t}
% \label{eq:chain}
% \end{eqnarray}

\subsubsection{DeepLIFT for GNN}
\label{sec:deeplift_gnn}
We linearly attribute the change to paths by the linear rule and Rescale rule, even with nonlinear activation functions.
% The section focuses on node classification.
% For link prediction and graph classification, we can use the DEEPLIFT to obtain the multipliers of the linear layer(link prediction) and the average pooling(graph classification). Then the multipliers for the GNN layers can be handled similar to the node classification case. Finally, we obtain the multiplier by chain rule and obtain the $C_{p,j}$. 
% For link prediction, the logits of the edge$(I,J)$ is predicted by a linear layer and a sigmoid function, so that for class $\ell=0,1$, we can use the DeepLIFT to obtain the multipliers
% $m_{\Delta z_i \Delta z_{\ell}}$ and $m_{\Delta z_j \Delta z_{\ell}}$ for the neurons $i$ and $j$ in the two end nodes $I$ and $J$, respectively,
% and these multipliers can be handled similar to the node classification case.
% to explain the change in the logits $\Delta z_{\ell}$ due to the neurons $i$ and $j$.
% According to Eq. (\ref{eq:MLP chain}), we can 
% Then in the next section we will describe in detail how to obtain multipliers the the path contribution $C_{p,\ell}$ for GNN.

\begin{figure}[b]
    \centering
    \includegraphics[width=0.45\textwidth]{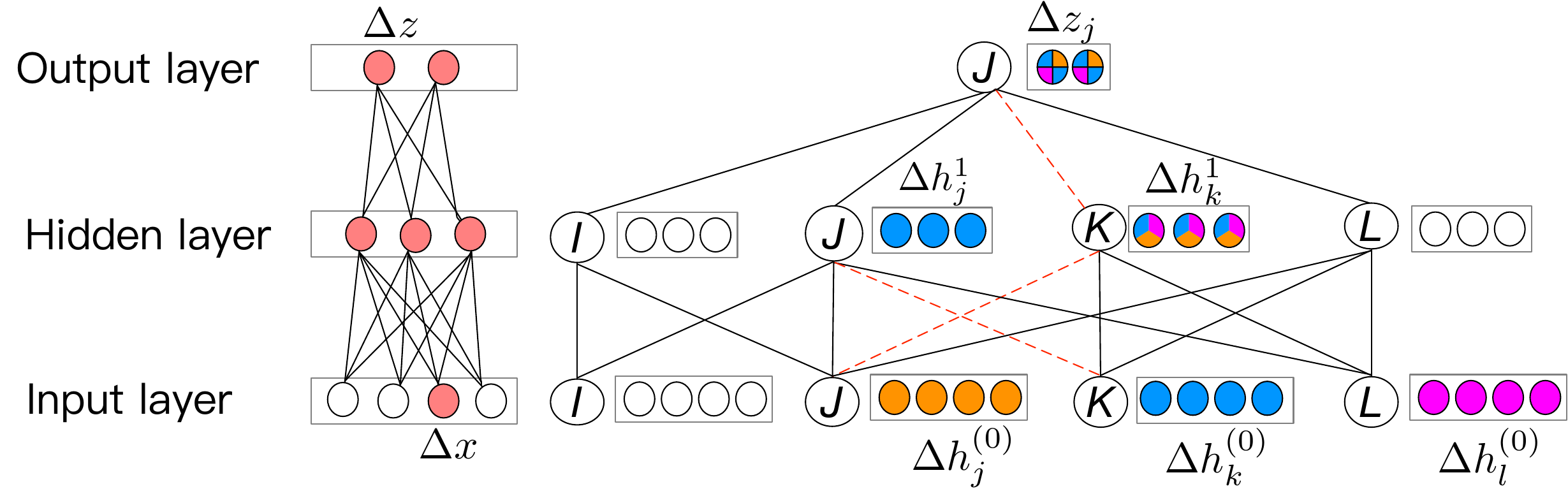}
    \caption{\small
    Circles in rectangles are neurons, and a neuron has a specific color if it contributes to the prediction change in a class.
    % , either through changes in the neuron activation or altered paths in the information flow. 
    \textit{Left}:
    DeepLIFT finds the contribution of an input neuron to the change in an output neuron of an MLP for link prediction,
    where the input layer is the output of a GNN.
    \textit{Right}:
    A two-layer GNN.
    The four colored quadrants in $\Delta z_j$ at the top layer, which can be the input layer to the MLP,
    can be attributed to the changes in the input neurons at the input layer (e.g., the two blue quadrants at $J$ at the top is attributed to the blue neurons in node $K$ at the input layer through paths $(K, K, J)$ and $(K, J, J)$.
    % Also, $j$ in the input layer and $l$ in the input layer aggregate messages to $j$ in the output layer through path $(J,K,J)$ and path $(L,K,J)$.
    % More details about how to obtain  the  difference-from-reference  of  neurons  in each  layer are in section DeepLIFT for GNN.
    % The details of the tracing can be found in the main texts.
%     For instance, $\Delta z_j$ is caused by the $\Delta h_k^{1}$ and $\Delta h_j^{1}$. $\Delta h_j^{1}= h_j^{1,G_1}- h_j^{1,G_0}$, it is due to the $h^{0}_K$ cannot be aggregated to J. $\Delta h_k^{1}=h_k^{1,G_1}$, because in $G_0$, $h_k^{1}$ cannot aggregate information to $h_j^{2}$. $h_k^{1,G_1}$ gathers the information of 
%   $h_k^{0},h_j^{0},h_l^{0}$, Hence, $\Delta h_k^{1}$ is due to the $h_k^{0},h_j^{0},h_l^{0}$. Therefore, we obtain difference-from-reference of the neurons in each layer according to the propagation rules of the GNN model
    }
    \label{fig:difference}
\end{figure}
When $G_0\to G_1$, there may be multiple added or removed edges, or both.
These seemingly complicated and different situations can be reduced to the case with a single added edge. 
First, any altered path can only have multiple added edges or removed edges but not both.
If there were a removed edge that is closer to the root than an added edge,
the added edge would have appeared in a different path
and the removed edge must be from an existing path leading to the root.
If there were an added edge closer to the root than a removed edge, the nodes after the removed edge have no contribution in $G_0$ and
the situation is the same as with added edges.
Second, a path with removed edges only when $G_0\to G_1$
can be treated as a path with added edges only when $G_1\to G_0$.
Lastly, as shown below,
only the altered edge closest to the root is relevant even with multiple added edges.
% Thus, $\Delta W_J{(G_0,G_1)}$ can be divided into $\Delta W_J^{add}(G_0,G_1)$ and $\Delta W_J^{remove}(G_0,G_1)$, $\Delta W_J{(G_0,G_1)}=\Delta W_J^{add}(G_0,G_1) \cup W_J^{remove}(G_0,G_1)$, where $\Delta W_J^{add}(G_0,G_1)=\{p|p\in\Delta W_J{(G_0,G_1)}$ and $p$ is only with added edge(edges)\}, $\Delta W_J^{remove}(G_0,G_1)=\{p|p\in\Delta W_J{(G_0,G_1)}$ and $p$ is only with removed edge(edges)\}.
% The path with multiple removed edges can be handled as with multiple added edges. 
% To explicitly describe how to calculate $C_{p,j}$, we assume that there is just one edge  added to $G_0$ to obtain $G_1$ since the process of calculating $C_{p,j}$ for the one edge added is the same as the more general case with multiple added edges. 
% since the more general case with multiple added edges can be decomposed into multiple steps of adding (removing) one edge. 
%  \textcolor{blue}{We don't need this anymore, right?} 
 Let $U$ and $V$ be any adjacent nodes in a path, where $V$ is closer to the root $J$.
%  \textcolor{red}{We describe how to obtain $C_{p,j}$, taking a single added edge as an example.}
% Let $\Delta W_{J}(G_0, G_1)$ be the added path set in the computation graph of GNN.
% For each path in $\Delta W_{J}(G_0, G_1)$, we obtain the difference-from-reference of the neurons in each layer.

% Walk can be seen as an ordered sequence of T+1 nodes. Take walk as an arrangement with length $T$ + 1. Two nodes in the arrangement can be adjacent unless there is an edge between them. For the walk in $\Delta W_{J}$, $K$ and $L$ must be adjacent and $J$ is the node at position $T$+1, for example, $w=(\dots,K,L,\dots,J)$ or $w=(\dots,L,K,\dots,J)$. Consider $w=(\dots,K,L,\dots,J)$, this arrangement is divided into two parts.  The first part is the arrangement$(L,\dots,J)$ starting from  $L$ to $J$ with length $l_w$.  We can obtain all arrangement $(L,\dots,J)$ through the depth-first search algorithm. The second part is the arrangement $w=(\dots,K)$ starting from  $K$ with length $T+1-l_w$. We also uses the depth-first search algorithm to get 
% % all  arrangement $(\dots,K)$. When $l_w$ changes from 2 to $T$, we acquire walks which can be denotes as $w=(\dots,K,L,\dots,J)$. For the walk $w=(\dots,L,K,\dots,J)$, we adopt the same method. Hence, we get $\Delta W_{J}$.

\noindent\textbf{Difference-from-reference of neuron activation and logits}. When handling the path with multiple added edges, we let the reference activations be computed by the GNN on the graph $G_0$, $G_{ref}=G_0$, the graph at the current moment is $G_1$, $G_{cur}=G_1$. 
For a path $p$ in $\Delta W_{J}(G_0,G_1)$, let $p[t]$ denote the node or the neurons of the node at layer $t$.
For example, if $p=(I,\dots,U,V,\dots,J)$, $p[T]$ represents $J$ or the neurons of $J$,
and $p[0]$ represents $I$ or the neurons of $I$.
% For the neurons in each layer in the path, the reference activation may be different. 
%If $U$ and $V$ are the two
% nodes connected by the added edge , then let $\textnormal{edge}_{\textnormal{index}}= \textnormal{max}(\textnormal{path}_{\textnormal{edge}})$,
%  $\textnormal{path}_\textnormal{edge}=\{i|$ $V$ in the $i$-th layer gathers message from 
%  neighbor $U$ in the $(i-1)$-th layer$\}$.
% \textcolor{red}{The following analysis only applies to adding one edge. How about adding multiple edges, removing edges, or both? Can we unify these different cases, or we should explain them separately?}
Given a path $p$,
let $\bar{t}=\max\{\tau |\tau=1,\dots, T, p[\tau]=V\textnormal{ and } 
p[\tau-1]=U\textnormal{ and if } (U,V) \textnormal{ is newly added}\}$.
% The path $p=(\underbrace{I}_{p[0]},\dots,\underbrace{U}_{p[\tau-1]}$, $\underbrace{V}_{p[\tau]},\dots,\underbrace{J}_{p[T]})$, 
When $t \geq \bar{t}$, the reference activation of $p[t]$ is $h_{p[t]}^{(t)}(G_{0})$. While when $t< \bar{t}$,
 the reference activation of $p[t]$ is zero, because the message of $p[t]$ cannot be passed along the path $(p[t],\dots,J)$ to $J$ in $G_{0}$,
 and the edge $(U,V)$ must be added to $G_{0}$ to connect $p[t]$ to $J$ in the path.
%  \textcolor{red}{Need a better explanation as why the reference is 0.
%  The reference activation may not be zero as there can be other incoming messages from a different path.
%  We need to be clear that the 0 reference is not the general reference defined in DeepLIFT but conditioned on a specific path?}
 We thus calculate the difference-from-reference of neurons at each layer for the specific path as follows: 
\begin{eqnarray}
\Delta h_{p[t]}^{(t)}=\left\{
\begin{array}{lr}
    h_{p[t]}^{(t)}(G_{1})-h_{p[t]}^{(t)}(G_{0}) & t\geq \bar{t}, \\
    h_{p[t]}^{(t)}(G_{1})&  \textnormal{otherwise.}
\end{array}
\right.
\label{eq:difference}
\end{eqnarray}
% \textcolor{red}{This part can have the following problem:
% what if a path has alternative existing and newly added edges, and the last edge is newly added.
% Then edge index is the last position, however, there are earlier edge that can propagate information into nodes that earlier than the last node $J$.}

For example, in Figure~\ref{fig:difference}, the added edge is $(J,K)$.
For the path $p=(J,K,J)$ in $G_1$,
% Therefore,   $\textnormal{path}_{\textnormal{edge}}=\{2\}$,  $\textnormal{edge}_{\textnormal{index}}=2$.
$\bar{t}=2$
%$\max\{1(p[0]=j,p[1]=k,U=J,V=K),2(p[0]=k,p[1]=j,U=K,V=J)\}=2$
and $\Delta h_j^{(0)}=h_{j}^{(0)}(G_1)$, 
because in $G_0$, the neuron $j$ at layer $0$ cannot pass message to the neuron $j$ at the output layer along the path $(J,K,J)$ in $G_0$.
The change in the logits $\Delta z_{p[t]}^{(t)}$ can be handled similarly.
% \begin{eqnarray}
% \Delta z_{p[t]}^{(t)}=\left\{
% \begin{array}{rcl}
%     z_{p[t]}^{(t)}(G_{cur})-z_{p[t]}^{(t)}(G_{ref}) &  t\geq \bar{t} \\
%     z_{p[t]}^{(t)}(G_{cur})& \textnormal{otherwise}
% \end{array}
% \right.
% \end{eqnarray}

\noindent \textbf{Multiplier of a neuron to its immediate successor.} 
% The linear rule and the rescale rule designed by the DeepLIFT (\textcolor{red}{We need to mention these rules back in the DeepLIFT for MLP section}) can be used for backpropagation to obtain the multiplier  of  each  neuron to its immediate successor.  
% DeepLIFT assign three rules include THE LINEAR RULE which deals with linear transformations, THE RESCALE RULE and THE REVEALCANCEL RULE which applies to nonlinear transformations that take a single input. \
% Let $y$ be a linear function of its inputs $x_i$ such that $y=b+\sum_{i} w_{i} x_{i}$ . The multiplier  according to rule is $m_{\Delta x_{i} \Delta y}=w_{i}$. Let neuron $y$ be a nonlinear transformation of its input $x$
% such that y = f(x). The multiplier  according to rule is $m_{\Delta x \Delta y}=\frac{\Delta y}{\Delta x}$.
We choose element-wise sum as the $f_\textnormal{AGG}$ function to ensure that the attribution by DeepLIFT can preserve  the  total  change  in  the  logits such that $\Delta z_j= \sum_{p=1}^{m} C_{p,j}$. 
% Supposing  that $p[t-1]$ is the node $B$ and $p[t]$ is the node $D$.
Then, $z_{v}^{(t)}=\sum_{U \in \mathcal{N}(V)}\left(\sum_{u \in U} h_{u}^{(t-1)} \theta_{u, v}^{(t)}\right)$, 
% \begin{eqnarray}
% z_{v}^{(t)}=\sum_{U \in \mathcal{N}(V)}\left(\sum_{u \in U} h_{u}^{(t-1)} \theta_{u, v}^{(t)}\right),
% \end{eqnarray}
where $\theta_{u,v}^{(t)}$ denotes the element of the parameter matrix $\theta ^{(t)}$ that links neuron $u$ to neuron $v$.
According to the Eq. (\ref{eq:multiplier for MLP}),
\begin{equation}
m_{\Delta h^{(t-1)}_u \Delta z^{(t)}_v}=\theta_{u,v}^{(t)}, \hspace{.1in} m_{\Delta z^{(t)}_v \Delta h^{(t)}_v} =\frac{\Delta h^{(t)}_v}{\Delta z^{(t)}_v}.    
\end{equation}
Then we can obtain the multiplier 
of the neuron $u$ to its immediate successor neuron $v$ according to Eq. (\ref{eq:MLP chain}):
\begin{eqnarray}
m_{\Delta h^{(t-1)}_u \Delta h^{(t)}_v}=\frac{\Delta h^{(t)}_v}{\Delta z^{(t)}_v } \times \theta_{u,v}^{(t)}.
\label{eq:multiplier}
\end{eqnarray}
Note that the output of GNN model is $z_J$, thus $m_{\Delta h^{(T-1)}_{p[T-1]} \Delta z_j}=\theta^{(T)}_{p[T-1],j}$.
% \textcolor{red}{If this is true, then the change in the prediction of an edge will not influence this multiplier. Can you think about this?}
We can obtain the multiplier of each neuron to its immediate successor in the path according to Eq.~(\ref{eq:multiplier}) by letting $t=T\to 1$.

After obtaining the $m_{\Delta h_{p[0]}^{(0)} \Delta h_{p[1]}^{(1)}},
\dots, m_{\Delta h_{p[T-1]}^{(T-1)}, \Delta z_{j}}$, 
% (\textcolor{red}{Why the first $m$ has $\delta h$ in the subscription while the last $m$ has $\Delta z$ in the subscription?} )(\textcolor{blue}{The output of GNN model is logits, $h$ is the output after the relu layer, and $z$ is the logits}) 
according to Eq. (\ref{eq:MLP chain}), we can obtain $m_{\Delta h_{p[0]}^{(0)} \Delta z_j}$ as
\begin{eqnarray}
m_{\Delta h_{p[0]}^{(0)} \Delta z_j}=\sum_{p[0]}\dots \sum_{p[T-1]} m_{\Delta h_{p[0]}^{(0)} \Delta h_{p[1]}^{(1)}} \dots m_{\Delta h_{p[T-1]}^{(T-1)} \Delta z_{j}}.
\label{eq:chain_rule}
\end{eqnarray}

\noindent \textbf{Calculate the contribution of each path.} For the path $p$ in $\Delta W_{J}$ $(G_0,G_1)$, we obtain the contribution of the path by summing up the input neurons' contributions:

\begin{equation}
C_{p,j}=\sum \limits_{p[0]} m_{\Delta h_{p[0]}^{(0)} z_j} \times \Delta h_{p[0]}^{(0)}, \label{eq:path_contribution}
\end{equation}
where $p[0]$ indexes the neurons of the input (a leaf node in the computation graph of the GNN) and $\Delta h_{p[0]}^{(0)}=h_{p[0]}^{(0)}$.

\begin{algorithm}
\caption{Compute $C_{p,j}$ for a target node $J$.}
\label{alg:node}
\begin{algorithmic}[1]
\STATE \textbf{Input}: two graph snapshots $G_{0}$ and $G_{1}$. Pre-trained GNN parameters $\theta_{GNN}$ for node classification, 
% a target node $J$. Explanation simplicity level $n$.
\STATE Obtain the altered path set $\Delta W_J{(G_0,G_1)}$.
\STATE Initialize $C \in  \mathbb{R}^{|W_J{(G_0,G_1)}|\times c}$ as an all-zero matrix
\FOR{$p \in \Delta W_J(G_0,G_1)$}
\IF{$p$ contains removed edges}
\STATE Reverse $G_0\to G_1$ to $G_1\to G_0$.
\STATE Compute $C_{p,j}$ according to Eq. (\ref{eq:path_contribution}) and let $-C_{p,j}$ be the contribution of $p$ as $G_0\to G_1$.
\ELSE
\STATE Compute $C_{p,j}$ according to Eq. (\ref{eq:path_contribution}) as the contribution of $p$ as $G_0\to G_1$.
\ENDIF
\ENDFOR
\STATE \textbf{Output}: the contribution matrix $C$.
\end{algorithmic}
\end{algorithm}

Algorithm~\ref{alg:node} describes how to attribute the change in a root node $J$'s logit $\Delta z_j$ to each path $p\in \Delta W_J{(G_0,G_1)}$. 

\noindent \textbf{The computational complexity of $C_{p,j}$.} Supposing that we have the $T$ layer GNN, the dimension of hidden vector $h_J^{(t)}$of layer $t$ is $d_t(t=1,2,\dots,T)$ and the dimension of the input feature vector is $d$. The time complexity of determining the contribution of each path is $\mathcal{O}(\prod \limits_{t=1}^{T}d_t \times d)$. In the  calculation, according to the Eq.\ref{eq:multiplier}, we can obtain the multiplier $m_{\Delta h^{(t-1)}_u \Delta h^{(t)}_v}$. Then according to the Eq.~\ref{eq:chain_rule} and the  Eq.~\ref{eq:path_contribution}, we use the chain rule to obtain the final multiplier and then obtain the contribution. Because the multiplier $m_{\Delta h^{(t-1)}_u \Delta h^{(t)}_v}$ is sparse, obtaining the final multiplier matrix is also relatively fast. For the dense edge structure, the number of paths is large. But for these paths with the same nodes after layer $t$ (for example, the path $(K,K,J)$ and the path $(J,K,J)$), the multipliers after the layer $t$ are(for example $m_{\Delta h^{(1)}_k \Delta h^{(2)}_j}$) the same. The proportion of paths that can share multiplier matrices is large. Because of this, the calculation will speed up.

\begin{thm}
The GNN-LRP is a special case if the reference activation is set to the empty graph. 
\end{thm}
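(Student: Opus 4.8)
The plan is to show that the DeepLIFT-for-GNN path attribution $C_{p,j}$ of Eq.~(\ref{eq:path_contribution}), built from the multiplier and chain rules Eqs.~(\ref{eq:multiplier})--(\ref{eq:chain_rule}), collapses term by term onto the walk relevance that GNN-LRP assigns to the same path, once the reference graph $G^\ast$ is taken to be the empty graph. First I would track what ``$G^\ast=$ empty graph'' does to the difference-from-reference bookkeeping of Section~\ref{sec:deeplift_gnn}: every path in $W_J(G_1)$ uses $T$ edges, none of which is present in the empty graph, so $\bar t=T$ for every path, and Eq.~(\ref{eq:difference}) forces $\Delta h^{(t)}_{p[t]}=h^{(t)}_{p[t]}(G_1)$ for all $t<T$; the reference logits along every such path vanish as well, so $\Delta z^{(t)}_{p[t]}=z^{(t)}_{p[t]}(G_1)$ at every layer. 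In short, with the empty-graph reference the DeepLIFT ``difference-from-reference'' quantities degenerate to the raw activations, $\Delta h\equiv h$ and $\Delta z\equiv z$, throughout the computation tree.

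Substituting this into the Rescale-rule multiplier Eq.~(\ref{eq:multiplier}) turns $m_{\Delta h^{(t-1)}_u\Delta h^{(t)}_v}=\frac{\Delta h^{(t)}_v}{\Delta z^{(t)}_v}\theta^{(t)}_{u,v}$ into $\frac{h^{(t)}_v}{z^{(t)}_v}\theta^{(t)}_{u,v}$, with the top-layer multiplier still equal to $\theta^{(T)}_{p[T-1],j}$; chaining these along $p$ via Eq.~(\ref{eq:chain_rule}) and multiplying by $\Delta h^{(0)}_{p[0]}=h^{(0)}_{p[0]}$ as in Eq.~(\ref{eq:path_contribution}) gives
\begin{equation*}
C_{p,j} \;=\; h_{p[0]}^{(0)}\Big(\prod_{t=1}^{T}\theta^{(t)}_{p[t-1],p[t]}\Big)\Big(\prod_{t=1}^{T-1}\frac{h^{(t)}_{p[t]}}{z^{(t)}_{p[t]}}\Big).
\end{equation*}
I would then write out the GNN-LRP walk relevance under the basic LRP-$0$ (the $z$-rule): initialize the output relevance at the logit $z_j$, route a fraction $\frac{h^{(t-1)}_u\theta^{(t)}_{u,v}}{z^{(t)}_v}$ of the relevance at pre-activation neuron $v$ down to neuron $u$, let relevance pass unchanged through each ReLU, and multiply the leaf relevance by the input activation. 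Telescoping this product along the single walk $p$ --- the numerator $z_j$ cancels the $z_j$ in the layer-$T$ denominator --- reproduces exactly the displayed expression. Since $f_{\text{AGG}}$ is element-wise sum, hence a linear layer, aggregation is handled identically by both schemes, so the two path attributions agree path by path; summing over the classes $j$ (or pushing the result through the shared softmax/sigmoid) then gives identical class attributions, which is the claim.

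The hard part will be pinning down which variant of GNN-LRP is intended: LRP admits several propagation rules ($z^+$, $\gamma$, $\epsilon$-stabilized, \dots), and the clean equivalence holds only for the basic $z$-rule, whose ReLU messages are precisely those of the DeepLIFT Rescale rule with a zero baseline --- a correspondence already known for plain MLPs \cite{shrikumar17}, so the only genuinely new step is checking that message passing over the GNN computation tree with a linear $f_{\text{AGG}}$ does not disturb it. A secondary, purely technical point is the $z^{(t)}_{p[t]}=0$ (division-by-zero) case together with the precise set of attributed paths (self-paths, paths present in only one graph): I would note that both methods use the same convention there --- or the same $\epsilon$-stabilization --- so these edge cases do not affect the equivalence, and a careful treatment would simply mirror the stabilization already implicit in Eq.~(\ref{eq:multiplier}).
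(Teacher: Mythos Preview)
Your proposal is correct and follows essentially the same route as the paper's proof: observe that with the empty-graph reference all difference-from-reference quantities reduce to the raw activations $\Delta h\equiv h$, $\Delta z\equiv z$, substitute into the Rescale multiplier Eq.~(\ref{eq:multiplier}), and then match the chained product against the GNN-LRP walk relevance under the $\gamma=0$ (equivalently LRP-$0$/$z$-rule) variant, which is exactly the case the paper singles out. The only cosmetic difference is that you write the result as an explicit product along the walk, whereas the paper leaves the intermediate neuron sums in place and telescopes them directly; your extra remarks on LRP variants and the $z=0$ stabilization are sound but go beyond what the paper records.
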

\begin{proof}
Considering the path $p=(I,\dots,U,V,\dots J)$ on the graph $G_1$, we let $G_0$ is the empty graph. Then, 
$\Delta h_{p[t]}^{(t)}=h^{(t)}_{p[t]}(G_1), 
\Delta z_{p[t]}^{(t)}=z^{(t)}_{p[t]}(G_1), 
m_{\Delta h^{(t-1)}_u \Delta h^{(t)}_v}=\frac{\Delta h^{(t)}_v}{\Delta z^{(t)}_v } \times \theta_{u,v}^{(t)}$.
While, for the GNN-LRP, when $\gamma =0$, $R_j=z_j$, we note $\textnormal{LRP}_{u,v}^{(t)}=\frac{h_{u}^{(t-1)}\theta^{(t)}_{u,v}}{\sum \limits_{U \in N(V)} \sum \limits_{u} h_{u}^{(t-1)}\theta^{(t)}_{u,v}}=\frac{h_{u}^{(t-1)}\theta^{(t)}_{u,v}}{z_{v}^{(t)}}$ that represents the allocation rule of neuron $v$ to its predecessor neuron $u$. The contribution of this path is 
\begin{eqnarray}
    R_p &=&\sum_{i} \dots  \sum_{p[T-1]} \textnormal{LRP}_{i,p[1]}^{(1)}  \ldots \textnormal{LRP}_{p[T-1],j}^{(T)} R_j \nonumber\\
    ~&=&\sum_{i} \dots  \sum_{p[T-1]} \frac{h_{i}^{(0)}\theta^{(1)}_{i,p[1]}}{z_{p[1]}^{(1)}} \dots \frac{h_{p[T-1]}^{(T-1)}\theta^{(T)}_{p[T-1],j}}{z_{j}} z_{j} \nonumber \\
    ~&=&\sum_{i}h_i^{(0)}\sum_{p[1]}\dots\sum_{p[T-1]}\frac{h_{p[1]}^{(1)}\theta^{(1)}_{i,p[1]}}{z_{p[1]}^{(1)}}\dots\theta^{(T)}_{p[T-1],j} \nonumber \\
    ~&=&\sum_{i} m_{\Delta h_i^{(0)} \Delta z_j} h_i^{(0)} \nonumber \\
    ~&=& C_{p,j} \nonumber
\end{eqnarray}
\end{proof}

% \subsection{Explaining Link Prediction}
% For the edge $(I,J)$ on graph $G$, the logits is $\bz_{IJ}(G)=$ $[z_{0}(G),$ $z_{1}(G)]$ and
% the change is $\Delta \bz_{IJ}(G_0, G_1)=\bz_{IJ}(G_1)-\bz_{IJ}(G_0)
% =[\Delta z_{0},\Delta z_{1}]$.
% $\Delta z_{\ell}$ ($\ell$=0, 1) are first attributed to $\Delta z_i$ and $\Delta z_j$ of the nodes $I$ and $J$, and then $\Delta z_i$ and $\Delta z_j$ can be attributed to
% $m_I$ altered paths in $\Delta W_I(G_0,G_1)$ and $m_J$ altered paths in $\Delta W_J(G_0,G_1)$ ($m_I$ and $m_J$ can be different).
% For $\ell$=0,1, let $\Delta z_{\ell}=\sum_{p=1}^{m_I+m_J}C_{p,\ell}$, where $C_{p,\ell}$ is the contribution of the altered path $p$ to $\Delta z_{\ell}$.
% Similar to node classification, the optimization problem for path selection for link prediction is
% \begin{equation}
%     \bx^\ast=\argmin_{\substack{\bx\in[0,1]^{m_I+m_J} \\\|\bx\|_1=n}}
%     \hspace{.1in}
%     \sum_{\ell=0}^1 \left(-\pr_{IJ}(G_1) \sum_{p=1}^{m_I+m_J} x_p C_{p,\ell}\right)
%     +\log \sum_{\ell^\prime=0}^{1}\exp\left(z_{\ell^\prime}(G_0)+\sum_{p=1}^{m_I+m_J} x_p C_{p,\ell^\prime}\right).
%     \label{eq:kl_min_link}
%     % \textnormal{s.t.} && \sum_{p=1}^{m_I+m_J} x_p = n. \label{eq:path_constraint_link}
% % \label{eq: choice link}
% \end{equation}
\subsection{Experiments}
\subsubsection{Datasets}
\label{sup:datasets}
% See the table ~\ref{tab:datasets} for the details for the datasets. 
\begin{itemize}[leftmargin=*]
\item
YelpChi, YelpNYC, and YelpZip~\cite{Yelpdataset}: each node represents a review, product, or user.
If a user posts a review to a product,
there are edges between the user and the review, and between the review and the product. The data sets are used for node classification.
% The dynamic graph is constructed according to the time of user publications the review.
\item
Pheme ~\cite{zubiaga2017exploiting} and Weibo~\cite{ma2018rumor}: they are collected from Twitter and Weibo.
A social event is represented as a trace of information propagation.
Each event has a label, rumor or non-rumor. Consider the propagation tree of each event as a graph. The data sets are used for node classification.
% According to the release time of the post construct the dynamic graph. 
\item
BC-OTC\footnote{\url{http://snap.stanford.edu/data/soc-sign-bitcoin-otc.html}} and BC-Alpha\footnote{\url{http://snap.stanford.edu/data/soc-sign-bitcoin-alpha.html}}: is a who trusts-whom network of bitcoin users trading on the platform. 
% \url{http://www.bitcoin-otc.com} and  \url{http://www.btc-alpha.com}. 
The data sets are used for link prediction.
\item
UCI\footnote{\url{http://konect.cc/networks/opsahl-ucsocial}}: is an online community of students from the University of California, Irvine,
where in the links of this social network indicate sent messages between users. The data sets are used for link prediction.
\item MUTAG~\cite{Morris+2020}: A molecule is represented as a graph of atoms where an edge represents two bounding atoms. 
\end{itemize}
% \begin{table}[!h]
%     \scriptsize
%     \caption{\small Datasets used in experiments.}
%     \centering
%     \begin{tabular}{c|c|c|c}
%     \toprule
%     \textbf{Datasets} &  \textbf{Nodes(Avg. Nodes)} & \textbf{Edges(Avg. Edges)} & \textbf{task}\\
%     \midrule
%     \textbf{YelpChi}  & 105,659 & 375,239 & node classification  \\
%     \textbf{YelpNYC}  & 520,200 & 1,956,408 & node classification  \\
%     \textbf{YelpZip}  & 873,919 & 3,308,311 & node classification  \\
%     \midrule
%     \textbf{weibo}  & 4,657 & \_ & node classification  \\
%     \textbf{pheme}  & 5,748 & \_ & node classification  \\
%     \midrule
%     \textbf{BC-OTC}  & 5,881  & 35,588 & link prediction \\
%     \textbf{BC-Alpha}  & 3,777 & 24,173 & link prediction \\
%     \textbf{UCI}  & 1,899  & 59,835 & link prediction \\
%     \midrule
%     \textbf{MUTAG} & 17.93& 19.79 & graph classification\\
%     \bottomrule
%     \end{tabular}
%      \label{tab:datasets}
% \end{table}
\subsubsection{Experimental setup}
\label{sup:setup}
We trained the two layers GNN. We choose element-wise sum as the $f_{AGG}$ function. The logit for node $J$ is denoted by $z_J (G)$. 
For node classification, $z_J (G)$ is mapped to the class distribution through the softmax (number of classes $c$ \textgreater 2) or sigmoid (number of classes $c$ = 2) function. For the link prediction, we concatenate $z_I (G)$ and $z_J (G)$ as the input to a linear layer to obtain the logits. Then it be mapped to the probability that the edge ($I$, $J$) exists using the sigmoid function. For the graph classification task, the average pooling of $z_J (G)$ of all nodes from $G$ can be used to obtain a single vector representation $z(G)$ of $G$ for classification. It can be mapped to the class probability distribution through the sigmoid or softmax function. We set the learning rate to 0.01, the dropout to 0.2 and the hidden size to 16 when we train the GNN model. The model is trained and then fixed during the prediction and explanation stages.

 The node or edge or graph is selected as the target node or edge or graph, if ${\kl{\pr_J(G_1)}{\pr_J(G_0)}}>\textnormal{threshold}$, where threshold=0.001.
 
For the MUTAG dataset, we randomly add or delete five edges to obtain the $G_1$. For other datasets, we use the $t_{initial}$ and the $t_{end}$ to obtain a pair of graph snapshots. We get the graph containing all edges from $t_{initial}$ to $t_{end}$.
% For adding edges, arrange the graph snapshots in order according to $t_{end}$.
Then two consecutive graph snapshots can be considered as $G_0$ and $G_1$. 
% For removing edges, arrange the graph snapshots in reverse order according to $t_{end}$. For both, let $t_{initial}$ and $t_{end} \in$ the sorted time lists, $t_{initial}$ and $t_{end}$ have the same index in lists. Then it is the same as adding edges.
For Weibo and Pheme datasets, according to the time-stamps of the edges, for each event, we can divide the edges into three equal parts. On the YelpZip(both) and UCI, we convert time to weeks since 2004. On the BC-OTC and BC-Alpha datasets, we convert time to months since 2010. On other Yelp datasets, we convert time to months since 2004. See the table ~\ref{tab:datasets_set} for details.
\begin{table}[!h]
    \scriptsize
    \caption{\small The details for datasets}
    % \centering
    \resizebox{.95\columnwidth}{!}{
    \begin{tabular}{c|c|c|c|c|c}
    \toprule
    \textbf{Datasets} &  \textbf{Nodes} & \textbf{Edges}& \textbf{Settings} & $t_{initial}$
    & $t_{end}$ \\
    \midrule
    \multirow{2}{*}{\textbf{YelpChi}}  &\multirow{2}{*}{105,659} &
    \multirow{2}{*}{375,239} & \textbf{add (remove)} & 0 &[84, 90,96,102,108]\\
    \cline{4-6}
    &&&\textbf{both} &  [78,80,82,84] &[84,86,88,90]\\
    \midrule
    \multirow{2}{*}{\textbf{YelpNYC}}  &\multirow{2}{*}{520,200} &
    \multirow{2}{*}{1,956,408} & \textbf{add (remove)} & 0 & [78,80,82,84,86]\\
    \cline{4-6}
    &&&\textbf{both} & [78,79,80,81] & [84,85,86,87]\\
    \midrule
    \multirow{2}{*}{\textbf{YelpZip}}  &\multirow{2}{*}{873,919} &
    \multirow{2}{*}{3,308,311} & \textbf{add (remove)} & 0 & [78,79,80,81,82]\\
    \cline{4-6}
    &&&\textbf{both} & [338,340,342,344] &  [354,356,358,360]\\
    \midrule
    \multirow{2}{*}{\textbf{BC-OTC}}  &\multirow{2}{*}{5,881} &
    \multirow{2}{*}{35,588} & \textbf{add (remove)} & 0 & [48,50,52,54,56]\\
    \cline{4-6}
    &&&\textbf{both} & [24,26,28,30] &   [48,50,52,54]\\
    \midrule
    \multirow{2}{*}{\textbf{BC-Alpha}}  &\multirow{2}{*}{3,777} &
    \multirow{2}{*}{24,173} & \textbf{add (remove)} & 0 & [48,51,54,57,60]\\
    \cline{4-6}
    &&&\textbf{both} & [24,26,28,30] &   [48,50,52,54]\\
    \midrule
    \multirow{2}{*}{\textbf{UCI}}  &\multirow{2}{*}{1,899} &
    \multirow{2}{*}{59,835} & \textbf{add (remove)} & 0 &  [18,19,20,21,22]\\
    \cline{4-6}
    &&&\textbf{both} & [16,17,18,19] &    [19,20,21,22]\\
    \midrule
     \multirow{2}{*}{\textbf{Weibo}}  &\multirow{2}{*}{4,657} &
    \multirow{2}{*}{} & \textbf{add (remove)} & 0 &  [1/3,2/3,1]\\
    \cline{4-6}
    &&&\textbf{both} & [0,1/3] &    [2/3,1]\\
    \midrule
     \multirow{2}{*}{\textbf{Pheme}}  &\multirow{2}{*}{5,748} &
    \multirow{2}{*}{} & \textbf{add (remove)} & 0 &  [1/3,2/3,1]\\
    \cline{4-6}
    &&&\textbf{both} & [0,1/3] &    [2/3,1]\\
    \midrule
    \textbf{MUTAG}& 17.93& 19.79 & \textbf{add (remove, both)}&\\
    \bottomrule
    \end{tabular}
    }
     \label{tab:datasets_set}
\end{table}

To show that as $n$ increases,  $\pr_J(G_n)$ is gradually approaching $\pr_J(G_1)$, we let n gradually increase. We choose $n$ according to the number of the altered paths . See the table ~\ref{tab:datasets_complexity} for details.
\begin{table}
    \scriptsize
    \caption{\small The the number of selected paths according to the number of altered paths.}
    \centering
    \begin{tabular}{c|c|c|c}
    \toprule
    \textbf{Task} &  \textbf{settings} &  \textbf{the number of altered paths} & \makecell[l]{The the number of\\ selected paths}\\
    \midrule
    % \multirow{7}{*}{\makecell[l]{\textbf{YelpChi}\\ (\textbf{YelpNYC} \\\textbf{YelpZip})}} & \multirow{4}{*}{\textbf{add and both}} & $(1000,+\infty]$ & [15, 16,17,18,19] \\ 
    % & & $ (500,1000]$ & [10,11,12,13,14]\\
    % & & $ (100,500]$ & [6,7,8,9,10]\\
    % & & $ (10,100]$ & [1,2,3,4,5]\\
    % \cline{2-4}
    % &\multirow{3}{*}{\textbf{remove}} & $(500,+\infty]$ & [50,55,60,65,70] \\ 
    % & & $ (100,500]$ & [10,15,20,25,30]\\
    % & & $ (10,100]$ & [2,4,6,8,10]\\
    \multirow{4}{*}{\makecell[l]{\textbf{Node classification}\\ }} & \multirow{4}{*}{\textbf{add, remove and both}} & $(1000,+\infty]$ & [15, 16,17,18,19] \\ 
    & & $ (500,1000]$ & [10,11,12,13,14]\\
    & & $ (100,500]$ & [6,7,8,9,10]\\
    & & $ (10,100]$ & [1,2,3,4,5]\\
    \midrule
    \multirow{4}{*}{\makecell[l]{\textbf{Link prediction}\\ }} & \multirow{4}{*}{\textbf{add, remove and both}} & $(1000,+\infty]$ & [60,70,80,90,100] \\ 
    & & $ (500,1000]$ & [10,20,30,40,50]\\
    & & $ (100,500]$ & [10,12,14,16,18]\\
    & & $ (10,100]$ & [1,2,3,4,5]\\
    \midrule
    \multirow{4}{*}{\makecell[l]{\textbf{Graph classification}\\ }} & \multirow{4}{*}{\textbf{add, remove and both}} & $(1000,+\infty]$ & [10,11,12,13,14] \\ 
    & & $ (500,1000]$ & [6,7,8,9,10]\\
    & & $ (100,500]$ & [3,4,5,6,7]\\
    & & $ (10,100]$ & [1,2,3,4,5]\\
    % \multirow{7}{*}{\makecell[l]{\textbf{BC-OTC}\\ (\textbf{BC-Alpha} \\\textbf{UCI})}} & \multirow{4}{*}{\textbf{add and both}} & $(1000,+\infty]$ & [10,15,20,25,30] \\ 
    % & & $ (500,1000]$ & [10,12,14,16,18]\\
    % & & $ (100,500]$ & [6,7,8,9,10]\\
    % & & $ (10,100]$ & [1,2,3,4,5]\\
    % \cline{2-4}
    % &\multirow{3}{*}{\textbf{remove}} & $(500,+\infty]$ & [50,55,60,65,70] \\ 
    % & & $ (100,500]$ & [10,15,20,25,30]\\
    % & & $ (10,100]$ & [2,4,6,8,10]\\
    \midrule
    \end{tabular}
     \label{tab:datasets_complexity}
\end{table}

\subsubsection{Quantitative evaluation metrics}
\label{sup:metrics}
We illustrate the calculation process of our method in Figure ~\ref{fig:example_metrics}.
\begin{figure}[t]
    \centering
    \includegraphics[width=1.0\textwidth]{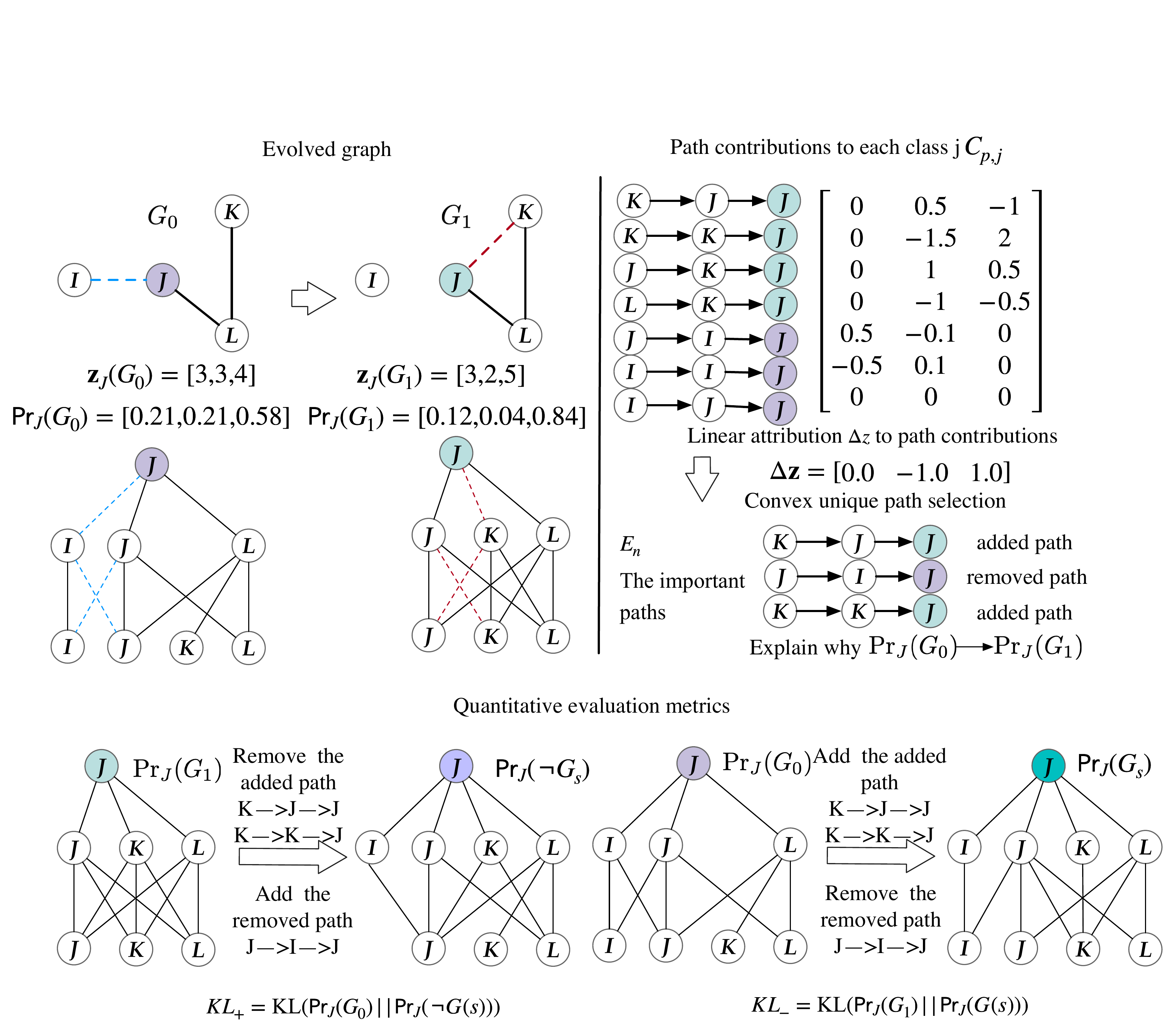}
    \caption{\small
    % The example is not limited to evolving graphs but can demonstrate the explanations of an evolving vector of logits for multi-class classification problems.
    \textit{Top left}: $G_0$ (e.g., a citation network) at time $t=0$ is updated to $G_1$ at time $t=1$ after the edge $(J,K)$ is added and the edge $(I,J)$ is removed, and the logits $\bz_J(G_0)$ and predicted class distribution $\pr_J(G_0)$ of node $J$ changes accordingly.
    Prior counterfactual methods attribute the change to the edges $(J, K)$ and $(I,J)$.
    \textit{Center left}:
    the GNN computational graph that propagates information from leaves to the root $J$. %of the two trees corresponding to graphs $G_0$ and $G_1$.
    \textit{Top right}:
    Any paths from the computational graph containing a dashed edge contribute to the prediction change, and
    we axiomatically attribute the logits changes to these paths with contribution $C_{p,j}$ (for the $p$-th path to the component $\Delta z_j$). 
    \textit{Center right}:
    Not all paths are significant contributors and we formulate a convex program to uniquely identify
    a few paths to maximally approximate the changes.
    \textit{Bottom}:
    We show the calculation process of $\textnormal{KL}^{+}$ and $\textnormal{KL}^{-}$ after obtaining $E_n$. 
    Other situations, including edge deletion, mixture of addition and deletion, and link prediction can be reduced to this simple case.}
    
    % \textbf{Bottom-left}:
    % Assume that the logits for the classes $1$ and $2$ are generated by $z_1(G)=\log t$ and $z_2(t)=at+b$,
    % where $t$ is the varying input.
    % GNN-LRP or saliency maps explain a predicted class at a given static input, such as $z_2(t_2)$ or $\pr(t_2)$,
    % while DeepLIFT explains the change $\Delta z_2=a(t_2 - t_1)$ in the logit $z_2$.
    % These prior methods rely on the function $z_2(t_2)=a t_2+b$ is sufficient to generate the explanations. 
    % \textbf{Bottom-right}:
    % prior explanation methods aim to find possibly multiple approximations of the target static predicted distribution,
    % while we decompose the trajectory of the circle moving from $\pr(t_1)$ to $\pr(t_2)$ to explain the \textit{difference} between $\pr(t_1)$ and $\pr(t_2)$.
    
    \label{fig:example_metrics}
\end{figure}
\subsubsection{Experimental result}
\label{sup:result}
See the Figure~\ref{fig:other_node_kl+} for the result on the $\textnormal{KL}^{+}$ on the YelpNYC, YelpZip and BC-OTC datasets. See the Figure ~\ref{fig:overall_node_kl-}, Figure~\ref{fig:other_node_kl-} and Figure ~\ref{fig:overall_link_kl-} for the result on the $\textnormal{KL}^{-}$ on the all datasets. The method AxiomPath-Convex is significantly better than the runner-up method. 
\begin{figure*}

  % row 1:
  % adding edges only
  \centering
  \subfloat{
    \includegraphics[width = 0.28\textwidth]{NYC_add_1_fidelitykl+.pdf}
  }
  \subfloat{
    % adding edges only
    \includegraphics[width = 0.28\textwidth]{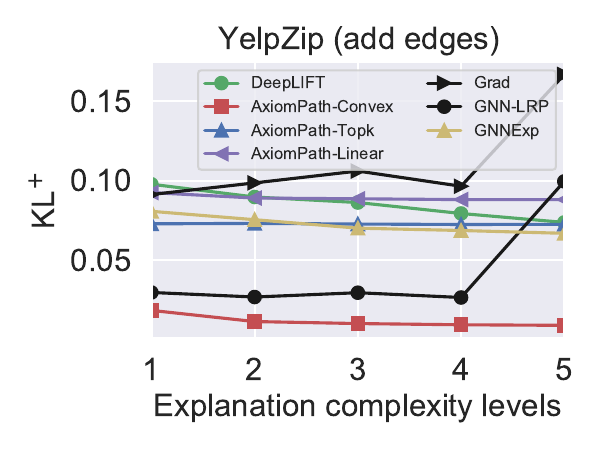}
  }
  \subfloat{
    % adding edges only
    \includegraphics[width = 0.28\textwidth]{bitcoinotc_add_1_fidelitykl+.pdf}
  }\\
%   \subfloat{
%     % adding edges only
%     \includegraphics[width = 0.24\textwidth]{NYC_add_1_fidelitykl+.pdf}
%   }\\
  % row 2:
  % removing edges only
  \subfloat{
    % removing edges only
    \includegraphics[width = 0.28\textwidth]{NYC_remove_1_fidelitykl+.pdf}
  }
  \subfloat{
    % removing edges only
    \includegraphics[width = 0.28\textwidth]{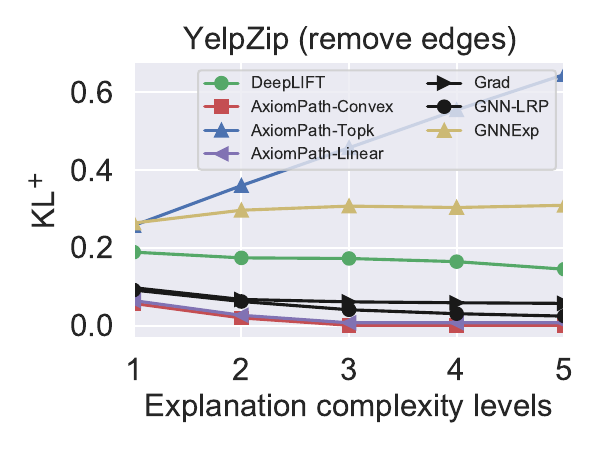}
  }
  \subfloat{
    % removing edges only
    \includegraphics[width = 0.28\textwidth]{bitcoinotc_remove_1_fidelitykl+.pdf}
  }\\
%   \subfloat{
%     % removing edges only
%     \includegraphics[width = 0.24\textwidth]{NYC_remove_1_fidelitykl+.pdf}
%   }\\
  % row 3: both adding and removing edges
  \subfloat{
    % mixture of adding and removing edges
    \includegraphics[width = 0.28\textwidth]{NYC_both_1_fidelitykl+.pdf}
  }
  \subfloat{
    % mixture of adding and removing edges
    \includegraphics[width = 0.28\textwidth]{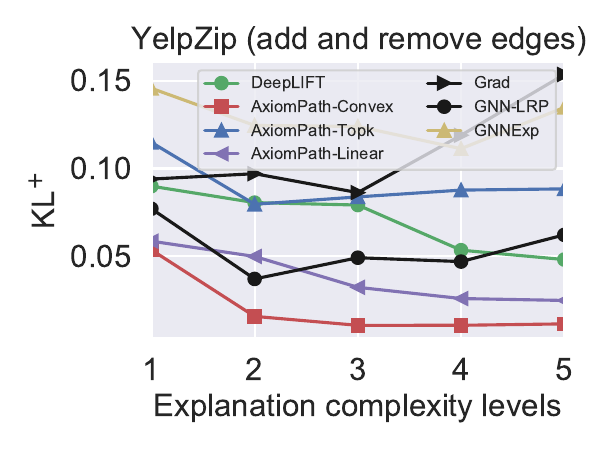}
  }
  \subfloat{
    % mixture of adding and removing edges
    \includegraphics[width = 0.28\textwidth]{bitcoinotc_both_1_fidelitykl+.pdf}
  }
%   \subfloat{
%     % mixture of adding and removing edges
%     \includegraphics[width = 0.24\textwidth]{NYC_both_1_fidelitykl+.pdf}
%   }
\caption{\small Performance on the node classification task and link prediction task. Each column is a dataset and
each row is one setting. 
Each figure shows the $\textnormal{KL}^{+}$ as $G_0\to G_1$ for a pair of snapshots.}
\label{fig:other_node_kl+}
\end{figure*}
\begin{figure*}

  % row 1:
  % adding edges only
  \centering
  \subfloat{
    \includegraphics[width = 0.28\textwidth]{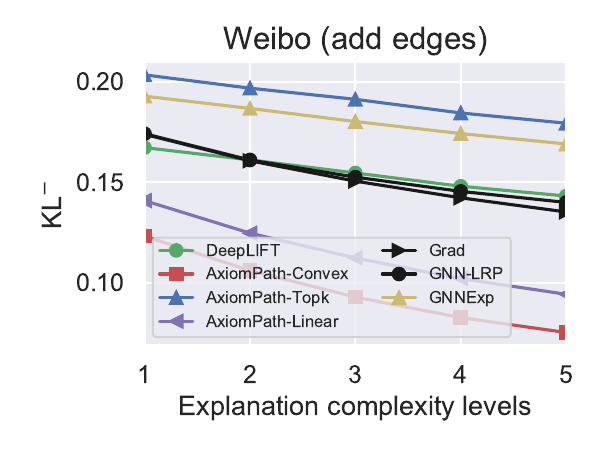}
  }
  \subfloat{
    % adding edges only
    \includegraphics[width = 0.28\textwidth]{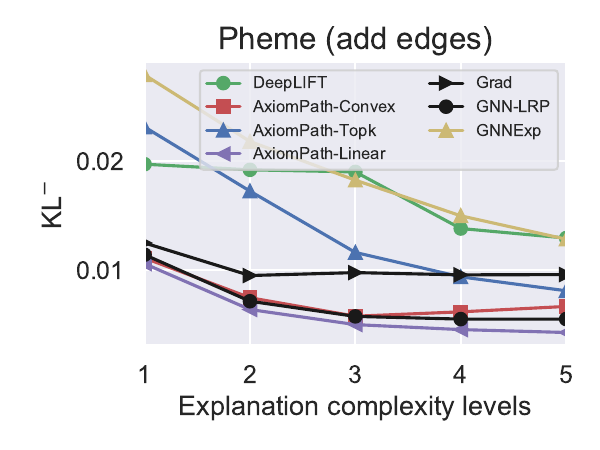}
  }
  \subfloat{
    % adding edges only
    \includegraphics[width = 0.28\textwidth]{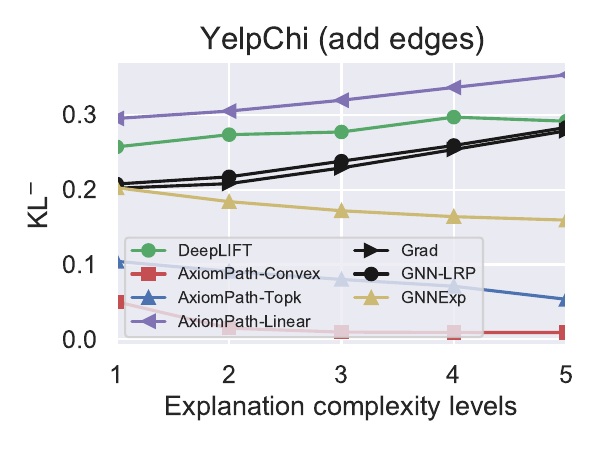}
  }\\
%   \subfloat{
%     % adding edges only
%     \includegraphics[width = 0.24\textwidth]{iclr2023/figs/NYC_add_1_fidelitykl+.pdf}
%   }\\
  % row 2:
  % removing edges only
  \subfloat{
    % removing edges only
    \includegraphics[width = 0.28\textwidth]{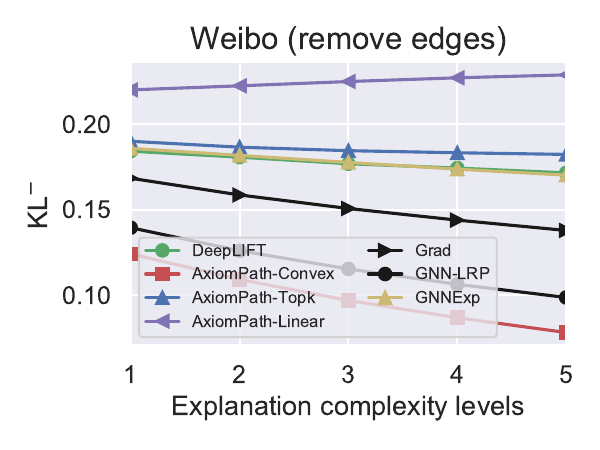}
  }
  \subfloat{
    % removing edges only
    \includegraphics[width = 0.28\textwidth]{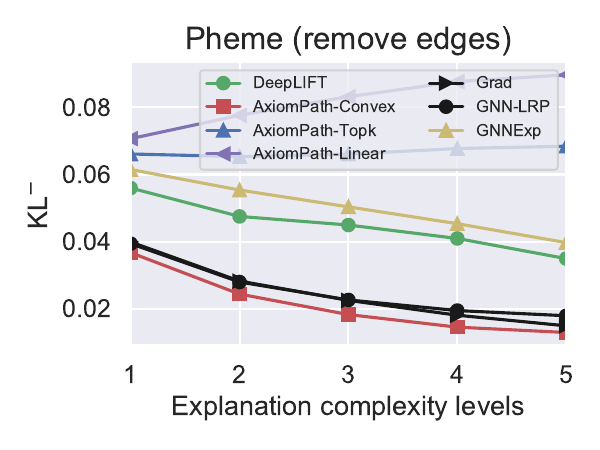}
  }
  \subfloat{
    % removing edges only
    \includegraphics[width = 0.28\textwidth]{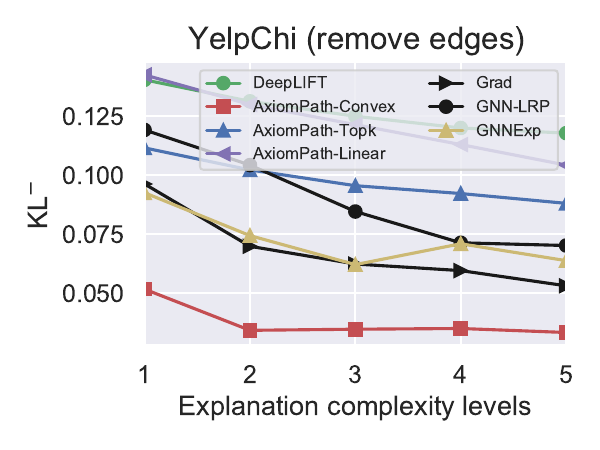}
  }\\
%   \subfloat{
%     % removing edges only
%     \includegraphics[width = 0.24\textwidth]{NYC_remove_1_fidelitykl+.pdf}
%   }\\
  % row 3: both adding and removing edges
  \subfloat{
    % mixture of adding and removing edges
    \includegraphics[width = 0.28\textwidth]{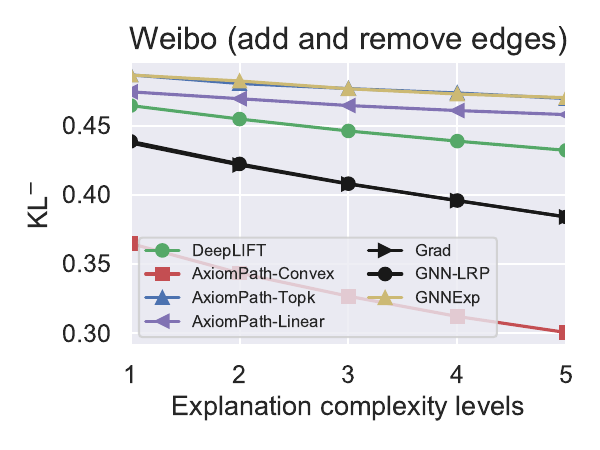}
  }
  \subfloat{
    % mixture of adding and removing edges
    \includegraphics[width = 0.28\textwidth]{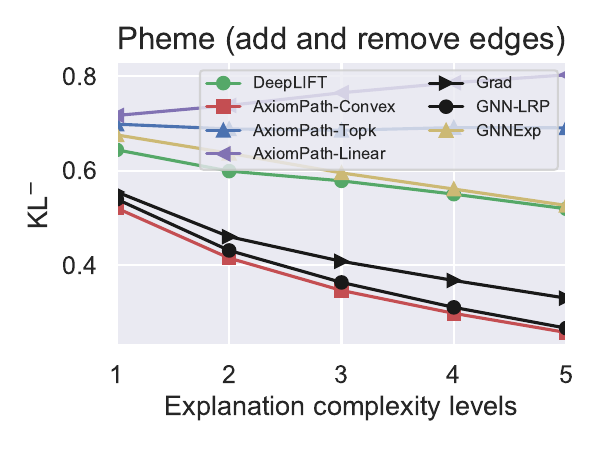}
  }
  \subfloat{
    % mixture of adding and removing edges
    \includegraphics[width = 0.28\textwidth]{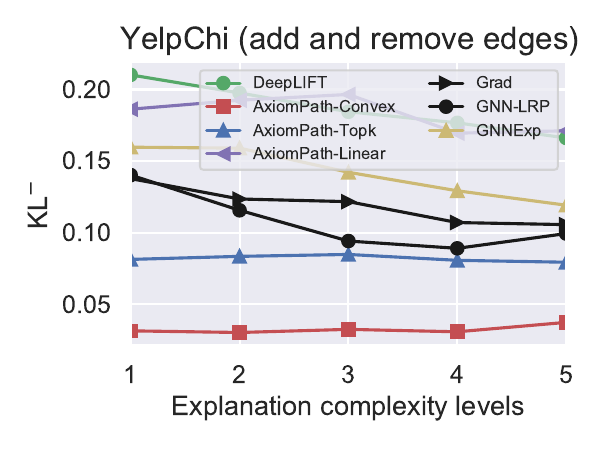}
  }
%   \subfloat{
%     % mixture of adding and removing edges
%     \includegraphics[width = 0.24\textwidth]{NYC_both_1_fidelitykl+.pdf}
%   }
\caption{\small Performance on the node classification tasks.
Each column is a dataset and
each row is one setting.
Each figure shows the $\textnormal{KL}^{-}$ as $G_0\to G_1$ for a pair of snapshots.}
\label{fig:overall_node_kl-}
\end{figure*}
\begin{figure*}

  % row 1:
  % adding edges only
  \centering
  \subfloat{
    \includegraphics[width = 0.28\textwidth]{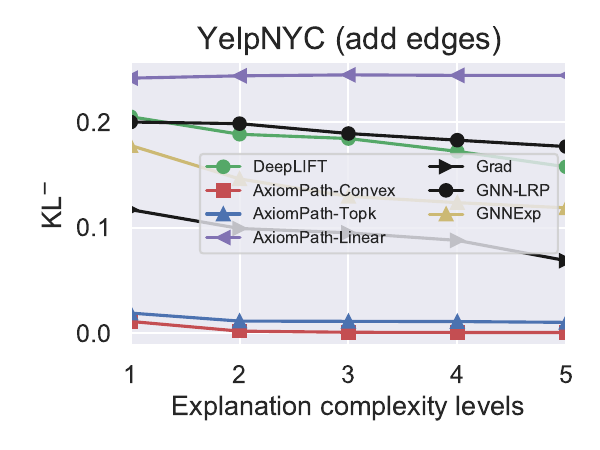}
  }
  \subfloat{
    % adding edges only
    \includegraphics[width = 0.28\textwidth]{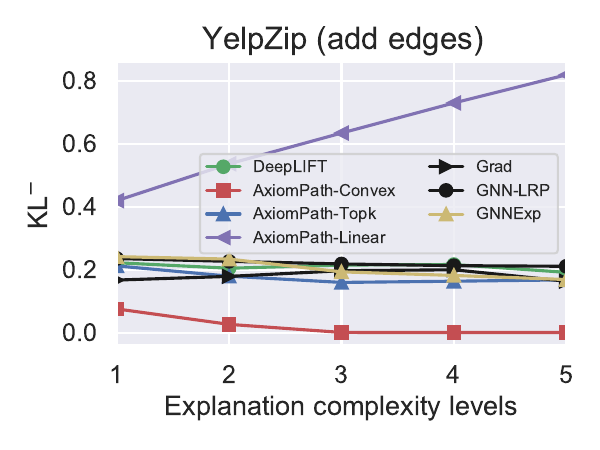}
  }
  \subfloat{
    % adding edges only
    \includegraphics[width = 0.28\textwidth]{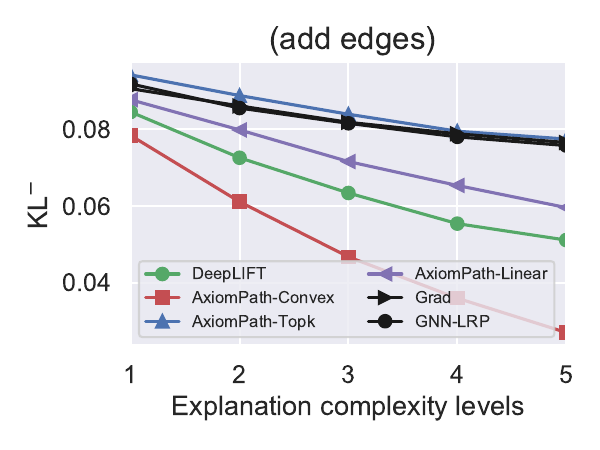}
  }\\
%   \subfloat{
%     % adding edges only
%     \includegraphics[width = 0.24\textwidth]{NYC_add_1_fidelitykl+.pdf}
%   }\\
  % row 2:
  % removing edges only
  \subfloat{
    % removing edges only
    \includegraphics[width = 0.28\textwidth]{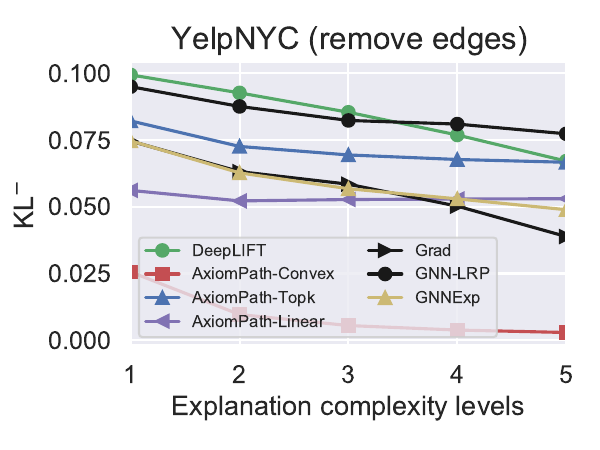}
  }
  \subfloat{
    % removing edges only
    \includegraphics[width = 0.28\textwidth]{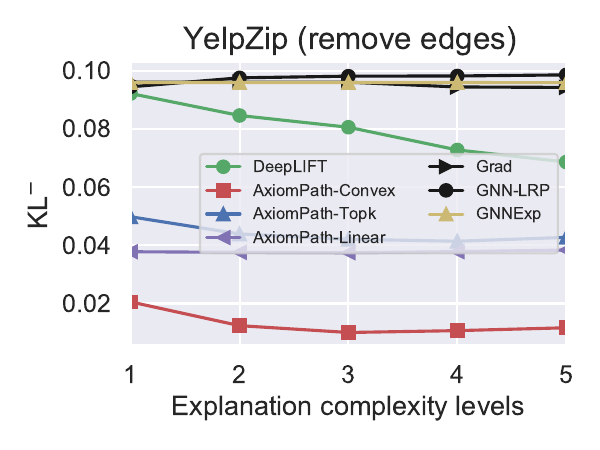}
  }
  \subfloat{
    % removing edges only
    \includegraphics[width = 0.28\textwidth]{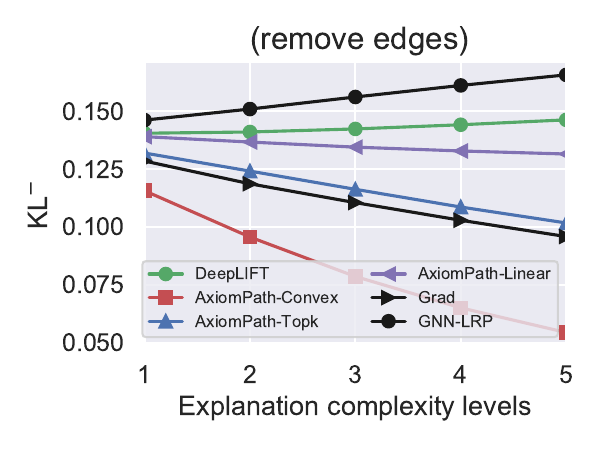}
  }\\
%   \subfloat{
%     % removing edges only
%     \includegraphics[width = 0.24\textwidth]{iclr2023/figs/NYC_remove_1_fidelitykl+.pdf}
%   }\\
  % row 3: both adding and removing edges
  \subfloat{
    % mixture of adding and removing edges
    \includegraphics[width = 0.28\textwidth]{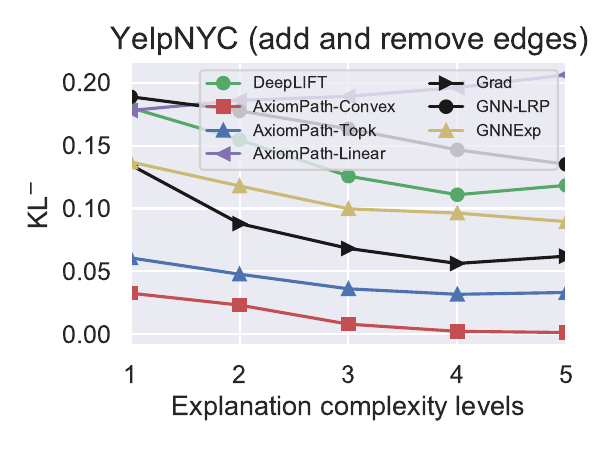}
  }
  \subfloat{
    % mixture of adding and removing edges
    \includegraphics[width = 0.28\textwidth]{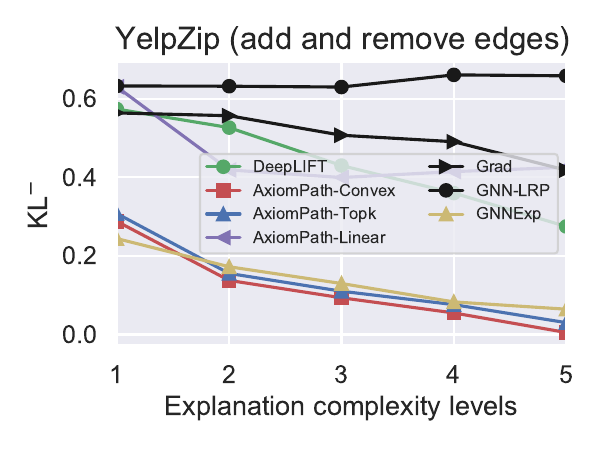}
  }
  \subfloat{
    % mixture of adding and removing edges
    \includegraphics[width = 0.28\textwidth]{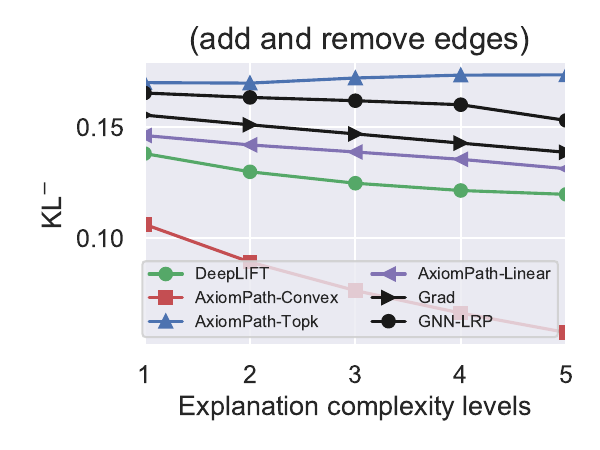}
  }
%   \subfloat{
%     % mixture of adding and removing edges
%     \includegraphics[width = 0.24\textwidth]{iclr2023/figs/NYC_both_1_fidelitykl+.pdf}
%   }
\caption{\small Performance on the node classification task and graph classification task. Each column is a dataset and
each row is one setting.
Each figure shows the $\textnormal{KL}^{-}$ as $G_0\to G_1$ for a pair of snapshots.}
\label{fig:other_node_kl-}
\end{figure*}
\begin{figure*}
\centering
% row 1: UCI 
  \subfloat{
    % adding edges only
    \includegraphics[width = 0.28\textwidth]{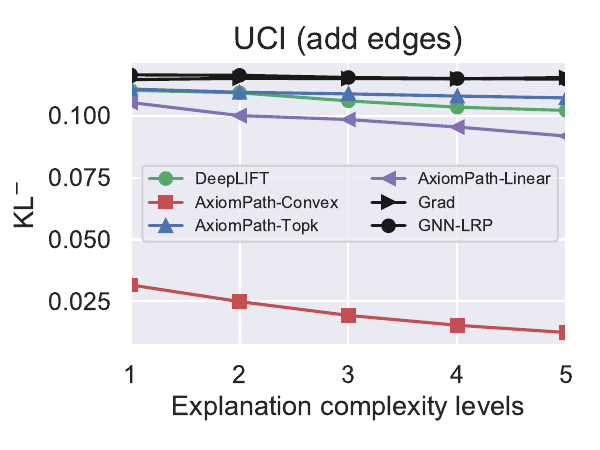}
  }
%   \subfloat{
%     % removing edges only
%     \includegraphics[width = 0.24\textwidth]{iclr2023/figs/bitcoinotc_add_1_fidelitykl+.pdf}
%   }
  \subfloat{
    % mixture of adding and removing edges
    \includegraphics[width = 0.28\textwidth]{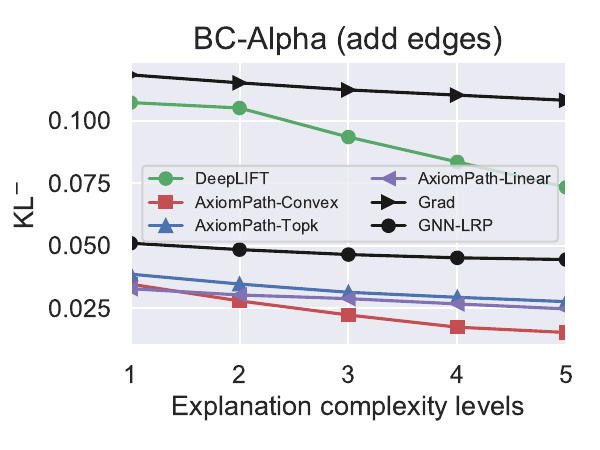}
  }
  \subfloat{
    % mixture of adding and removing edges
    \includegraphics[width = 0.28\textwidth]{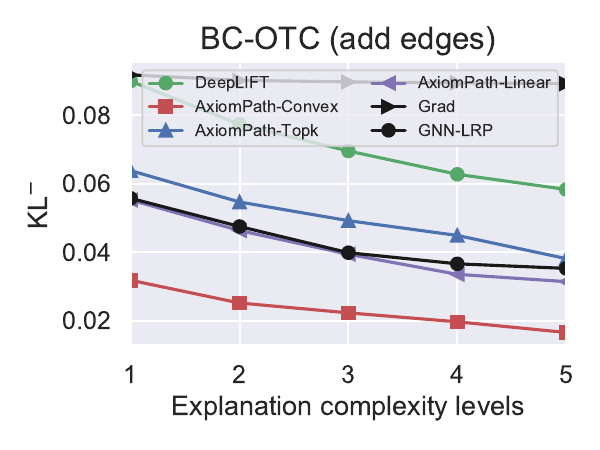}
  }
  \\
  % row 2: bitcoinalpha 
  \subfloat{
    % adding edges only
    \includegraphics[width = 0.28\textwidth]{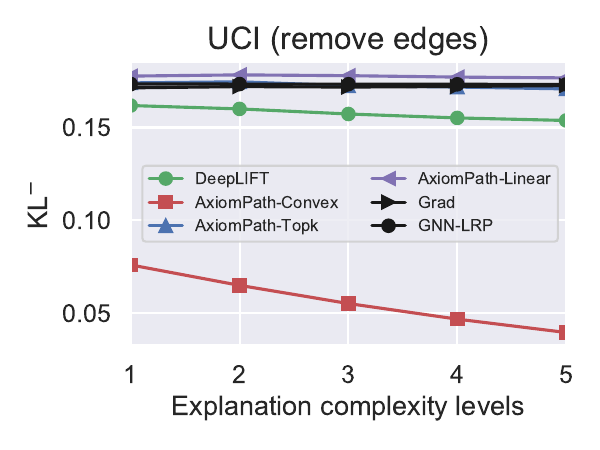}
  }
%   \subfloat{
%     % removing edges only
%     \includegraphics[width = 0.24\textwidth]{figs/bitcoinotc_remove_1_fidelitykl+.pdf}
%   }
  \subfloat{
    % mixture of adding and removing edges
    \includegraphics[width = 0.28\textwidth]{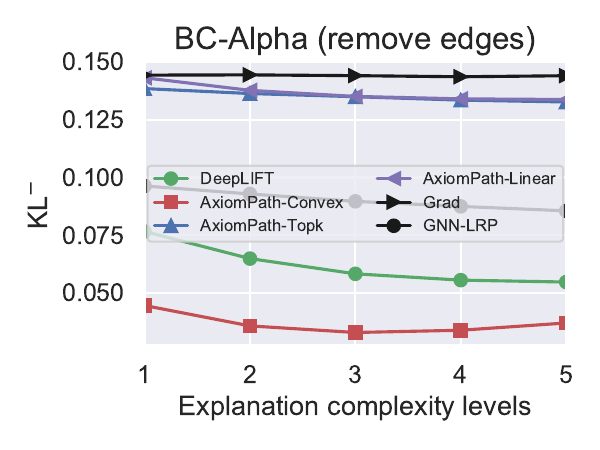}}
  \subfloat{
    % mixture of adding and removing edges
    \includegraphics[width = 0.28\textwidth]{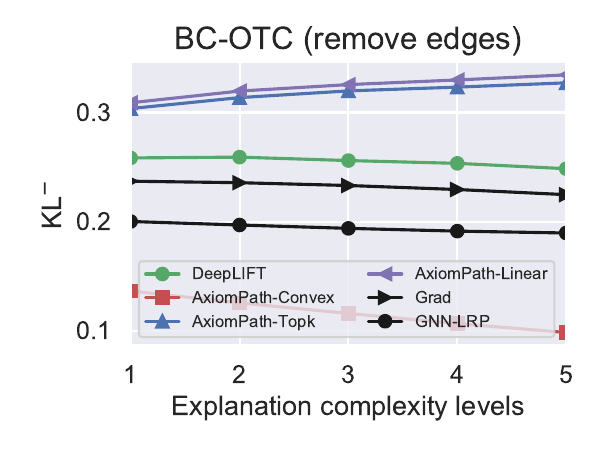}}
    \\
     \subfloat{
    % adding edges only
    \includegraphics[width = 0.28\textwidth]{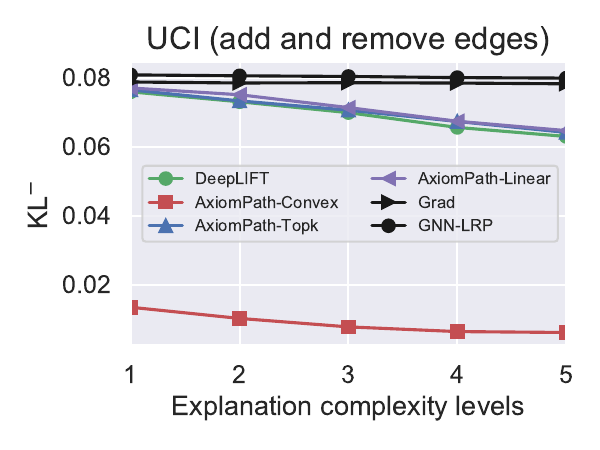}
  }
%   \subfloat{
%     % removing edges only
%     \includegraphics[width = 0.24\textwidth]{figs/bitcoinotc_both_1_fidelitykl+.pdf}
%   }
  \subfloat{
    % mixture of adding and removing edges
    \includegraphics[width = 0.28\textwidth]{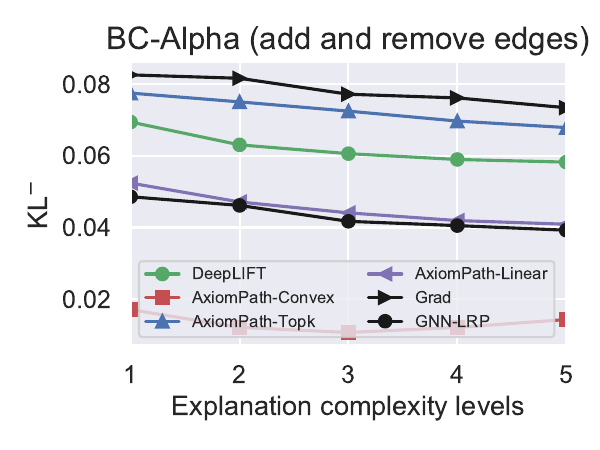}}
  \subfloat{
    % mixture of adding and removing edges
    \includegraphics[width = 0.28\textwidth]{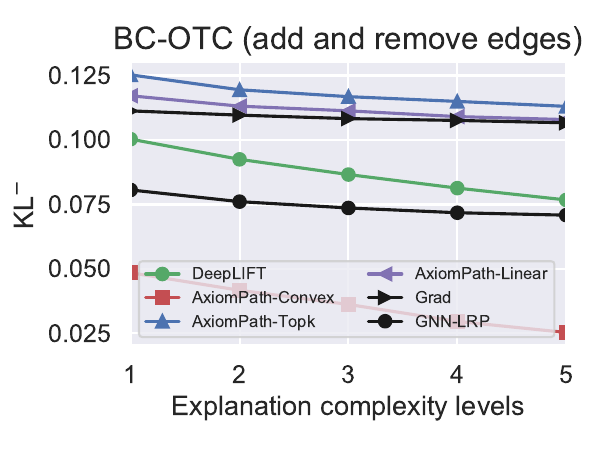}}

\caption{\small : Performance on the link prediction tasks. Average $\textnormal{KL}^{-}$ on the link prediction and graph classification tasks.
Each row is a dataset. Each column is one setting. 
Each figure shows the results of a pair of graph snapshots.
}
\label{fig:overall_link_kl-}
\end{figure*}

% See Figure~\ref{fig:Zip} for the result  in the node classification on the dataset YelpZip. 

% \subsubsection{Link prediction}
% See the Figure~\ref{fig:overall_link_kl-} for the result on the $\mathbf{Fidelity}_{\textnormal{KL}^{-}}$.
% \subsubsection{Graph classification}
% See the Figure~\ref{fig:overall_link_kl-} for the result on the $\mathbf{Fidelity}_{\textnormal{KL}^{-}}$.

\subsection{Scalability}
\label{sec:running_time}
\noindent\textbf{Running time overhead of convex optimization.}
We plot the base running time for searching paths in $\Delta W(G_0, G_1)$ (or $\Delta W_I(G_0,G_1) \cup \Delta W_J(G_0,G_1)$)
and attribution \textit{vs.} the running time of the convex optimization.
In Figure~\ref{fig:opt_running_time},
we see that in the two top cases, the larger $\Delta W_J(G_0,G_1)$ (or $\Delta W_I(G_0,G_1) \cup \Delta W_J(G_0,G_1)$) lead to higher cost in the optimization step compared to path search and attribution.
In the lower two cases, the graphs are less regular and the search and attribution can spend the majority computation time. 
The overall absolution running time is acceptable.
In practice, one can design incremental path search for different graph topology, and more specific convex optimization algorithm to speed up the algorithm. 

We plot the running of the baseline methods.(See the Figure~\ref{fig:running_time_gnnlrp},Figure~\ref{fig:running_time_grad} and Figure~\ref{fig:running_time_gnnexplainer}). The order of running time by the baseline methods is: AxiomPath-Convex, AxiomPath-Linear \textgreater DeepLIFT, AxiomPath-Topk \textgreater Gradient, GNNLRP. About for the  GNNExplainer methods, they cost more time than AxiomPath-Convex when the the graph is small and they cost less time than DeepLIFT when the graph is large. 
Although the running time of Gradient and GNNLRP is less, the Gradient method cannot obtain the contribution value of the path, it only obtain the contribution value of the edge in the input layer.
GNNLRP, like DeepLIFT, was originally designed to find the path contributions to the probability distribution in the static graph, and cannot handle changing graphs. If considering the running time of calculating path contribution values, we can use GNNLPR to obtain the paths contribution value in the $G_0$ and $G_1$ and subtracted them as the  of the final contribution value. After obtaining $C_{p,j}$, we can still use our theory to choose the critical path to explain the change of probability distribution. GNNLPR can be a faster replacement for DeepLIFT.

% The running time of the  Gradient and GNNLRP methods is least. In the graph with a small number of nodes, Gradient is faster than GNNLRP, while in the graph with more nodes, the GNNLRP methods is faster. 
 
% \begin{figure}[htbp]
%   \centering
% % \includegraphics[width=\textwidth]{figs/moo_combinations.eps}
%   % row 1: cora, citeseer, pubmed
%   \subfigure{\includegraphics[width = 0.25\textwidth]{figs/amazon_electronics_computerstime.pdf}
%   }\hspace{7mm}
%   \subfigure{\includegraphics[width = 0.25\textwidth]{figs/ms_academic_phytime.pdf}
%   }
%   \caption{\small Mean}
%   \label{fig:time}
% \end{figure}
\begin{figure}[htbp]
    \centering
    \subfloat{\includegraphics[width=0.24\textwidth]{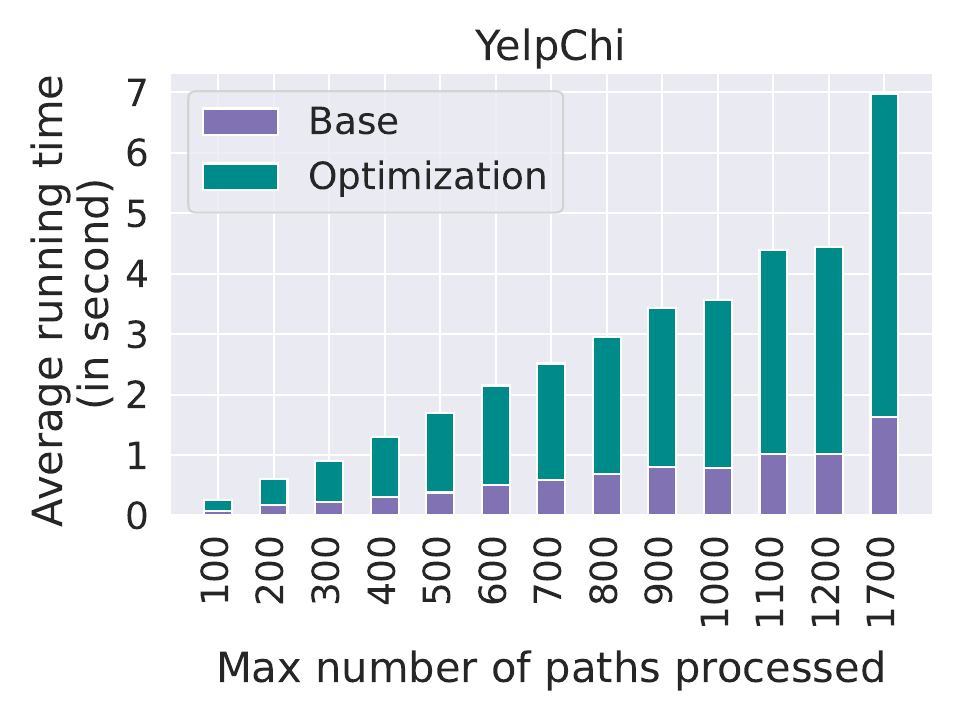}}
    \subfloat{\includegraphics[width=0.24\textwidth]{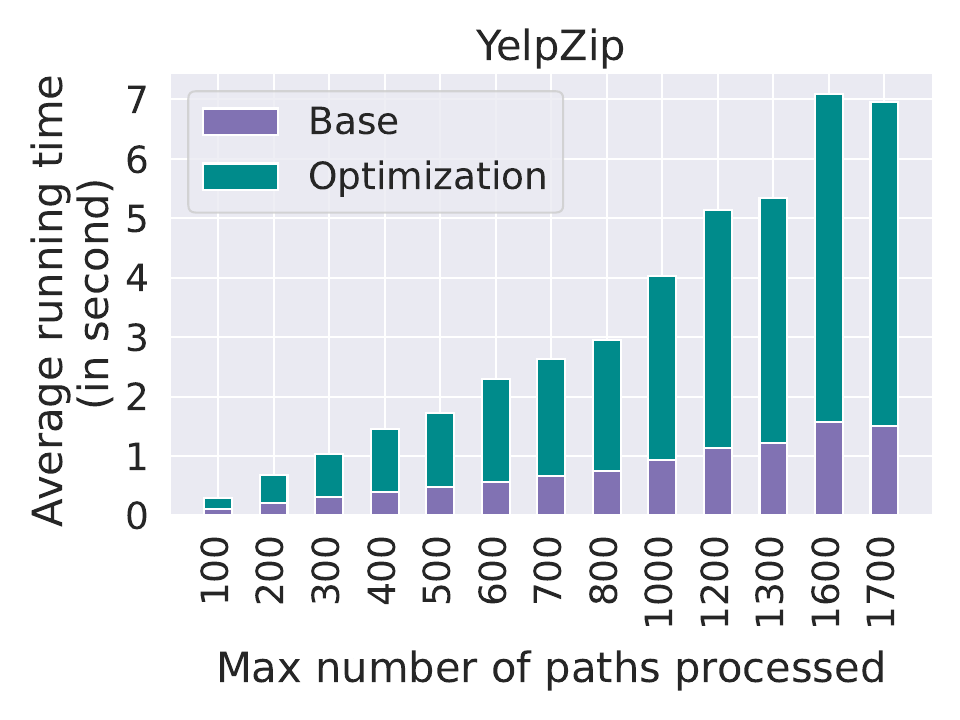}}
    \subfloat{\includegraphics[width=0.24\textwidth]{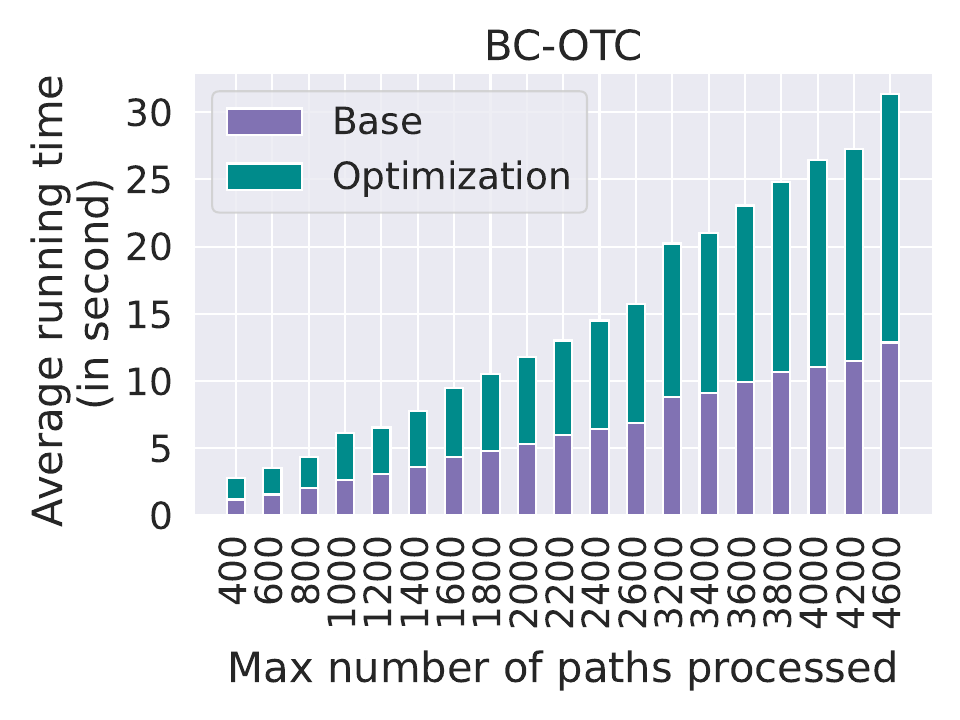}}
    \subfloat{\includegraphics[width=0.24\textwidth]{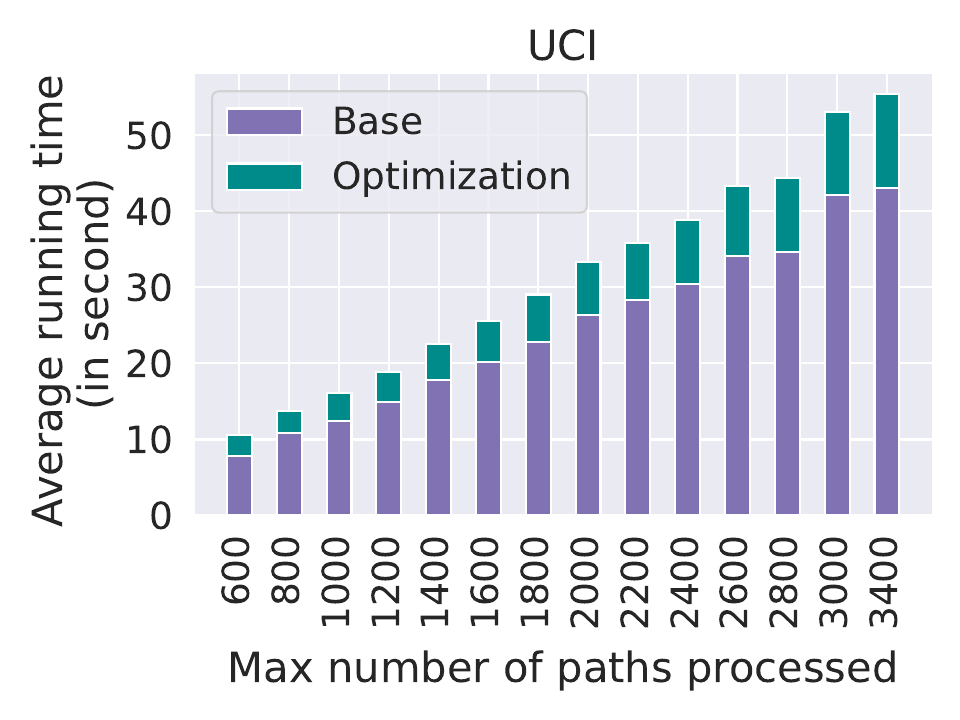}}
    \caption{\small Decomposition of running time of AxiomPath-Convex.}
    \label{fig:opt_running_time}
\end{figure}

\begin{figure}[htbp]
    \centering
    \subfloat{\includegraphics[width=0.24\textwidth]{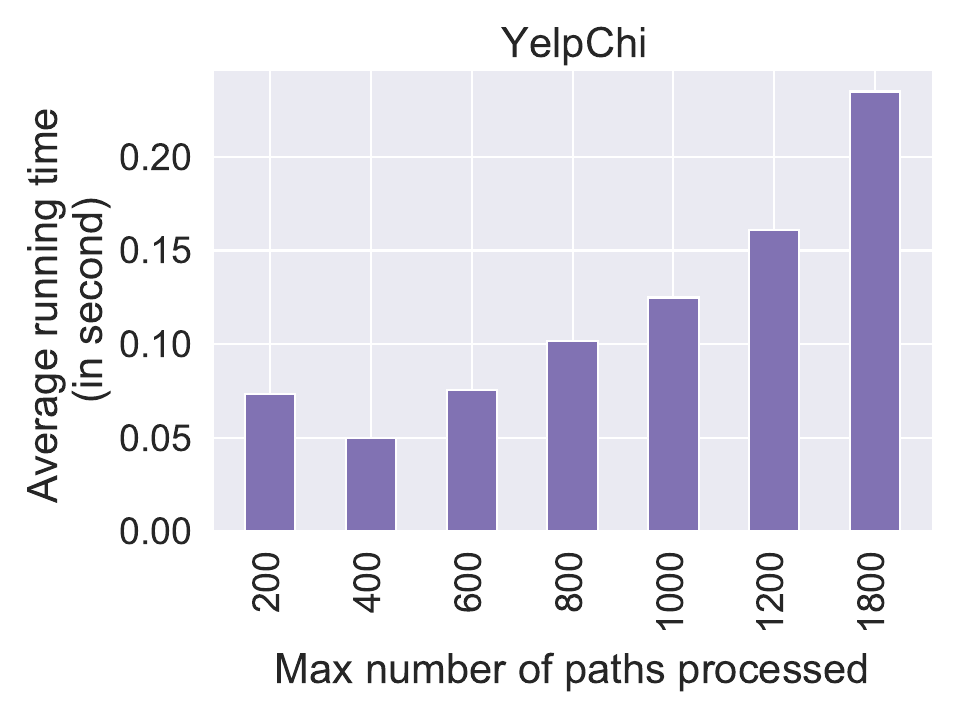}}
    \subfloat{\includegraphics[width=0.24\textwidth]{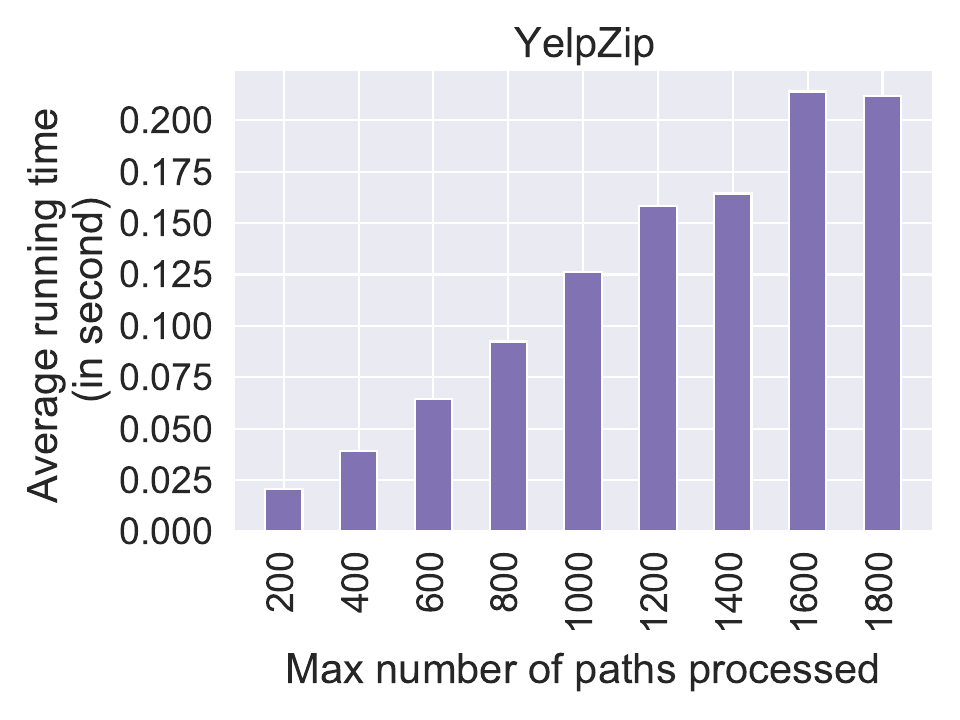}}
    \subfloat{\includegraphics[width=0.24\textwidth]{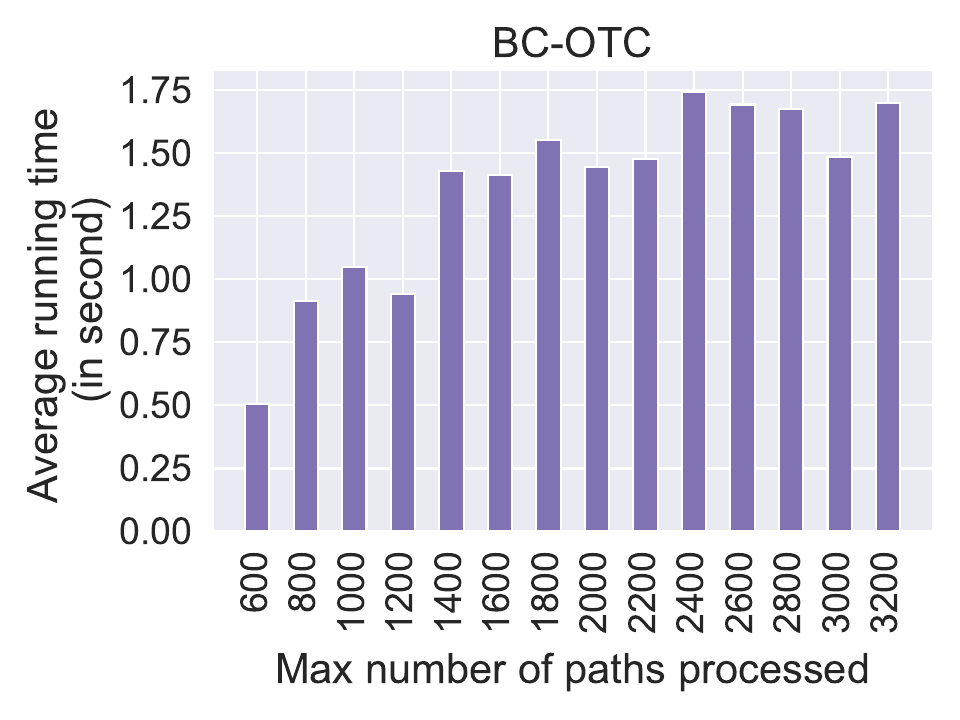}}
    \subfloat{\includegraphics[width=0.24\textwidth]{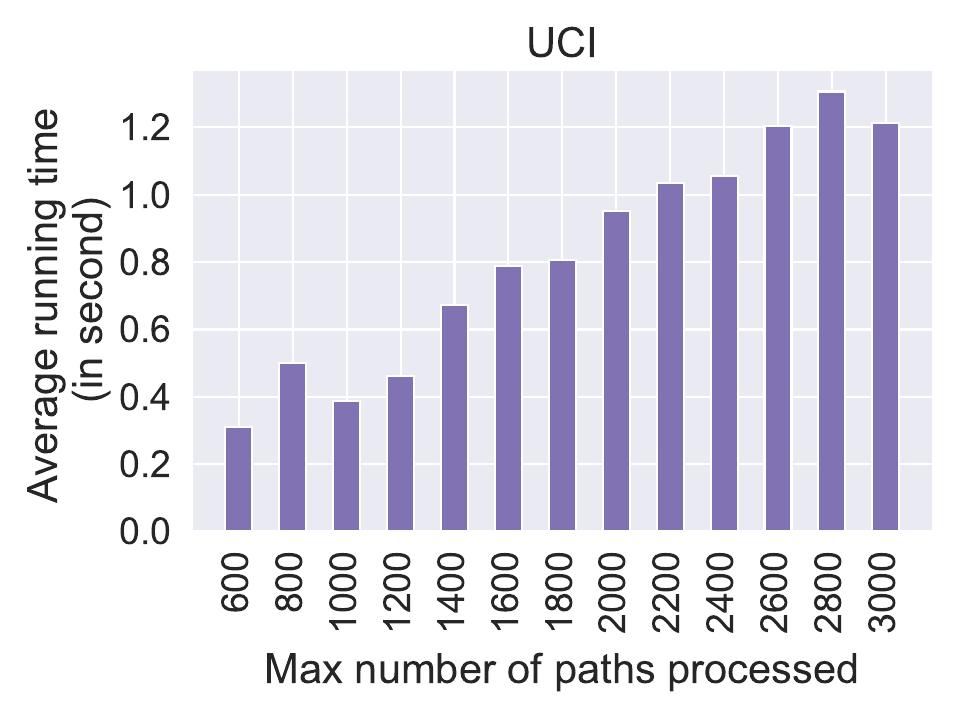}}
    \caption{\small The running time of GNN-LRP.}
    \label{fig:running_time_gnnlrp}
\end{figure}

\begin{figure}[htbp]
    \centering
    \subfloat{\includegraphics[width=0.24\textwidth]{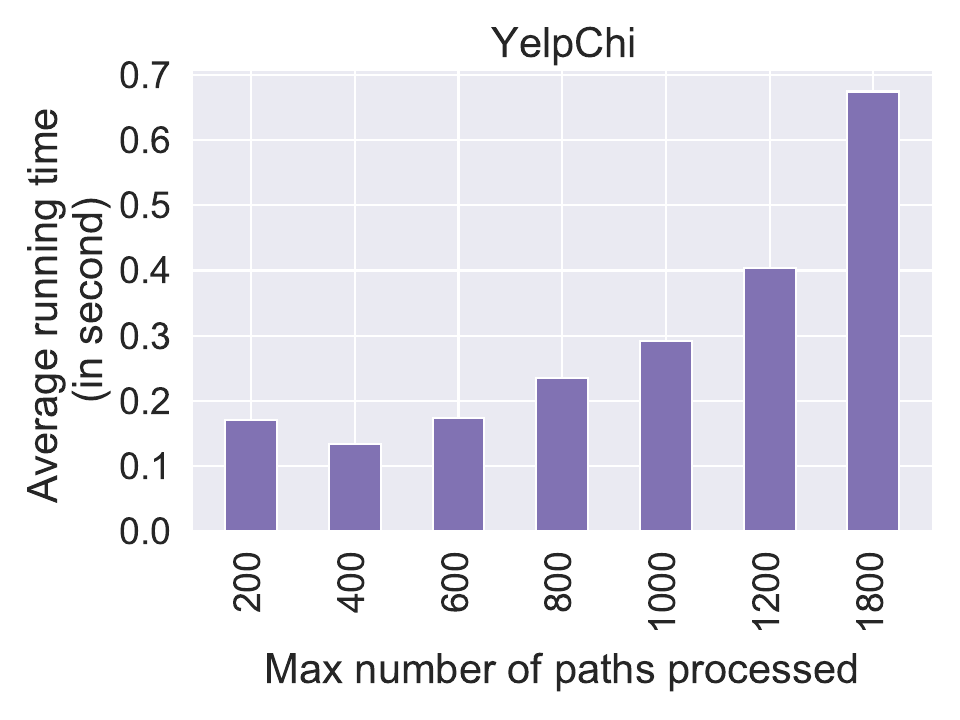}}
    \subfloat{\includegraphics[width=0.24\textwidth]{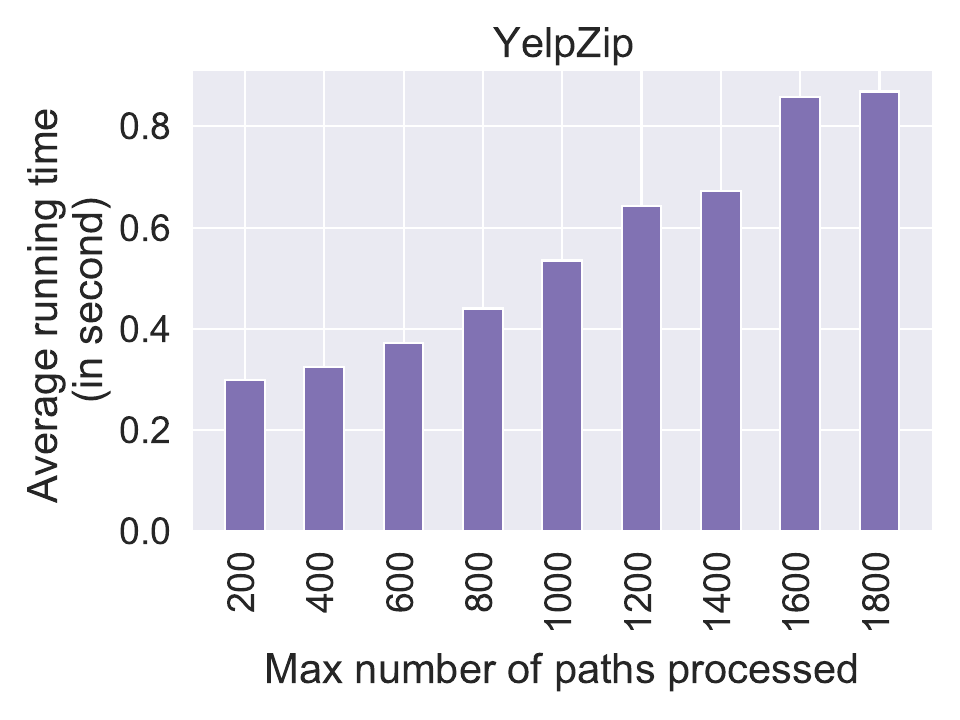}}
    \subfloat{\includegraphics[width=0.24\textwidth]{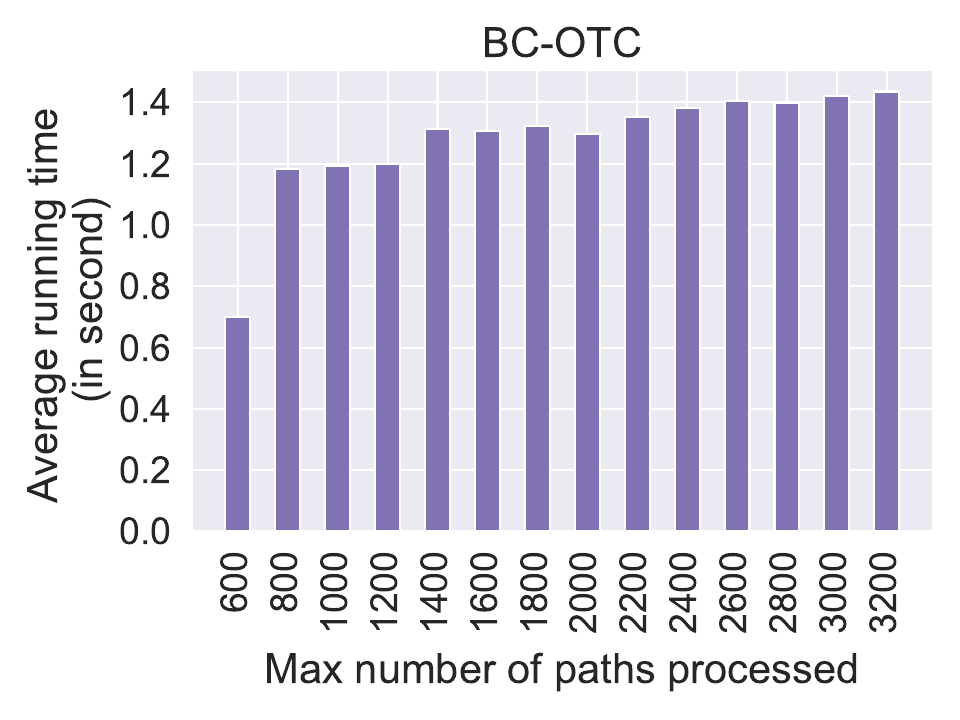}}
    \subfloat{\includegraphics[width=0.24\textwidth]{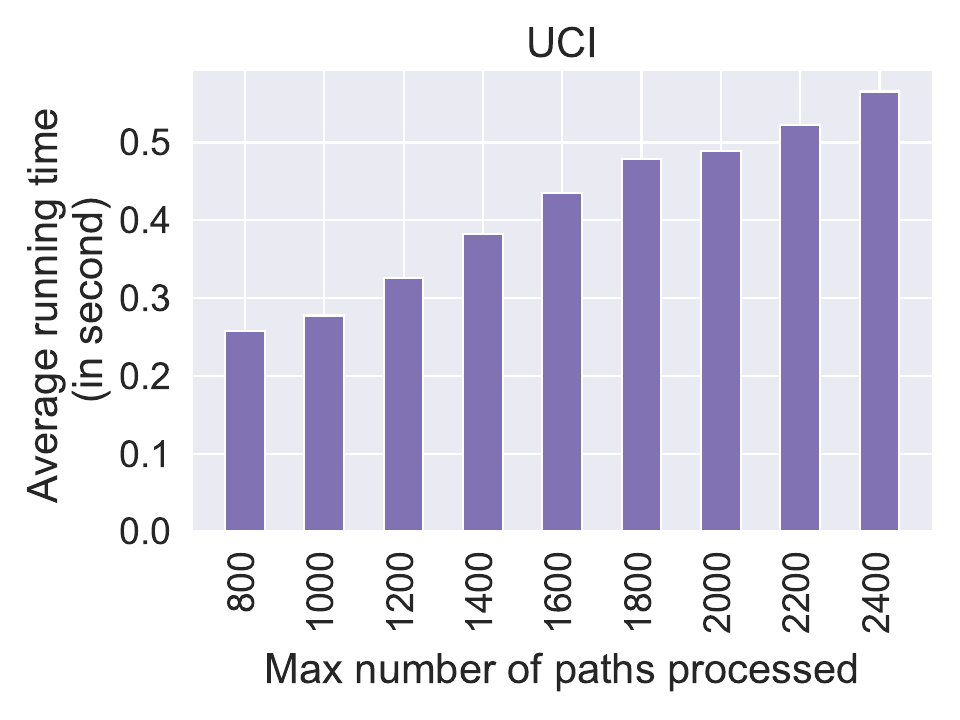}}
    \caption{\small The running time of Gradient.}
    \label{fig:running_time_grad}
\end{figure}

\begin{figure}[htbp]
    \centering
    \subfloat{\includegraphics[width=0.24\textwidth]{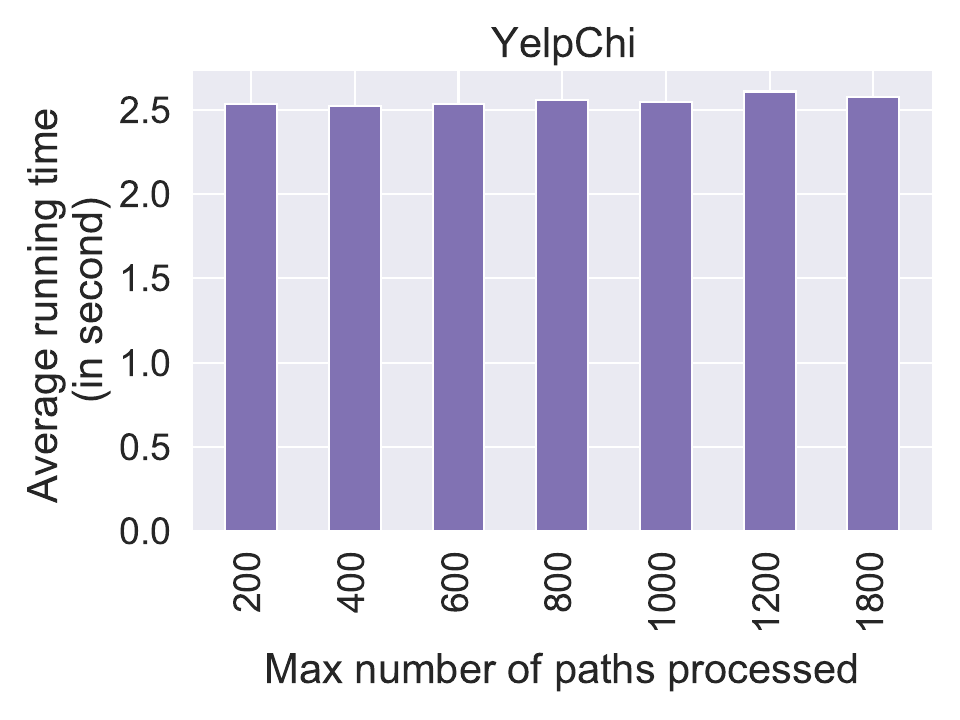}}
    \subfloat{\includegraphics[width=0.24\textwidth]{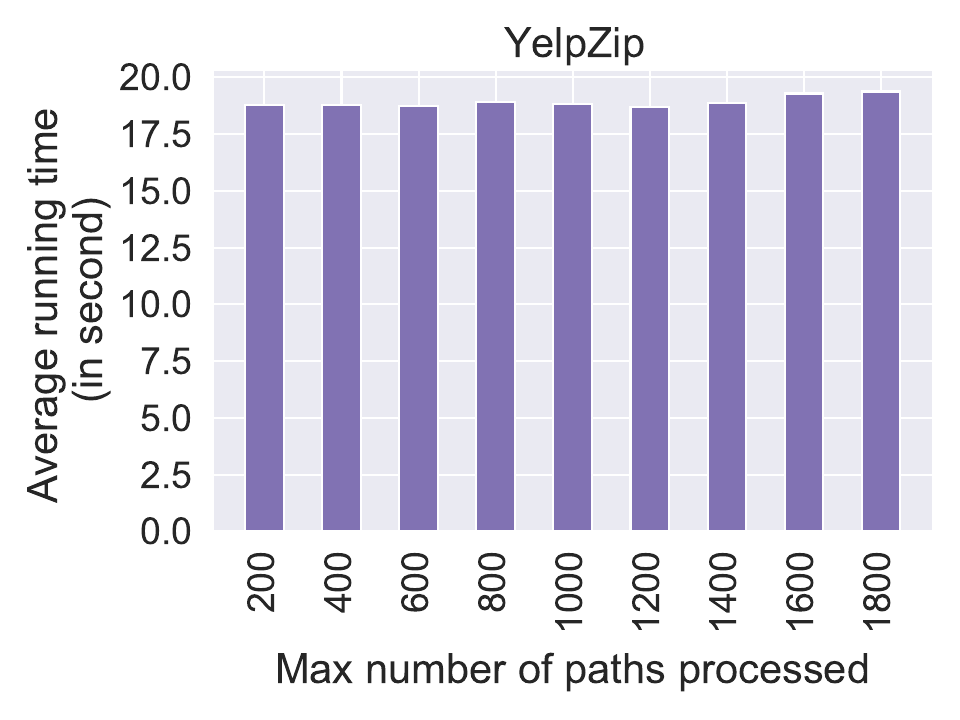}}
    \caption{\small The running time of GNNExplainer.}
    \label{fig:running_time_gnnexplainer}
\end{figure}

\subsection{Case study}
\label{sec:case_study}
% Even though our method and baselines can be qualitatively evaluated by the $\mathbf{Fidelity}_{\textnormal{KL}}^{-}$, 
It is necessary to show that AxiomPath-Convex selects salient paths to provide insight regarding the relationship between the altered paths and the changed predictions.
% We conduct two case studies on the Cora and Pheme datasets and show the results in Figures~\ref{fig:case_cora}
% and~\ref{fig:case_pheme}.

On Cora, we add and/or remove edges randomly, and for the target nodes that the predicted class changed, we calculate the percentages of nodes on the paths selected by AxiomPath-Convex that have the same ground truth labels as the predicted classes on $G_0$ (class 0) and $G_1$ (class 1), respectively.
We expect that there are more nodes of class 1 on the added paths, and more nodes of class 0 on the removed paths.
We conducted 10 random experiments and calculate the means and standard deviations of the ratios.
Figure~\ref{fig:case_cora}
shows that the percentages behave as we expected.
It further confirms that the fidelity metric aligns well with the dynamics of the class distribution that contributed to the prediction changes\footnote{AxiomPath-Convex has performance on Cora similar to those in Figure~\ref{fig:overall_node}.}.
% \textcolor{red}{May need a figure showing the fidelity on the Cora dataset.}

In Figure~\ref{fig:case_pheme}, on the MUTAG dataset, we demonstrate how the probability of the graph changes as some edges are added/removed. We add or remove edges, adding or destroying new rings in the molecule. The AxiomPath-Convex can identify the salient paths that justify the probability changes.  
% In sum, the GNN trained on node text embeddings can capture the semantic relationships between the nodes to update its predictions,
% and AxiomPath-Convex can identify the salient paths containing texts that justify the prediction changes.
\begin{figure}
% row 1: UCI 
  \subfloat{
    % adding edges only
    \includegraphics[width = 0.25\textwidth]{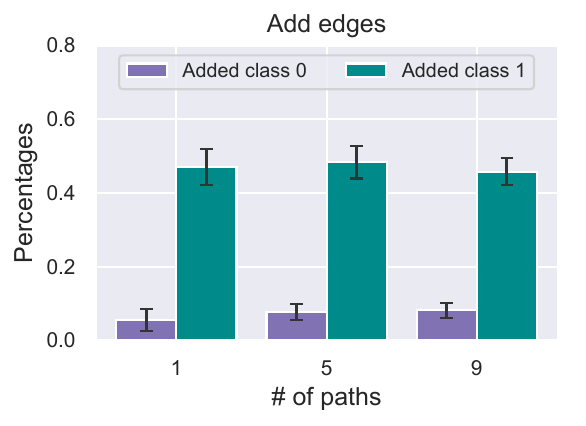}
  }
  \subfloat{
    % removing edges only
    \includegraphics[width = 0.25\textwidth]{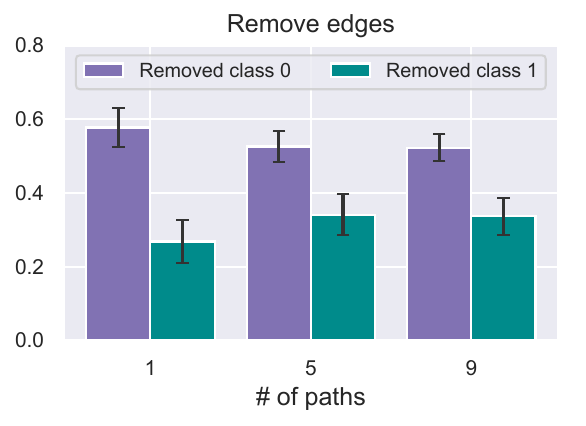}
  }
  \centering
  \subfloat{
    % mixture of adding and removing edges
    \includegraphics[width = 0.28\textwidth]{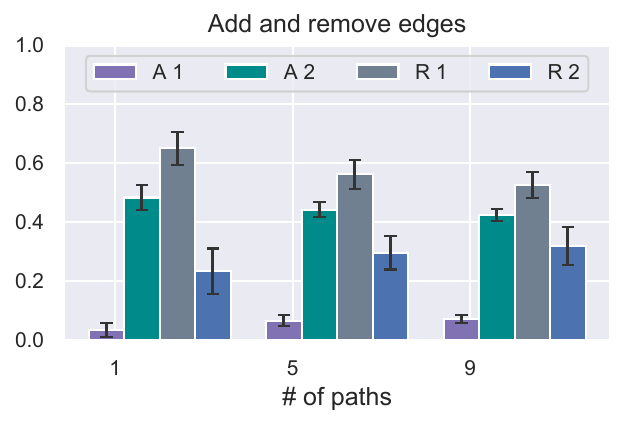}
  }
\caption{\small
Case study on the Cora dataset.
Adding edges only, removing edges only and both adding and
removing edges.
As AxiomPath-Convex selects different number of salient paths,
we show 
the percentages of nodes on the selected paths from any previously predicted class on $G_0$ (class 0) and in any newly predicted class on $G_1$ (class 1).
}
\label{fig:case_cora}
\end{figure}

\begin{figure}
\centering
% row 1: UCI 
  \subfloat{
    % adding edges only
    \includegraphics[width = 0.3\textwidth]{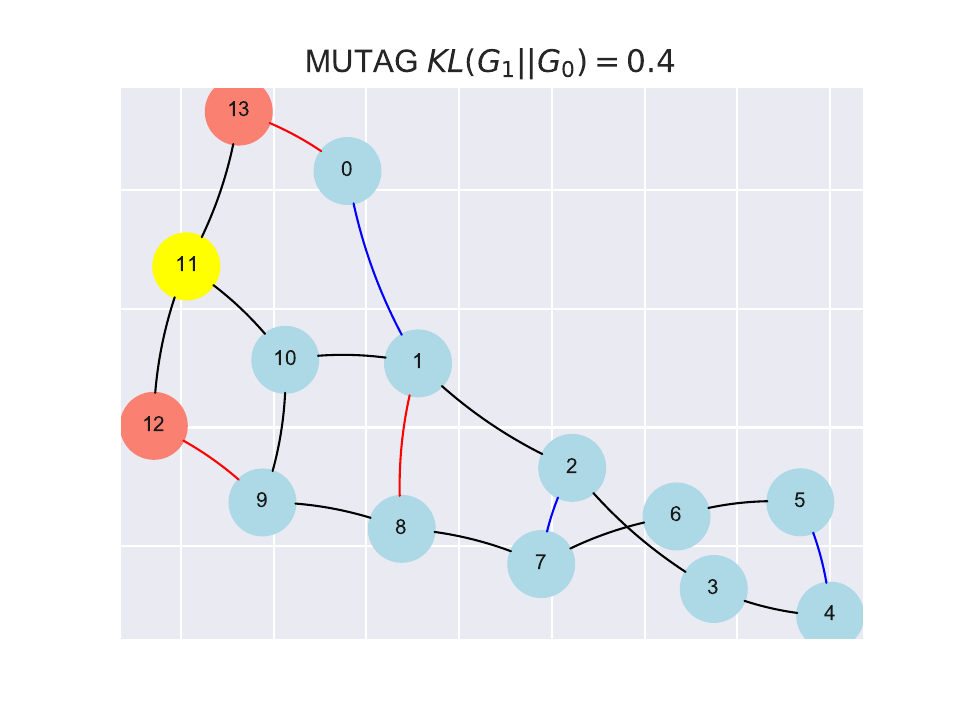}
  }
  \subfloat{
    \includegraphics[width = 0.3\textwidth]{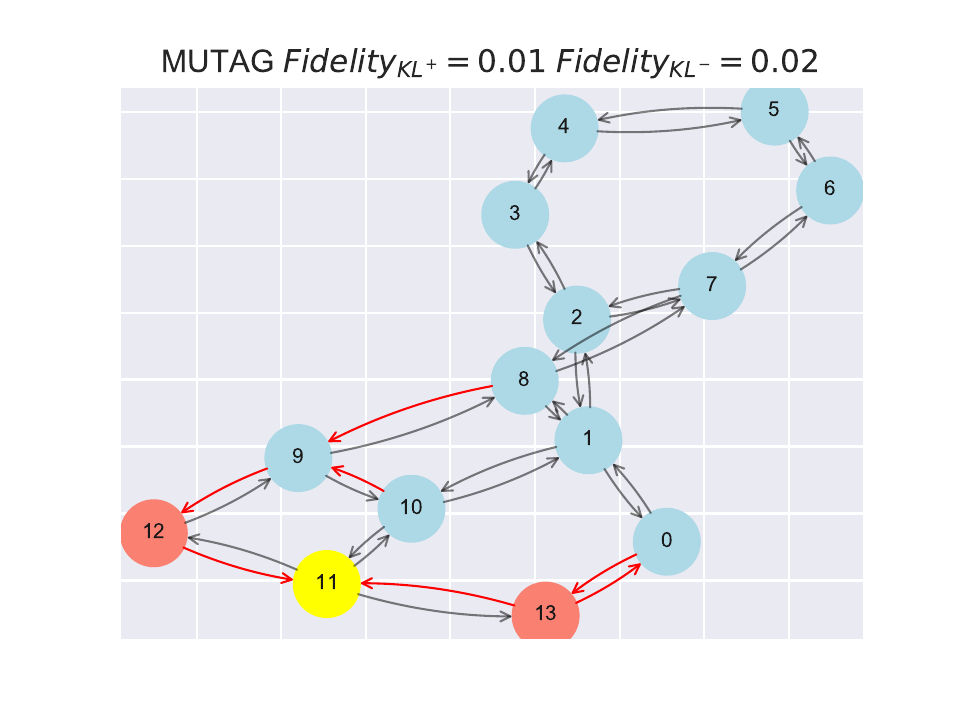}
  }
%   \subfloat{
%     \includegraphics[width = 0.25\textwidth]{figs/}
%   }
\caption{\small Case study on the MUTAG dataset. The circles denotes the nodes and the different types of nodes have different colors. Left: The black edges deonte the edges in the graph $G_0$ and $G_1$, the red/blue edges denote the added/removed edges. Right: The red edges on the paths selected by AxiomPath-Convex that lead to prediction changes. When adding or removing edges, the information gathered by neighbor nodes changed, thus affecting the classification probability. When masking or adding these red paths, $\textnormal{KL}^{+}$ and $\textnormal{KL}^{-}$
approach zero.
% Case study on the Pheme dataset.
% % Each column is example in the pheme dataset. Each node represents the text.
% Green target nodes represent tweets about some news, gray edges represent existing retweets in $G_0$,
% and green and red edges are newly added retweets.
% The red nodes are on the paths selected by AxiomPath-Convex that lead to prediction changes.  
% % The texts at the bottom represent the text information of the nodes in the graph.
% Left: prediction changes from \textit{non-rumor} to \textit{rumor},
% while the red node indicates that ``the images used in the green node is a stock image and previously published''.
% Right: prediction changes from \textit{rumor} to \textit{non-rumor} while the red node confirmed that ``the image associated with the tweet includes a real victim in an accident''.
}
\label{fig:case_pheme}
\end{figure}
\subsection{Further experimental results}
\label{sup:limit}
We analyzed how the method performs on the spectrum of varying $\text{KL}(\pr_J(G_1)||\pr_J G(0))$ for the YelpChi, YelpZip, UCI, BC-OTC and MUTAG datasets when edges are added and removed (See Figure ~\ref{fig:range}). For some nodes with the lower $\text{KL}(\pr_J(G_1)||\pr_J G(0))$, the $\text{KL}^{+}$ or $\text{KL}^{-}$ is higher. Through the further analysis, we find that it may because of the $\pr_J(G_1)$ or $\pr_J(G_0)$ has all probability mass concentrated at one class. (See Figure ~\ref{fig:limit}). For the some target nodes/edges/graphs with the classification probability in $G_1$ or $G_0$ close to 1, the $\text{KL}^{+}$ or $\text{KL}^{-}$ is high.
That means the selected paths may not explain the change of probability distribution well. When the classification probability is close to 1 in the $G_0$($G_1$), it is more difficult to select a few paths to make the probability distribution close $G_1$($G_0$), so the $\text{KL}^{+}$ or $\text{KL}^{-}$ is high. 

\begin{figure*}
  % row 1:
  % adding edges only
  \centering
  \subfloat{
    \includegraphics[width = 0.23\textwidth]{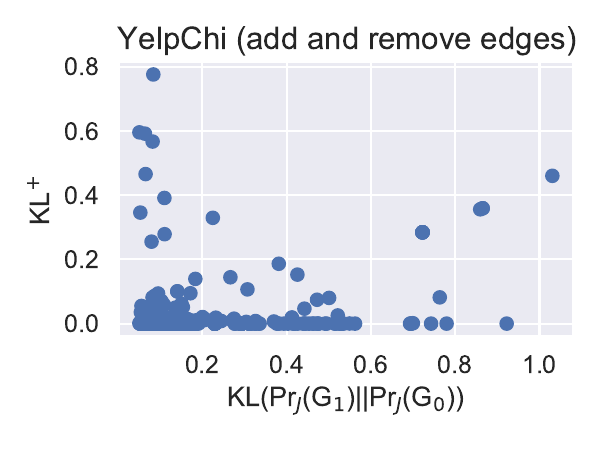}
  }
  \subfloat{
    % adding edges only
    \includegraphics[width = 0.23\textwidth]{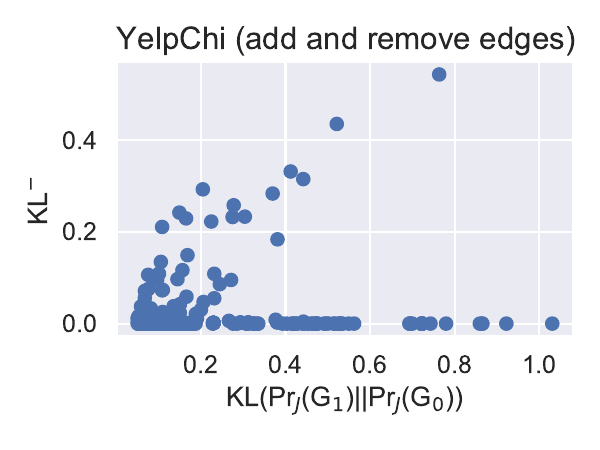}
  }
  \subfloat{
    % adding edges only
    \includegraphics[width = 0.23\textwidth]{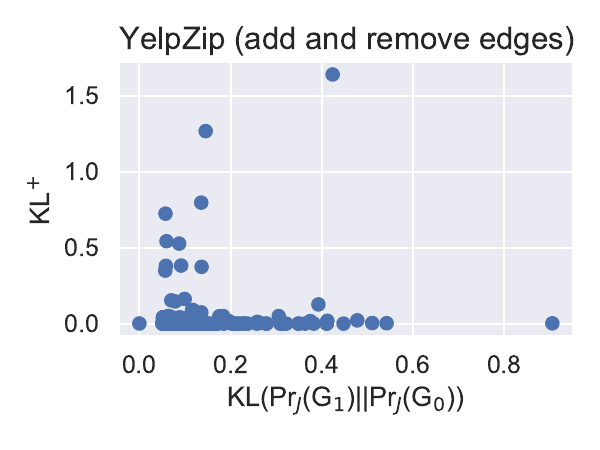}
  }
   \subfloat{
    % adding edges only
    \includegraphics[width = 0.23\textwidth]{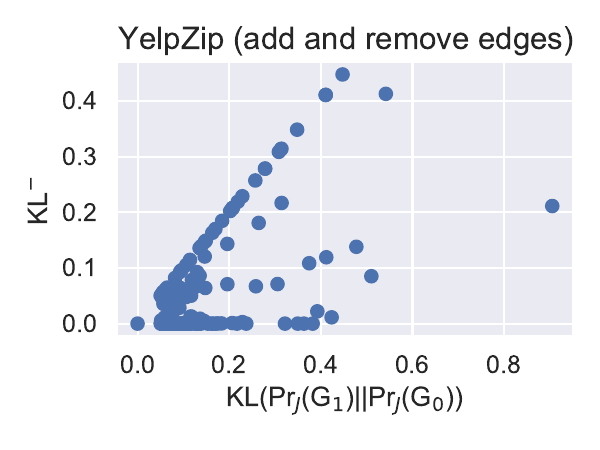}
  }
  \\
%   \subfloat{
%     % adding edges only
%     \includegraphics[width = 0.24\textwidth]{iclr2023/figs/NYC_add_1_fidelitykl+.pdf}
%   }\\
  % row 2:
  % removing edges only
   \subfloat{
    \includegraphics[width = 0.23\textwidth]{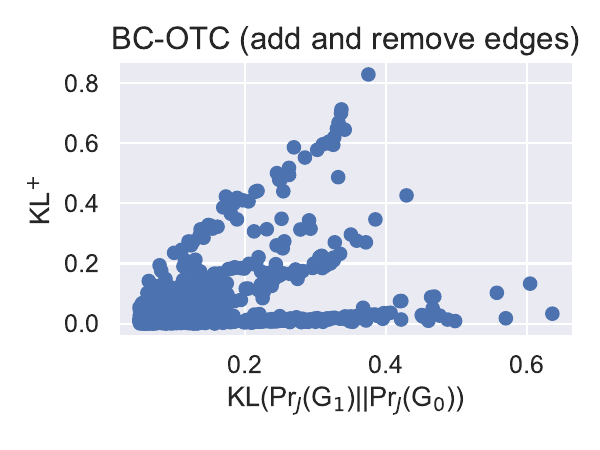}
  }
  \subfloat{
    % adding edges only
    \includegraphics[width = 0.23\textwidth]{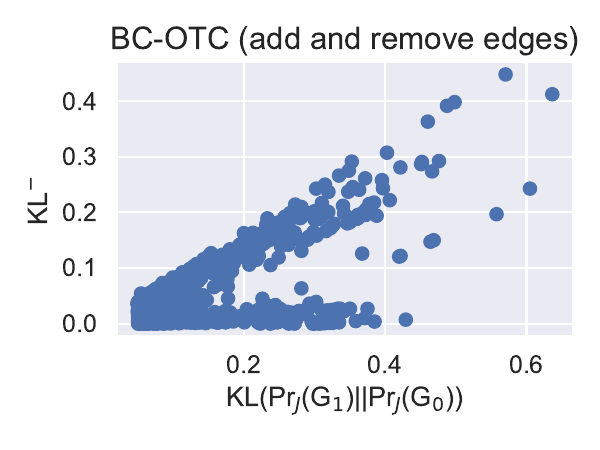}
  }
  \subfloat{
    % adding edges only
    \includegraphics[width = 0.23\textwidth]{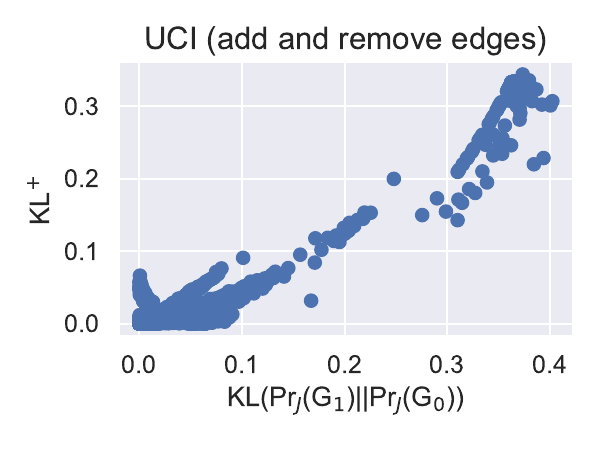}
  }
   \subfloat{
    % adding edges only
    \includegraphics[width = 0.23\textwidth]{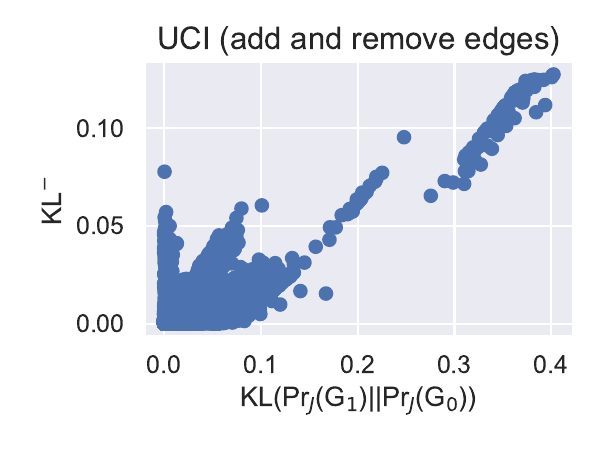}
  }\\
%   \subfloat{
%     % removing edges only
%     \includegraphics[width = 0.24\textwidth]{iclr2023/figs/NYC_remove_1_fidelitykl+.pdf}
%   }\\
  % row 3: both adding and removing edges
  \subfloat{
    % mixture of adding and removing edges
    \includegraphics[width = 0.23\textwidth]{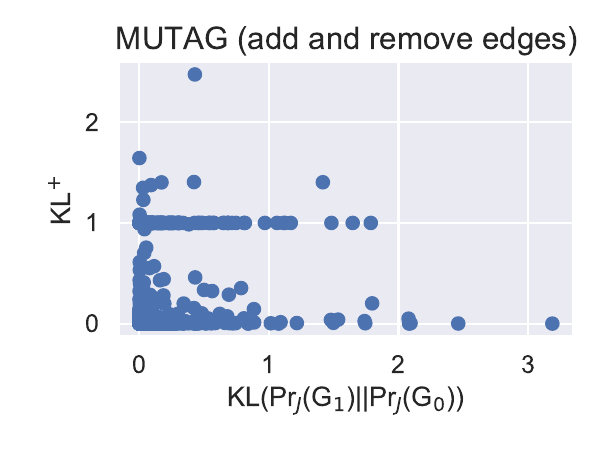}
  }
  \subfloat{
    % mixture of adding and removing edges
    \includegraphics[width = 0.23\textwidth]{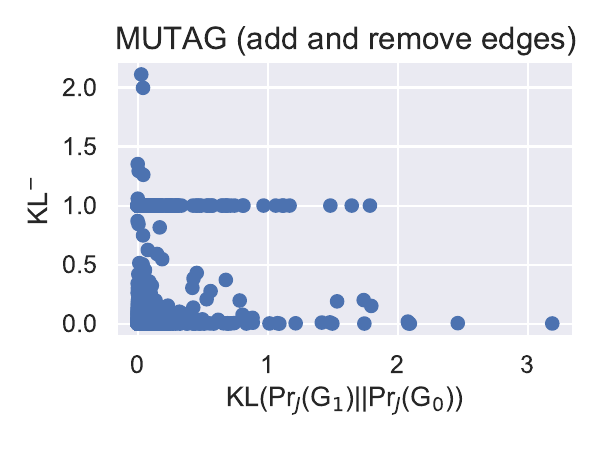}
  }
  
\caption{\small The $\text{KL}^{-}$ and $\text{KL}^{+}$ performance on the $\text{KL}(\pr_J(G_1)||\pr_J G(0))$.}
\label{fig:range}
\end{figure*}
\begin{figure*}
  % row 1:
  % adding edges only
  \centering
  \subfloat{
    \includegraphics[width = 0.23\textwidth]{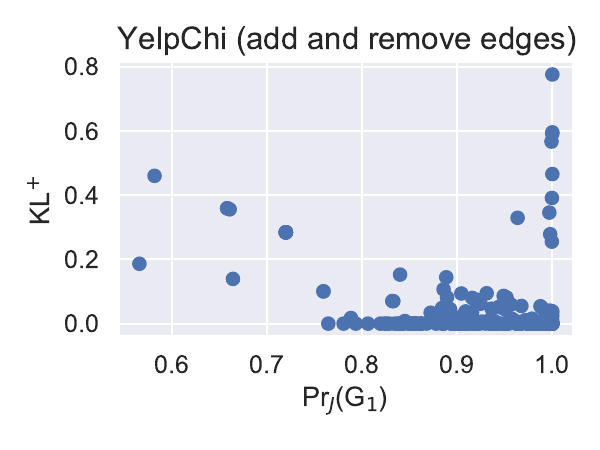}
  }
  \subfloat{
    % adding edges only
    \includegraphics[width = 0.23\textwidth]{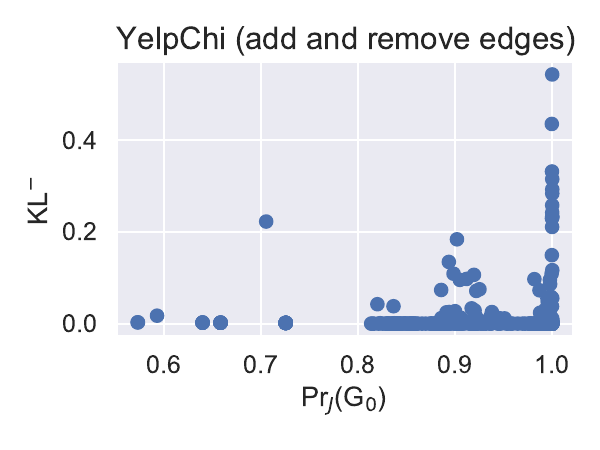}
  }
  \subfloat{
    % adding edges only
    \includegraphics[width = 0.23\textwidth]{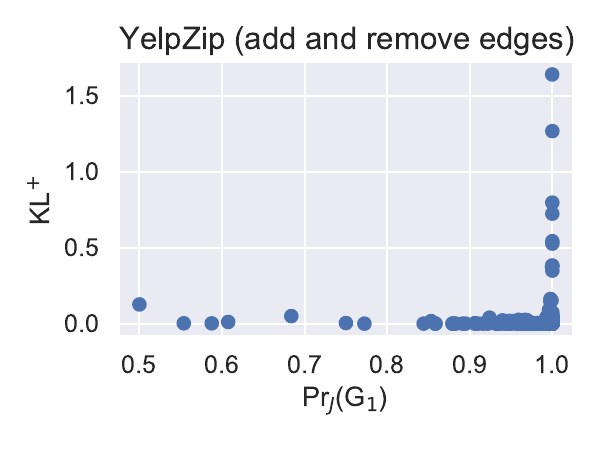}
  }
   \subfloat{
    % adding edges only
    \includegraphics[width = 0.23\textwidth]{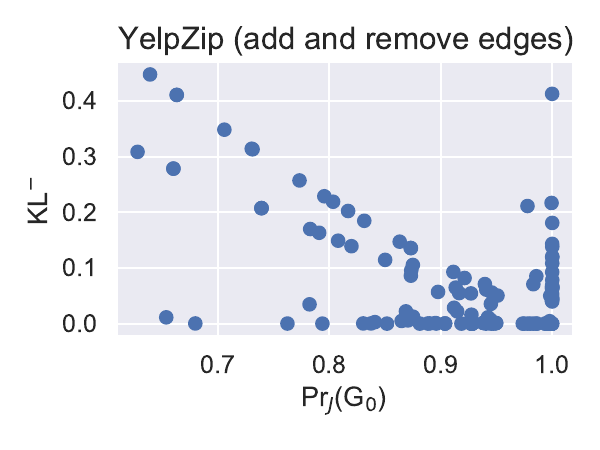}
  }
  \\
%   \subfloat{
%     % adding edges only
%     \includegraphics[width = 0.24\textwidth]{iclr2023/figs/NYC_add_1_fidelitykl+.pdf}
%   }\\
  % row 2:
  % removing edges only
   \subfloat{
    \includegraphics[width = 0.23\textwidth]{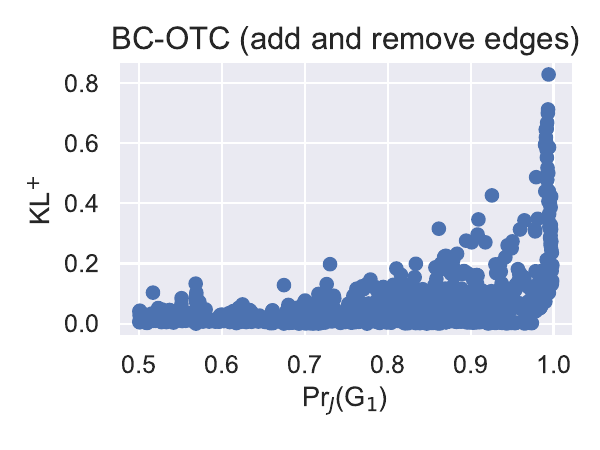}
  }
  \subfloat{
    % adding edges only
    \includegraphics[width = 0.23\textwidth]{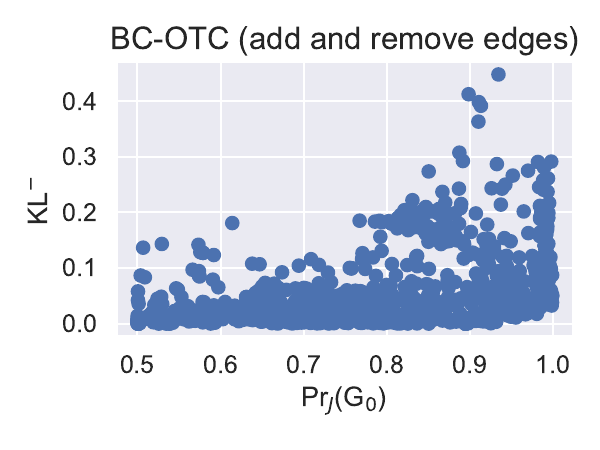}
  }
  \subfloat{
    % adding edges only
    \includegraphics[width = 0.23\textwidth]{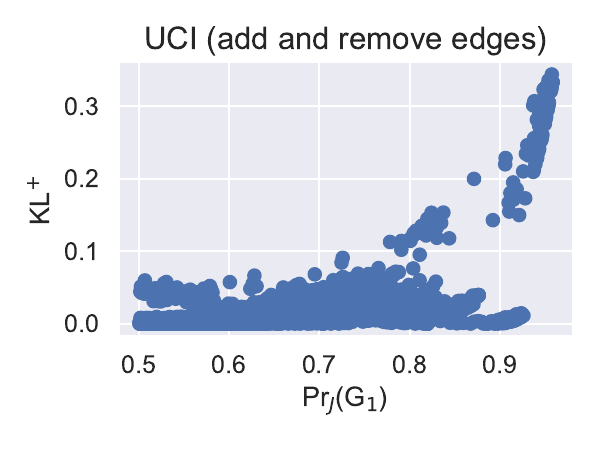}
  }
   \subfloat{
    % adding edges only
    \includegraphics[width = 0.23\textwidth]{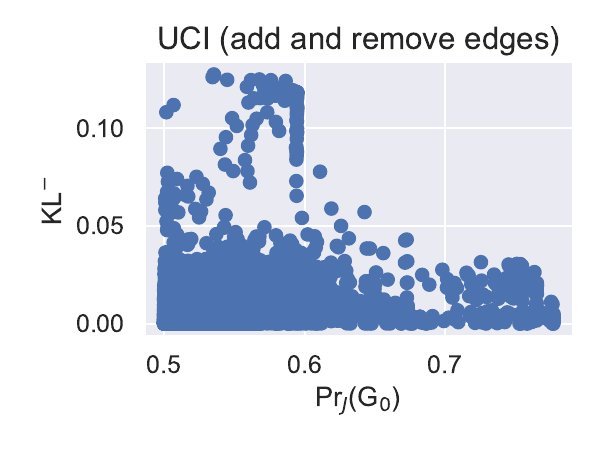}
  }\\
%   \subfloat{
%     % removing edges only
%     \includegraphics[width = 0.24\textwidth]{iclr2023/figs/NYC_remove_1_fidelitykl+.pdf}
%   }\\
  % row 3: both adding and removing edges
  \subfloat{
    % mixture of adding and removing edges
    \includegraphics[width = 0.23\textwidth]{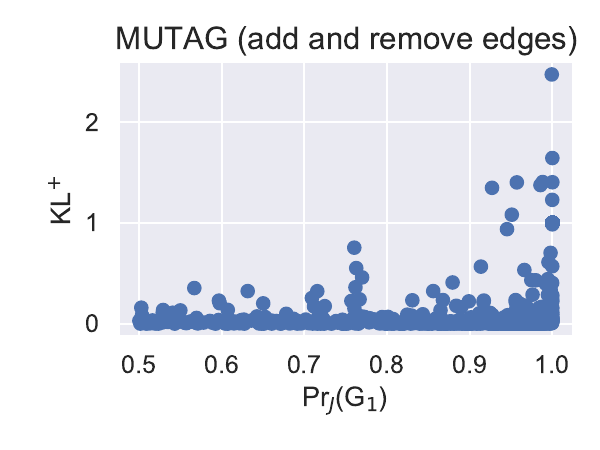}
  }
  \subfloat{
    % mixture of adding and removing edges
    \includegraphics[width = 0.23\textwidth]{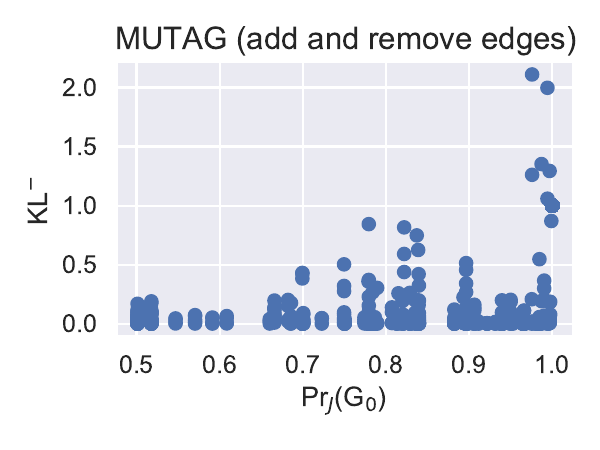}
  }
  
\caption{\small The $\text{KL}^{-}$ and $\text{KL}^{+}$ performance on the $\pr_J(G_1)$ or $\pr_J(G_0)$.}
\label{fig:limit}
\end{figure*}
\end{document}